\RequirePackage{fix-cm}
\documentclass[smallextended,natbib]{svjour3}
\smartqed
\usepackage{amsmath,amssymb}
\usepackage{graphicx,booktabs,xcolor,array,tabularx}
\usepackage[expansion=false]{microtype}
\usepackage{enumitem}
\usepackage[hidelinks]{hyperref}
\setlist{leftmargin=1.4em,itemsep=2pt,topsep=3pt}
\newenvironment{proof}[1][\proofname]{\par\addvspace{\medskipamount}\noindent{\itshape #1.}\enspace\ignorespaces}{\qed\par\addvspace{\medskipamount}}
\makeatletter\def\makeheadbox{}\makeatother

\newcommand{\Vp}{V_p^{\max}}
\newcommand{\Ve}{V_e^{\max}}
\newcommand{\Om}{\Omega_p^{\max}}
\newcommand{\kmax}{\kappa_p^{\max}}
\newcommand{\x}{\mathbf{x}}

\newcommand{\UP}{\ensuremath{\mathrm{UP}}}
\newcommand{\NUP}{\ensuremath{\mathrm{NUP}}}
\newcommand{\BUP}{\ensuremath{\mathrm{BUP}}}
\newcommand{\Hset}{\mathcal{H}}
\newcommand{\Eset}{\mathcal{E}}
\newcommand{\De}{\mathcal{D}_e}

\newcommand{\Bbar}{\mathcal{B}}
\newcommand{\sgn}{\operatorname{sgn}}

\begin{document}

\title{Minimum-Time Escape from the Conic Field of View of a Reeds--Shepp Pursuer: Classification and Game Value for a Slower or Equally Fast Evader}
\titlerunning{Escape from the conic field of view of a Reeds--Shepp pursuer}
\author{Ubaldo Ruiz}
\authorrunning{U. Ruiz}
\institute{U. Ruiz \at
  Centro de Investigaci\'on Cient\'ifica y de Educaci\'on Superior de Ensenada (CICESE), Ensenada, Baja California, Mexico\\
  \email{uruiz@cicese.edu.mx}}
\date{Preprint, October 2026}

\maketitle

\begin{abstract}
We study minimum-time escape of an omnidirectional evader from the forward conic field of view of a Reeds--Shepp pursuer with minimum turning radius $r$. For speed ratio $\nu$ and cone half-angle $0<\alpha<\pi/2$, we prove a global escape/holding classification: straight reversal holds the whole cone for $\nu\le\sin\alpha$, escape is confined to a barrier-bounded dart for $\sin\alpha<\nu\le1$, and escape is possible from every state for $\nu>1$. For the slower or equally fast evader, $\sin\alpha<\nu\le1$, we construct the solution with Isaacs' characteristics and verify it in feedback form, using a Filippov pursuer relay and a one-sided evader branch. The finite escape time is the first root of one scalar equation, and the holding region is bounded by the nonusable parts of the rays and a barrier that closes with a convex corner on the front axis. Sample-and-hold counter-controls show that this escape time is the value of the game in nonanticipative strategies at every state except the corner. There the two strategy values differ: the pursuer's is infinite, whereas an evader that chooses which barrier branch to cross from the initial germ of the pursuer's control escapes before the barrier time $\tau_I$, which is the evader's exact strategy value. An independent level-set solver, refined on fixed comparison points, agrees with the closed-form escape time and reproduces the holding region at the tested parameters.
\keywords{Pursuit--evasion games \and Surveillance--evasion \and Differential games \and Field of view \and Reeds--Shepp vehicle \and Nonanticipative strategies}
\subclass{49N75 \and 49N70 \and 91A23 \and 49L25}
\end{abstract}

\section{Introduction}\label{sec:intro}
A directional sensor attached to a moving vehicle observes only part of the plane. Escaping surveillance therefore requires leaving its field of view, rather than outrunning the observer. We study minimum-time escape from an unbounded forward cone of half-angle $0<\alpha<\pi/2$, with an omnidirectional evader and a Reeds--Shepp pursuer that can drive forward or in reverse but cannot rotate in place \citep{reeds1990}. The evader minimizes the exit time and the pursuer maximizes it.

The closest forward-only model is the conic surveillance--evasion game of \citet{lewin1979conic}. Reverse motion changes its geometry: backing away reduces the evader's bearing without turning and can preserve observation indefinitely.

Our analysis uses Isaacs' characteristics and singular surfaces \citep{isaacs1965,lewin1994}. Related surveillance games include circular detection regions \citep{dobbie1966,lewin1975}, an isotropic rocket \citep{lewin1989rocket}, two identical cars \citep{greenfeld1987,ruiz2024dubins}, an agile pursuer \citep{braun2025}, and a fixed-course target \citep{weintraub2023}. Differential-drive capture and surveillance games are developed in several works \citep{ruiz2013,ruiz2016,ruiz2022,saavedra2024}, including escape from a conic field of view \citep{ruiz2023fov}. Their velocity set is a diamond, allowing rotation in place; the Reeds--Shepp set is a bow tie, allowing full speed and full turn rate simultaneously. The Hamiltonians and characteristics therefore differ.

The contributions are the game-of-kind classification for all speed ratios $\nu=\Ve/\Vp$ and the solution for $\nu\le1$. The solution is constructed from the usable boundary and the primary characteristics, verified in feedback form with the pursuer's relay understood in the sense of Filippov, and identified with the strategy values by sample-and-hold counter-controls and an explicit evader strategy at the corner. An independent level-set solver, refined on fixed comparison points, agrees with the closed-form escape time. The main results are the following.
\begin{enumerate}[label=(\roman*)]
\item For $\nu\le\sin\alpha$, straight reversal holds the whole cone (Proposition~\ref{prop:kind}).
\item For $\sin\alpha<\nu\le1$, the escape region is a dart bounded by the usable parts of the rays and a closed barrier with a corner $c$ on the front axis, and the escape time $W$ is the first root of one scalar equation (Proposition~\ref{prop:Wform} and Theorem~\ref{thm:slow}).
\item $W$ is the value of the game in nonanticipative strategies at every state except $c$; at $c$, $V_P(c)=+\infty$ while $V_E(c)=\tau_I$ (Theorem~\ref{thm:value}).
\item For $\nu>1$, escape is possible from every state (Proposition~\ref{prop:kind}).
\end{enumerate}
The holding region $\Hset$ is bounded by the barrier and the nonusable parts of the rays; it is defined geometrically and then shown to be held by the pursuer. At its corner the pursuer's holding capability depends on the information pattern: it holds when the pursuer's strategy reacts to the evader, but not when the evader's strategy reacts to the pursuer.

For $\nu>1$ the minimum escape time has a different structure: the characteristic analysis indicates a universal line on the front axis, along which the evader chases the reversing pursuer, and a barrier that ends at a cusp. Its verification requires other arguments and is the subject of ongoing work; here the case $\nu>1$ only completes the classification.

Sections~\ref{sec:problem}--\ref{sec:kind} formulate the game, derive its characteristics and solve the game of kind. Section~\ref{sec:slow} constructs and verifies the synthesis for the slower or equally fast evader, Section~\ref{sec:num} compares it with an independent numerical solution, and Section~\ref{sec:lewin} compares the geometry with the forward-only game. Appendix~\ref{app:relay} proves Lemma~\ref{lem:relay}, and Appendix~\ref{app:value} proves Theorem~\ref{thm:value}.

\section{Problem formulation}\label{sec:problem}
We state the game in a frame attached to the pursuer and reduce it to the two parameters $\nu$ and $\alpha$.

The pursuer configuration is $\mathbf{q}_p=(X_p,Y_p,\theta_p)^\top$, with position $(X_p,Y_p)$ and heading $\theta_p$ in a global frame $(X,Y)$, and kinematics
\begin{equation}\label{eq:pursuer}
\dot X_p=v_p\cos\theta_p,\quad \dot Y_p=v_p\sin\theta_p,\quad \dot\theta_p=\omega_p=v_p\kappa_p,
\end{equation}
$|v_p|\le\Vp$, $|\kappa_p|\le\kmax$, so $|\omega_p|\le\Om=\Vp\kmax$ and $v_p=0$ implies $\omega_p=0$: the admissible pairs form the bow tie $\{|v_p|\le\Vp,\ |\omega_p|\le\kmax|v_p|\}$. The evader is omnidirectional, $\dot{\mathbf p}_e=\mathbf u_e$, $\|\mathbf u_e\|\le\Ve$, and $\nu=\Ve/\Vp\in(0,\infty)$; the classification covers every $\nu$, and the synthesis treats a slower ($\nu<1$) and an equally fast ($\nu=1$) evader together. We write $r=\Vp/\Om=1/\kmax$ for the minimum turning radius. Here $\mathbf p_e=(X_e,Y_e)^\top$ is the evader position and $\mathbf u_e=(u_X,u_Y)^\top$ its velocity, both in the global frame.

\paragraph{Reduced frame} Only the position of the evader relative to the pursuer and to its heading matters, so we use a frame attached to the pursuer, with origin at $(X_p,Y_p)$, $y$-axis along the heading $(\cos\theta_p,\sin\theta_p)^\top$ and $x$-axis along $(\sin\theta_p,-\cos\theta_p)^\top$, which points to the pursuer's right (Fig.~\ref{fig:geom}). The reduced state $\x=(x,y)^\top$ is the evader position in this frame,
\begin{equation}\label{eq:transf}
\begin{aligned}
x&=(X_e-X_p)\sin\theta_p-(Y_e-Y_p)\cos\theta_p,\\
y&=(X_e-X_p)\cos\theta_p+(Y_e-Y_p)\sin\theta_p,
\end{aligned}
\end{equation}
and the evader velocity expressed in the same frame is
\begin{equation}\label{eq:uframe}
\begin{aligned}u_x&=u_X\sin\theta_p-u_Y\cos\theta_p,\\ u_y&=u_X\cos\theta_p+u_Y\sin\theta_p,\end{aligned}
\end{equation}
so that $\|\mathbf u\|=\|\mathbf u_e\|\le\Ve$, with $\mathbf u=(u_x,u_y)^\top$. Differentiating~\eqref{eq:transf} along~\eqref{eq:pursuer} and $\dot{\mathbf p}_e=\mathbf u_e$, the terms in $\omega_p$ give $\omega_py$ and $-\omega_px$, the terms in $v_p$ cancel in $\dot x$ and give $-v_p$ in $\dot y$, so
\begin{equation}\label{eq:reduced}
\dot x=u_x+\omega_p y,\qquad \dot y=u_y-v_p-\omega_p x .
\end{equation}
The pursuer's position and heading do not appear in~\eqref{eq:reduced}, so the game is played in the reduced plane.

\begin{figure}[!htbp]
\centering
\includegraphics[width=\textwidth]{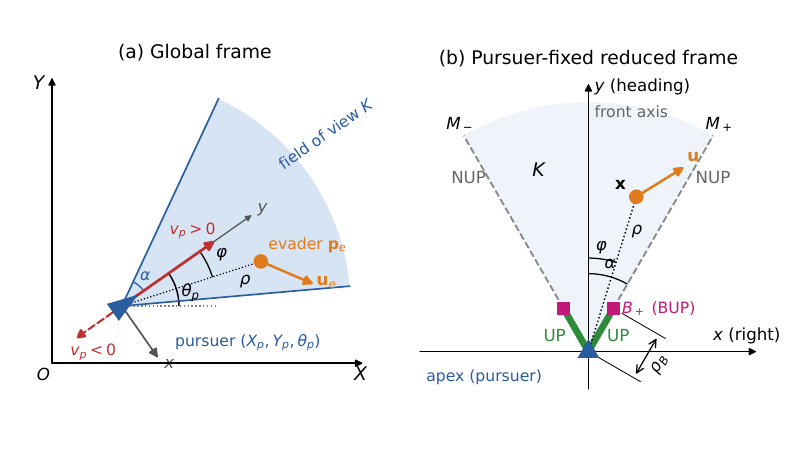}
\caption{Conic field of view and coordinates. (a) Global frame: pursuer heading $\theta_p$, evader position $\mathbf p_e$, range $\rho$ and clockwise bearing $\varphi$. The cone is drawn truncated but has unbounded range. (b) Pursuer-fixed frame: $x$ points right and $y$ along the heading. The usable segments UP (green), of length $\rho_B=r(\nu-\sin\alpha)$, end at $B_\pm$ (magenta); the remaining rays are NUP (gray dashed). Both panels use $\alpha=30^\circ$; (b) uses $\nu=0.9$.}

\label{fig:geom}
\end{figure}

\paragraph{Polar form} Let $\rho=\|\x\|$ and $\varphi=\operatorname{atan2}(x,y)\in(-\pi,\pi]$ be the bearing, measured clockwise from the heading. With $\mathbf e_r=(\sin\varphi,\cos\varphi)^\top$, $\mathbf e_\varphi=(\cos\varphi,-\sin\varphi)^\top$, $u_r=\mathbf u^\top\mathbf e_r$ and $u_\varphi=\mathbf u^\top\mathbf e_\varphi$,
\begin{equation}\label{eq:polar}
\dot\rho=u_r-v_p\cos\varphi,\qquad \rho\dot\varphi=u_\varphi+\omega_p\rho+v_p\sin\varphi .
\end{equation}
Turning moves the evader angularly at a rate proportional to its distance. Reversing ($v_p<0$) pushes the evader away and decreases $|\varphi|$; driving forward does the opposite.

\paragraph{Surveillance cone, escape condition and payoff} The field of view is the closed cone $K=\{\x:|\varphi|\le\alpha\}\cup\{\mathbf 0\}$ with $0<\alpha<\pi/2$, so $K$ is convex. The game terminates at the first strict exit,
\begin{equation}\label{eq:tf}
t_f=\inf\{t\ge0:\x(t)\notin K\},
\end{equation}
with $t_f=+\infty$ if this never happens, and the payoff is $J=t_f$. Contact with a ray does not end the game. Optimal play is described by the values of the game in nonanticipative strategies \citep{bardi1997,elliott1972}: a strategy of one player maps the opponent's control function to its own. Controls are measurable and identified when they agree almost everywhere. A map $A$ from pursuer controls to evader controls is nonanticipative if, for every $s\ge0$, $\mathbf a=\mathbf b$ a.e.\ on $[0,s]$ implies $A[\mathbf a]=A[\mathbf b]$ a.e.\ on $[0,s]$; pursuer strategies are defined in the same way. No observation delay is imposed, so a player may respond to the opponent's current control. With $\mathbf u(\cdot)$ the evader's control, $\mathbf a(\cdot)=(v_p(\cdot),\omega_p(\cdot))$ the pursuer's control, and $\mathcal S_E$, $\mathcal S_P$ the nonanticipative strategies of the evader and of the pursuer, the two strategy values are
\begin{align*}
V_E(\x)&=\inf_{A\in\mathcal S_E}\sup_{\mathbf a}J\bigl(A[\mathbf a],\mathbf a\bigr),\\
V_P(\x)&=\sup_{B\in\mathcal S_P}\inf_{\mathbf u}J\bigl(\mathbf u,B[\mathbf u]\bigr),
\end{align*}
in which the evader or the pursuer, respectively, reacts to the opponent's control. Each expression pairs one strategy with an open-loop control of the opponent; no consistency of two strategies played against each other is assumed. We denote by $W(\x)$ the escape time given by the synthesis constructed below. For $\sin\alpha<\nu\le1$, Theorem~\ref{thm:slow} verifies it in feedback form, as two unilateral guarantees: a pursuer relay guarantees at least $W$ against every measurable evader control, and an evader feedback guarantees at most $W$ against every measurable pursuer control. Both define nonanticipative strategies, so $V_E\le W\le V_P$ (Corollary~\ref{cor:sandwich}), and Theorem~\ref{thm:value} shows that $V_E=V_P=W$ at every state except one corner point. We call $W$ the \emph{value} where Theorem~\ref{thm:value} identifies both strategy values. Informally, the \emph{holding region} $\Hset$ is the set of states from which the pursuer can keep the evader in $K$ forever, and $K\setminus\Hset$ is the \emph{escape region}. For $\sin\alpha<\nu\le1$ we first define $\Hset$ geometrically (Section~\ref{sec:slow}), set $W=+\infty$ on it, and only then establish its game-theoretic meaning: the relay keeps every state of $\Hset$ in $K$ against every evader control (Theorem~\ref{thm:slow}), and $\Hset=\{V_P=+\infty\}$ (Theorem~\ref{thm:value}). By the strict-exit convention, $W=0$ on the usable part of the rays and at a usable apex, whereas for $\nu\le1$ the barrier belongs to $\Hset$.

\paragraph{Feedback controls and Filippov solutions} Some feedback laws used below are discontinuous, e.g., the relay $\omega_p=-\Om\sgn x$. Trajectories under a discontinuous feedback are understood in the sense of \citet{filippov1988}: absolutely continuous solutions of the differential inclusion obtained by taking, at each point, the closed convex hull of the limits of the right-hand side. On the line $x=0$ the relay then produces sliding with the equivalent turn rate $\omega_p=-u_x/y$ whenever $|u_x|\le\Om y$, which is determined by the evader's current control; this is the sense in which the pursuer reacts to the evader's current control in Theorem~\ref{thm:slow}. The Filippov convention is used for the pursuer's relay for each prescribed measurable evader control. It is not applied to the evader's feedback: on the evader's axis dispersal line the evader selects one of the two one-sided branches and applies it as its actual control, and the trajectory enters the chosen half-plane. We do not convexify the two evader branches into a motion along the axis; that convexification would allow trajectories, sliding up the axis, whose escape time exceeds $W$. Conditions under which discontinuous feedbacks can be interpreted as nonanticipative strategies are discussed by \citet{vinter2004}.

\paragraph{Scale invariance} $K$ is a cone, so lengths can be measured in units of $r$ and times in units of $1/\Om$. Substituting $\x=r\hat\x$ and $t=\hat t/\Om$ into~\eqref{eq:reduced} gives the same game with $\Vp=\Om=r=1$ and $\Ve=\nu$. Hence
\begin{equation}\label{eq:scale}
W(\x;\Vp,\Om,\Ve,\alpha)=\frac{1}{\Om}\,\widehat W\!\left(\frac{\x}{r};\nu,\alpha\right),
\end{equation}
and the whole game depends on the two parameters $(\nu,\alpha)$.

\begin{lemma}[Relay solutions]\label{lem:relay}
In normalized units, let $\mathbf u(\cdot)$ be a measurable evader control and let the pursuer play $v_p=-1$, $\omega_p=-\sgn x$. From every initial state with $y>0$, the Filippov inclusion
\[
\begin{aligned}
&\dot\x\in\mathbf u(t)+(0,1)^\top+F(\x),\\
&F(x,y)=\{(-\varsigma y,\varsigma x)^\top:\ \varsigma\in\operatorname{Sign}(x)\},
\end{aligned}
\]
with $\operatorname{Sign}(0)=[-1,1]$, has exactly one solution while $y>0$. Along it, $\omega_p(t)=-\sgn x(t)$ where $x(t)\neq0$ and $\omega_p(t)=-u_x(t)/y(t)$ for almost every $t$ with $x(t)=0$; this turn rate is measurable, $(v_p,\omega_p)$ lies in the bow tie, and the solution is the ordinary solution of~\eqref{eq:reduced} under $(\mathbf u,v_p,\omega_p)$. The map $\mathbf u(\cdot)\mapsto(v_p,\omega_p)(\cdot)$ is nonanticipative.
\end{lemma}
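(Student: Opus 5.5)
Existence follows from Filippov's general theorem once the right-hand side is put in the standard form, and uniqueness from a monotonicity (one-sided Lipschitz) estimate. The inclusion is $\dot\x\in G(t,\x)=\mathbf u(t)+(0,1)^\top+F(\x)$. The set $F(\x)$ is nonempty, convex and compact, and it is upper semicontinuous in $\x$. The term $\mathbf u(t)$ is measurable and bounded by $\nu$. Hence $G$ satisfies the Carath\'eodory-type hypotheses of \citet{filippov1988}, which give a local absolutely continuous solution. It extends as long as $y>0$, because the right-hand side is bounded on bounded sets: $|F(\x)|\le\|\x\|$ gives linear growth and rules out blow-up.

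For uniqueness I would write $F(\x)=\operatorname{Sign}(x)\,J\x$ with $J=\begin{pmatrix}0&-1\\1&0\end{pmatrix}$. Take two solutions $\x_1,\x_2$ with the same $\mathbf u$ and set $\delta=\x_1-\x_2$. Then
\[
\tfrac{d}{dt}\tfrac12\|\delta\|^2=\langle\delta,\varsigma_1J\x_1-\varsigma_2J\x_2\rangle .
\]
Since $\langle \x_i,J\x_i\rangle=0$, this equals $-\varsigma_1\langle\x_2,J\x_1\rangle-\varsigma_2\langle\x_1,J\x_2\rangle=(\varsigma_2-\varsigma_1)\langle\x_1,J\x_2\rangle$. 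We have $\langle\x_1,J\x_2\rangle=x_2y_1-x_1y_2$.

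The key step, and the main obstacle, is to bound this term by $C\|\delta\|^2$ despite the jump of $\varsigma$. If $x_1,x_2$ have the same strict sign, then $\varsigma_1=\varsigma_2$ and the term vanishes. In general, $(\varsigma_2-\varsigma_1)$ has sign opposite to $(x_2-x_1)$ whenever it is nonzero, by monotonicity of $\operatorname{Sign}$. Write $x_2y_1-x_1y_2=(x_2-x_1)y_1+x_1(y_1-y_2)$. The first part gives $(\varsigma_2-\varsigma_1)(x_2-x_1)y_1\le0$ because $y_1>0$; this is exactly where the hypothesis $y>0$ is used. For the second part, $\varsigma_1\ne\varsigma_2$ forces $x_1$ and $x_2$ to lie on opposite sides of zero or on it, so $|x_1|\le|x_1-x_2|$. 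Hence that part is at most $2|\delta_x||\delta_y|\le\|\delta\|^2$. Gronwall then gives $\delta\equiv0$.

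With uniqueness in hand, I would identify the turn rate. Off the set $\{x(t)=0\}$ the selection is $\varsigma=\sgn x$. On the set $Z=\{t:x(t)=0\}$, absolute continuity gives $\dot x=0$ a.e. on $Z$, since the derivative of an AC function vanishes a.e. on its level set. The first component of the inclusion is $\dot x=u_x-\varsigma y$, so $\varsigma=u_x/y$ a.e. on $Z$, and therefore $\omega_p=-\varsigma=-u_x/y$. Because $\varsigma\in[-1,1]$, this forces $|u_x|\le y$ there. The resulting $\omega_p(t)$ is a combination of measurable functions on the measurable sets $Z$ and $Z^c$, so it is measurable. Moreover $|\omega_p|\le1=|v_p|$, so $(v_p,\omega_p)$ lies in the bow tie. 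The solution then solves \eqref{eq:reduced} a.e. with these controls, and since \eqref{eq:reduced} is linear in $\x$ for fixed controls, it is the ordinary Carath\'eodory solution.

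Finally, for nonanticipativity, suppose $\mathbf u=\mathbf u'$ a.e. on $[0,s]$. The inclusions coincide on $[0,s]$, so by uniqueness on $[0,s]$ (same estimate) the two trajectories agree there. The formulas for $\omega_p$ then agree a.e. on $[0,s]$.
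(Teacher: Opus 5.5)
Your overall route is the paper's: Filippov existence with linear growth, an energy estimate on $\|\x_1-\x_2\|^2$ for uniqueness, $\dot x=0$ a.e.\ on $\{x=0\}$ to obtain $\omega_p=-u_x/y$ and $|u_x|\le y$ there, and uniqueness on $[0,s]$ for nonanticipativity. However, your uniqueness computation contains two sign errors that cancel.

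\emph{The identity.} Your $J$ is skew-symmetric, so $\langle\x_2,J\x_1\rangle=-\langle\x_1,J\x_2\rangle$. Hence $-\varsigma_1\langle\x_2,J\x_1\rangle-\varsigma_2\langle\x_1,J\x_2\rangle=(\varsigma_1-\varsigma_2)\langle\x_1,J\x_2\rangle=(\varsigma_2-\varsigma_1)(x_1y_2-x_2y_1)$. This is the paper's expression and the negative of yours. For $\x_1=(1,1)$, $\x_2=(-1,1)$, $\varsigma_1=1$, $\varsigma_2=-1$, the true derivative is $-4$, while your formula gives $4$.

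\emph{The monotonicity.} $\operatorname{Sign}$ is nondecreasing, so $(\varsigma_2-\varsigma_1)(x_2-x_1)\ge0$: the two factors have the same sign, not opposite signs.

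Taken literally, your formula combined with the correct monotonicity makes your ``first part'' nonnegative, and no upper bound follows. With both corrections the argument goes through:
\begin{itemize}
\item the first part becomes $(\varsigma_1-\varsigma_2)(x_2-x_1)y_1\le0$;
\item your observation that $|x_1|\le|\delta_x|$ whenever $\varsigma_1\ne\varsigma_2$ bounds the second part, $(\varsigma_1-\varsigma_2)x_1(y_1-y_2)$, by $2|\delta_x||\delta_y|\le\|\delta\|^2$;
\item Gr\"onwall then gives $\delta\equiv0$.
\end{itemize}

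Once the signs are fixed, the remaining difference from the paper is minor. The paper evaluates $(\varsigma_2-\varsigma_1)(x_1y_2-x_2y_1)$ case by case, using both $y_1,y_2>0$. It finds the expression nonpositive, so two relay solutions never move apart and no Gr\"onwall step is needed. Your split into a monotone term and a Lipschitz remainder gives only the one-sided bound $\tfrac{d}{dt}\tfrac12\|\delta\|^2\le\|\delta\|^2$. On the other hand, it does not rely on the exact cancellation and would survive perturbations of the dynamics. The later arguments use only uniqueness: nonanticipativity here, and the identification of the limit trajectory in Lemma~\ref{lem:samp}. So either estimate suffices.
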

Lemma~\ref{lem:relay} makes the pursuer's relay a well-defined nonanticipative strategy, which Theorem~\ref{thm:slow} and Corollary~\ref{cor:sandwich} use; the proof, a uniqueness argument for the Filippov inclusion, is in Appendix~\ref{app:relay}. In normalized units every relay solution has $\dot y=u_y+1+|x|$, so for $\nu\le1$, $\dot y\ge1-\nu\ge0$: a solution starting in $K\setminus\{\mathbf 0\}$, where $y>0$, keeps $y>0$, and Lemma~\ref{lem:relay} applies for all time.

\section{Target manifold and usable part}\label{sec:target}
We identify the part of the boundary of the cone through which the evader can force its exit; it supplies the terminal data of the characteristics.

The terminal manifold consists of the right ray $M_+=\{\rho(\sin\alpha,\cos\alpha)^\top:\rho>0\}$, its mirror image $M_-$, and the apex. By the lateral symmetry $x\mapsto-x$ it suffices to consider $M_+$, where the evader escapes by making $\dot\varphi>0$. By~\eqref{eq:polar},
\begin{equation}\label{eq:saddle_ray}
\max_{\|\mathbf u\|\le\Ve}\ \min_{(v_p,\omega_p)}\rho\dot\varphi=\Ve-\Om\rho-\Vp\sin\alpha,
\end{equation}
attained by $\mathbf u=\Ve\mathbf e_\varphi$ and by $(v_p,\omega_p)=(-\Vp,-\Om)$, i.e., full reverse with $\kappa_p=+\kmax$, an admissible Reeds--Shepp control. Escape can be enforced where~\eqref{eq:saddle_ray} is positive, so
\begin{align}
\UP&=\{\rho(\sin\alpha,\cos\alpha)^\top:\ 0<\rho<\rho_B\},\label{eq:UP}\\
\rho_B&=\frac{\Ve-\Vp\sin\alpha}{\Om}=r(\nu-\sin\alpha),\label{eq:rhoB}
\end{align}
with $\BUP=\{\rho=\rho_B\}$ and $\NUP=\{\rho>\rho_B\}$. The usable part is nonempty if and only if $\nu>\sin\alpha$; for $\nu\le\sin\alpha$ the usable part and the BUP are empty and each ray is entirely nonusable. For $\nu>\sin\alpha$ the BUP consists of the two points $B_\pm=\rho_B(\pm\sin\alpha,\cos\alpha)^\top$, at distance $\rho_B$ from the apex, and we write $\UP_\pm$ for the usable parts of $M_\pm$, the segments from the apex to $B_\pm$. The usable part lies next to the apex, where the angular sweep $\Om\rho$ produced by turning is small.

\paragraph{No sliding on the nonusable part} On the rays the pursuer's radial and angular objectives agree: the control $(-\Vp,-\Om)$ that minimizes $\rho\dot\varphi$ also maximizes $\dot\rho=u_r-v_p\cos\varphi$, giving $u_r+\Vp\cos\varphi$. On the $\NUP$ it gives $\rho\dot\varphi\le\Ve-\Om\rho-\Vp\sin\alpha<0$ against every evader control. The evader therefore cannot force grazing against this inward control. The constructed synthesis needs no boundary sliding arc; this does not exclude grazing under other control pairs.

\paragraph{The apex} The apex is the corner of $K$, and at $\x=\mathbf 0$ the turning terms of~\eqref{eq:reduced} vanish: $\dot x=u_x$ and $\dot y=u_y-v_p$. Let $\ell=x\cos\alpha-y\sin\alpha$, so that $K\subset\{\ell\le0\}$. With $\mathbf u=\Ve(\cos\alpha,-\sin\alpha)$, $\dot\ell=\Ve+v_p\sin\alpha+\omega_p(x\sin\alpha+y\cos\alpha)$, which equals at least $\Ve-\Vp\sin\alpha$ at the apex. Hence, for $\nu>\sin\alpha$, an evader at the apex leaves $K$ immediately, across $M_+$ next to the apex, whatever the pursuer does: the apex is usable exactly when the usable part is nonempty, and then $W(\mathbf 0)=0$. For $\nu\le\sin\alpha$ the apex is held by straight reversal (Proposition~\ref{prop:kind}(a)).

\section{Hamiltonian, terminal data and primary arcs}\label{sec:ham}
We set up Isaacs' method of characteristics \citep{isaacs1965} and integrate the characteristics that end on the usable part, the primaries, in closed form.

\subsection{Hamiltonian and saddle controls}
With costate $\boldsymbol\lambda=(\lambda_x,\lambda_y)^\top$, the Hamiltonian of~\eqref{eq:reduced} for the minimum-time payoff is
\begin{equation}\label{eq:Ham}
\begin{aligned}
H&=1+\boldsymbol\lambda^\top\mathbf u-v_p\Phi_v+\omega_p\Phi_\omega,\\
\Phi_v&=\lambda_y,\qquad \Phi_\omega=\lambda_xy-\lambda_yx .
\end{aligned}
\end{equation}
The evader minimizes and the pursuer maximizes $H$, so
\begin{equation}\label{eq:Hstar}
H^*(\x,\boldsymbol\lambda)=1-\Ve\Lambda+\Vp|\lambda_y|+\Om|\Phi_\omega|,
\end{equation}
where $\Lambda=\|\boldsymbol\lambda\|$, with saddle controls $\mathbf u^*=-\Ve\boldsymbol\lambda/\Lambda$, $v_p^*=-\Vp\sgn\Phi_v$ and $\omega_p^*=\Om\sgn\Phi_\omega$. Where a switching function vanishes, the maximizer is not unique, and we select an admissible one: if $\Phi_v=0$ and $\Phi_\omega\ne0$, either sign of $v_p$ with $|v_p|=\Vp$, so that the saturated turn rate lies in the bow tie; if $\Phi_\omega=0$, any $\omega_p$ with $|\omega_p|\le\kmax|v_p|$. The costate equations $\dot\lambda_x=\omega_p\lambda_y$, $\dot\lambda_y=-\omega_p\lambda_x$ rotate $\boldsymbol\lambda$ rigidly, so $\Lambda$ is constant on smooth arcs. We write $\boldsymbol\lambda=-\Lambda(\sin\beta,\cos\beta)^\top$, where $\beta$ is the evader's heading in the reduced frame, and $\psi=y\sin\beta-x\cos\beta$, so that $\Phi_\omega=-\Lambda\psi$. In retro-time $\tau=t_f-t$, with primes denoting $d/d\tau$, the characteristic system is
\begin{equation}\label{eq:retro}
\begin{aligned}
x'&=-\Ve\sin\beta-\omega_p^*y,\\ y'&=-\Ve\cos\beta+v_p^*+\omega_p^*x,\\
\beta'&=-\omega_p^*,\qquad \psi'=v_p^*\sin\beta,
\end{aligned}
\end{equation}
with $v_p^*=\Vp\sgn(\cos\beta)$ and $\omega_p^*=-\Om\sgn(\psi)$. The last equation follows from the first three, because the terms in $\Ve$ and in $\omega_p^*$ cancel. With constant controls,~\eqref{eq:retro} integrates in closed form. The saturated controls $|v_p|=\Vp$, $|\omega_p|=\Om$ have $|\kappa_p|=\kmax$, so the maximum in~\eqref{eq:Hstar} is attained by admissible Reeds--Shepp controls, as is the singular steering $\omega_p=0$ of Proposition~\ref{prop:singular}(ii). A characteristic is \emph{normal} if it carries the running cost, i.e., satisfies $H^*=0$ with~\eqref{eq:Hstar}, and \emph{abnormal} if the running cost is absent, i.e., it satisfies $H_A^*=H^*-1=0$ with $\boldsymbol\lambda\neq0$; abnormal characteristics trace semipermeable surfaces such as barriers \citep{isaacs1965}.

\begin{proposition}[Singular arcs]\label{prop:singular}
Let $\x$ lie in the interior of $K$, so that $y>0$. (i) No normal characteristic has a singular pursuer speed, $\Phi_v\equiv0$, on a time interval. (ii) A normal characteristic with singular steering, $\Phi_\omega\equiv0$, on a time interval lies on the front axis $x=0$, with $\boldsymbol\lambda=(0,\lambda_y)$, $\omega_p=0$, $|v_p|=\Vp$ and $(\Ve-\Vp)|\lambda_y|=1$. In particular, normal singular arcs do not exist for $\nu\le1$.
\end{proposition}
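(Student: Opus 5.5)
The plan is to differentiate the switching functions along a normal characteristic and use the costate equations $\dot\lambda_x=\omega_p\lambda_y$, $\dot\lambda_y=-\omega_p\lambda_x$ together with the reduced dynamics~\eqref{eq:reduced} and the normality condition $H^*=0$ from~\eqref{eq:Hstar}. On a singular arc the switching function vanishes identically on an interval, so all its time derivatives vanish there. This yields algebraic constraints on $(\x,\boldsymbol\lambda)$ that either contradict $H^*=0$ or pin the arc down.

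\textbf{(i)} Suppose $\Phi_v=\lambda_y\equiv0$ on an interval. Then $\dot\lambda_y=-\omega_p\lambda_x\equiv0$. Since $\Lambda>0$ is constant and $\lambda_y=0$, we have $\lambda_x\neq0$, so $\omega_p=0$ a.e. on the interval. If $\Phi_\omega=\lambda_x y\neq0$, which holds because $y>0$ in the interior of $K$, the pursuer's maximization forces $|\omega_p|=\Om$. That contradicts $\omega_p=0$, and part (i) follows. I would still check the normality condition: $H^*=1-\Ve|\lambda_x|+\Om y|\lambda_x|$ is consistent in general. The contradiction comes instead from the saturation of $\omega_p$ whenever $\Phi_\omega\neq0$, so normality is not needed here beyond $\boldsymbol\lambda\ne0$.

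\textbf{(ii)} Suppose $\Phi_\omega=\lambda_xy-\lambda_yx\equiv0$. Differentiate using~\eqref{eq:reduced} and the costate equations. The $\omega_p$ terms cancel, since $\Phi_\omega$ is the angular-momentum-type invariant of the rotation, and the $\mathbf u$ terms cancel because $\mathbf u^*$ is parallel to $\boldsymbol\lambda$. This is the same computation as $\psi'=v_p^*\sin\beta$ in~\eqref{eq:retro}. What remains is $\dot\Phi_\omega=-v_p\lambda_x$. By (i), $\Phi_v$ cannot vanish on an interval, so $|v_p|=\Vp\neq0$ a.e., and hence $\lambda_x\equiv0$. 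Then $\lambda_y\neq0$, and $\Phi_\omega=-\lambda_yx=0$ gives $x\equiv0$. Keeping $x\equiv0$ requires $\dot x=u_x+\omega_py=0$, where $u_x^*=-\Ve\lambda_x/\Lambda=0$. With $y>0$ this gives $\omega_p=0$, consistent with the costate equations keeping $\lambda_x=0$.

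Normality then reads $H^*=1-\Ve|\lambda_y|+\Vp|\lambda_y|=0$, i.e. $(\Ve-\Vp)|\lambda_y|=1$. For $\nu\le1$ we have $\Ve-\Vp\le0$, so the left side is nonpositive and cannot equal $1$. Hence normal singular arcs do not exist for $\nu\le1$.

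The main obstacle is handling the measurable-control subtlety in (i): inferring that $\omega_p=0$ a.e. from $\dot\lambda_y\equiv0$, and correctly invoking saturation of the maximizer. Closely related is justifying in (ii) that $v_p\neq0$ a.e., which is exactly where (i) is used.
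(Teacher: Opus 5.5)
Your proof is correct and is essentially the paper's argument: the same identity $\dot\Phi_\omega=-v_p\lambda_x$, the same contradiction between $\omega_p=0$ and $|\omega_p|=\Om$ in (i), and the same normality computation $1-(\Ve-\Vp)|\lambda_y|=0$ in (ii). One tightening: in (ii) you do not need (i), which only excludes $\lambda_y$ vanishing on an interval and so does not by itself give saturation of $v_p$ almost everywhere; as in the paper, $\lambda_y\neq0$ at every instant of a singular-steering arc, because $\lambda_y=0$ together with $\Phi_\omega=\lambda_xy=0$ and $y>0$ would force $\boldsymbol\lambda=0$.
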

\begin{proof}
Along characteristics, $\dot\lambda_x=\omega_p\lambda_y$ and $\dot\lambda_y=-\omega_p\lambda_x$, and differentiating $\Phi_\omega$ along~\eqref{eq:reduced} with $\mathbf u^*=-\Ve\boldsymbol\lambda/\Lambda$ gives $\dot\Phi_\omega=-v_p\lambda_x$, because the evader terms $\lambda_xu_y-\lambda_yu_x$ cancel. (i) If $\lambda_y\equiv0$, then $\lambda_x\neq0$ and $\dot\lambda_y=0$ force $\omega_p=0$, whereas $\Phi_\omega=\lambda_xy\neq0$ forces $\omega_p=\pm\Om$. (ii) If $\Phi_\omega\equiv0$, then $\lambda_y\neq0$, since otherwise $\Phi_\omega=\lambda_xy=0$ would give $\boldsymbol\lambda=0$. Hence $|v_p|=\Vp$, and $\dot\Phi_\omega=0$ gives $\lambda_x=0$, then $\Phi_\omega=-\lambda_yx=0$ gives $x=0$, and $\dot\lambda_x=0$ gives $\omega_p=0$. Finally $H^*=1-(\Ve-\Vp)|\lambda_y|=0$, which has no solution when $\Ve\le\Vp$.
\end{proof}

\subsection{Terminal transversality}
At a regular terminal point $\x_f=\rho_f(\sin\alpha,\cos\alpha)^\top$ of $M_+$, the costate is normal to the ray and points into $K$, where $W$ increases:
\begin{equation}\label{eq:lamf}
\boldsymbol\lambda_f=\Lambda_f(-\cos\alpha,\sin\alpha)^\top,\qquad \beta_f=\alpha+\tfrac{\pi}{2}.
\end{equation}
Then $\psi_f=\rho_f$ and $\Phi_\omega(t_f)=-\Lambda_f\rho_f\neq0$. The steering switching function therefore does not vanish at termination, and the terminal turn rate is saturated: $\omega_p^*=-\Om$, and $v_p^*=-\Vp$, $\kappa_p^*=+\kmax$. The condition $H^*=0$ gives
\begin{equation}\label{eq:Lamf}
\Lambda_f=\frac{1}{\Ve-\Vp\sin\alpha-\Om\rho_f},
\end{equation}
which is positive and finite exactly on the $\UP$ and tends to $+\infty$ at the $\BUP$.

\subsection{Primary arcs in a rotating frame}
With constant controls $(-\Vp,-\Om)$, write $\theta=\alpha+\Om\tau$, where $\tau$ is retro-time. Integrating~\eqref{eq:retro} gives the primary family in closed form:
\begin{equation}\label{eq:primary}
\begin{aligned}
\psi(\tau)&=\rho_f-r(\sin\theta-\sin\alpha),\\
\eta(\tau)&=r(\cos\alpha-\cos\theta)-\Ve\tau,\\
x&=\psi\sin\theta+\eta\cos\theta,\quad y=\psi\cos\theta-\eta\sin\theta,
\end{aligned}
\end{equation}
with $\beta=\theta+\pi/2$ and $W=\tau$. The switching variable $\psi$ is the coordinate of $\x$ along $\mathbf e_r(\theta)$, and $\eta$ is its coordinate along $\mathbf e_\varphi(\theta)$, so the bearing is $\varphi=\theta+\operatorname{atan2}(\eta,\psi)$. Initially $\varphi'=\Om(1-\rho_B/\rho_f)<0$, and the characteristic enters $K$. The speed switch $\cos\beta=0$ occurs at $\theta=\pi$. The steering switch $\psi=0$ occurs when $\sin\theta=\sin\alpha+\rho_f/r$. A primary is \emph{retained} up to the first point where it stops carrying the value; for $\nu\le1$ this is its first contact with the front axis, where it meets its mirror image (Section~\ref{sec:slow}), and beyond that point it is discarded.

\begin{proposition}[Primaries]\label{prop:prim}
Let $\sin\alpha<\nu\le1$. (i) The map $(\rho_f,\tau)\mapsto\x$ has Jacobian determinant $\Om(\rho_B-\rho_f)>0$, so the family has no focal points. (ii) Every primary reaches the front axis before any switch of the pursuer's controls; up to that point the pursuer reverses at full speed and turns toward the evader at the maximum rate, $\lambda_y=\Lambda\sin\theta>0$ and $\Phi_\omega=-\Lambda\psi<0$.
\end{proposition}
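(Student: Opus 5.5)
We work in normalized units ($\Vp=\Om=r=1$, $\Ve=\nu$), which we may do by~\eqref{eq:scale}. We express everything in the rotating orthonormal pair $\mathbf e_r(\theta)=(\sin\theta,\cos\theta)^\top$, $\mathbf e_\varphi(\theta)=(\cos\theta,-\sin\theta)^\top$, so that~\eqref{eq:primary} reads $\x=\psi\,\mathbf e_r(\theta)+\eta\,\mathbf e_\varphi(\theta)$, with $\theta=\alpha+\tau$. Both parts of the proposition reduce to elementary facts about the two scalar functions $\psi(\theta)=\rho_f-\sin\theta+\sin\alpha$ and $\eta(\theta)=\cos\alpha-\cos\theta-\nu(\theta-\alpha)$.

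For (i), the plan is a direct computation of the two columns of the Jacobian in the moving basis.
\begin{itemize}
\item Since $\partial\psi/\partial\rho_f=1$ and $\partial\eta/\partial\rho_f=0$, we get $\partial\x/\partial\rho_f=\mathbf e_r$.
\item Using $d\mathbf e_r/d\theta=\mathbf e_\varphi$ and $d\mathbf e_\varphi/d\theta=-\mathbf e_r$, we get $\partial\x/\partial\tau=(\psi_\tau-\eta)\,\mathbf e_r+(\eta_\tau+\psi)\,\mathbf e_\varphi$.
\item The coefficient $\eta_\tau+\psi=(\sin\theta-\nu)+(\rho_f-\sin\theta+\sin\alpha)=\rho_f-\rho_B$ is independent of $\tau$.
\end{itemize}
The determinant in the basis $(\mathbf e_r,\mathbf e_\varphi)$ is therefore $\rho_f-\rho_B$. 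The frame $(\mathbf e_r,\mathbf e_\varphi)$ has orientation determinant $-1$ relative to $(x,y)$, so the Jacobian is $\rho_B-\rho_f$, which is $\Om(\rho_B-\rho_f)$ after restoring units. It is positive on the $\UP$ because $0<\rho_f<\rho_B$. Hence the family has no focal points.

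For (ii), the key step is to locate the first steering switch and show that $x$ has already changed sign before it. Since $\rho_f<\rho_B=\nu-\sin\alpha$, we have $\sin\alpha+\rho_f<\nu\le1$. The first zero of $\psi$ is therefore at $\theta_s=\arcsin(\sin\alpha+\rho_f)\in(\alpha,\pi/2)$, which comes before the speed switch at $\theta=\pi$.

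On $(\alpha,\theta_s]$ we have $\eta'(\theta)=\sin\theta-\nu\le\sin\theta_s-\nu<0$, and $\eta(\alpha)=0$, so $\eta<0$ there. Consequently $x(\theta_s)=\eta(\theta_s)\cos\theta_s<0$, whereas $x(\alpha)=\rho_f\sin\alpha>0$. By continuity, $x$ vanishes at some first $\theta^*\in(\alpha,\theta_s)$.

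At that point the remaining claims follow from sign information.
\begin{itemize}
\item We have $\psi>0$, $\eta<0$ and $0<\theta^*<\pi/2$, so $y=\psi\cos\theta^*-\eta\sin\theta^*>0$. The contact is therefore with the front axis, not the rear one.
\item Throughout $[\alpha,\theta^*]$ the controls stay $(-\Vp,-\Om)$: $\cos\beta=-\sin\theta<0$ gives $v_p^*=-\Vp$, and $\psi>0$ gives $\omega_p^*=-\Om$.
\item From $\boldsymbol\lambda=-\Lambda(\sin\beta,\cos\beta)^\top=\Lambda(-\cos\theta,\sin\theta)^\top$ we read off $\lambda_y=\Lambda\sin\theta>0$ and $\Phi_\omega=-\Lambda\psi<0$.
\item $\Lambda=\Lambda_f$ is constant, positive and finite by~\eqref{eq:Lamf}.
\end{itemize}

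The only delicate point is ordering the three events: axis contact, steering switch and speed switch. The monotonicity of $\eta$ on $(\alpha,\theta_s]$, which uses $\sin\theta_s<\nu$, is exactly where the hypotheses $\rho_f<\rho_B$ and $\nu\le1$ enter. The case $\nu=1$ needs no separate treatment, because the inequality $\sin\theta_s<\nu$ is strict from $\rho_f<\rho_B$.
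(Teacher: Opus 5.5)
Your proof is correct and matches the paper's argument. Part (i) is the same determinant computation, organized in the rotating basis $(\mathbf e_r,\mathbf e_\varphi)$ instead of the paper's Cartesian components $x',y'$. Part (ii) is the same ordering argument: $\eta<0$ holds up to the first zero $\arcsin(\rho_f+\sin\alpha)$ of $\psi$, so $x$ changes sign before that point. Your explicit check that $y>0$ at the contact, so that it is the front axis, is a small addition the paper leaves implicit.
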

\begin{proof}
We use normalized units. (i) By~\eqref{eq:primary}, $\partial\x/\partial\rho_f=(\sin\theta,\cos\theta)^\top$ and $\partial\x/\partial\tau=(x',y')^\top$ with $x'=(\psi-\nu)\cos\theta-\eta\sin\theta$ and $y'=-1+(\nu-\psi)\sin\theta-\eta\cos\theta$, so the Jacobian determinant is $y'\sin\theta-x'\cos\theta=\nu-\sin\theta-\psi=\nu-\sin\alpha-\rho_f=\rho_B-\rho_f$. (ii) Let $b=\rho_f+\sin\alpha<\nu\le1$, so that $\psi=b-\sin\theta$. Since $\eta(\alpha)=0$ and $\eta'=\sin\theta-\nu$, $\eta<0$ for $\theta>\alpha$ as long as $\sin\theta<\nu$. At the first zero of $\psi$, $\theta_\omega=\arcsin b<\pi/2$, we have $\sin\theta\le b<\nu$ before it, so $x=\eta\cos\theta_\omega<0$ there, whereas $x=\rho_f\sin\alpha>0$ at launch: the axis is reached before the steering switch. The speed switch $\cos\beta=0$ occurs only at $\theta=\pi$, later still. Up to the axis contact, $\psi>0$ and $\sin\theta>0$, which give the stated signs.
\end{proof}

All retained primaries are therefore the single arc~\eqref{eq:primary}, truncated at their first contact with the front axis; they do not return to $M_+$ before it (proof of Proposition~\ref{prop:Wform}).

\section{Game of kind}\label{sec:kind}
Whether the pursuer can prevent escape at all splits the parameter plane into three cases: it holds the whole cone, part of it, or none of it.

\begin{proposition}[Game of kind]\label{prop:kind}
Let $0<\alpha<\pi/2$, and in this proposition let $\Hset=\{\x\in K:V_P(\x)=+\infty\}$, the set of states from which a pursuer strategy can prevent the escape.
\begin{enumerate}[label=(\alph*)]
\item If $\nu\le\sin\alpha$, then $\Hset=K$. Straight reversal, $v_p=-\Vp$, $\omega_p=0$, keeps every trajectory that starts in $K$ inside $K$ for all time, whatever the evader does; hence $V_E=V_P=+\infty$ on $K$. This is one ordinary Reeds--Shepp control ($\kappa_p=0$) with no switching.
\item If $\sin\alpha<\nu\le1$, then $\emptyset\neq\Hset\neq K$. Theorems~\ref{thm:slow} and~\ref{thm:value}, proved independently of this classification, show that $\Hset$ is the region~\eqref{eq:esc} bounded by the barrier and the nonusable rays.

\item If $\nu>1$, then $\Hset=\emptyset$: for every $\varepsilon>0$ the evader has a strategy that guarantees
\begin{equation}\label{eq:fastbound}
t_f\le\frac{\rho}{\Ve-\Vp}+\varepsilon\qquad\text{for all }\x\in K
\end{equation}
against every pursuer control, and $V_E$ and $V_P$ are finite on $K$.
\end{enumerate}
\end{proposition}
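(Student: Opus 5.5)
The three parts are proved separately: (a) and (c) by direct Lyapunov-type estimates, (b) by invoking the later theorems.

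\emph{Part (a).} We work in normalized units and use the support function of the right ray, $\ell_+=x\cos\alpha-y\sin\alpha$, together with its mirror $\ell_-=-x\cos\alpha-y\sin\alpha$. Then $K=\{\ell_+\le0,\ \ell_-\le0\}$. Under $v_p=-1$, $\omega_p=0$, equation~\eqref{eq:reduced} gives
\[
\dot\ell_\pm=\pm u_x\cos\alpha-u_y\sin\alpha-\sin\alpha\le\nu-\sin\alpha\le0,
\]
since $|(\pm\cos\alpha,-\sin\alpha)\cdot\mathbf u|\le\nu$. Thus $\ell_\pm(t)\le\ell_\pm(0)\le0$ for all $t$, whatever measurable control the evader uses. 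The state therefore never leaves $K$, the strict exit never occurs, and $t_f=+\infty$. Straight reversal is a constant control, so it is a (trivially nonanticipative) pursuer strategy, which gives $V_P=+\infty$. The same computation holds against any evader strategy, since the response to this fixed pursuer control is just some measurable $\mathbf u$; this gives $V_E=+\infty$. Note that the apex case is included, because the estimate does not use $y>0$.

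\emph{Part (b).} This part is only a pointer. Theorem~\ref{thm:slow} gives a relay that holds the geometrically defined $\Hset$ and an evader feedback that escapes in finite time $W$ outside it. Theorem~\ref{thm:value} gives $\{V_P=+\infty\}=\Hset$. It then remains to check that this set is nonempty and proper. Nonemptiness holds because $\Hset$ contains the nonusable rays, e.g.\ the points of $\NUP$. Properness holds because $\UP$ and the apex have $W=0$, so they lie outside $\Hset$. Alternatively, properness follows directly from the usable-part analysis of Section~\ref{sec:target}: at the apex the evader exits immediately.

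\emph{Part (c).} The evader's strategy is to run straight away from the apex in the reduced frame, $\mathbf u=\nu\mathbf e_r$. Equation~\eqref{eq:polar} then gives $\dot\rho=\nu-v_p\cos\varphi\ge\nu-1>0$, so $\rho$ grows at least linearly. That alone does not produce an exit, and this is the step I expect to be the main obstacle. I would try a two-phase strategy.
\begin{enumerate}[label=(\roman*)]
\item \emph{Reach the apex.} Instead of moving away, the evader first moves toward the apex, $\mathbf u=-\nu\mathbf e_r$, which gives $\dot\rho=-\nu-v_p\cos\varphi\le-(\nu-1)$. The evader reaches an $\varepsilon'$-neighbourhood of the apex within time $\rho/(\nu-1)$.
\item \emph{Exit near the apex.} Near the apex we use $\ell_+$ with $\mathbf u=\nu(\cos\alpha,-\sin\alpha)$. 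As in the apex paragraph of Section~\ref{sec:target},
\[
\dot\ell_+\ge\nu-\sin\alpha-\rho>0
\]
once $\rho<\nu-\sin\alpha$, which holds for small $\rho$ because $\nu>1>\sin\alpha$. Hence the exit occurs within an additional time of order $\varepsilon'$.
\end{enumerate}

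The delicate points are the following. In phase (i), near $\rho=0$, the bearing $\varphi$ is undefined, and I would handle this by switching phases at the threshold $\rho=\varepsilon'$. Both phases are feedbacks in the state only, i.e.\ responses to the measurable pursuer control through the ODE, so the strategy is nonanticipative. Choosing $\varepsilon'$ small relative to $\varepsilon$ gives~\eqref{eq:fastbound}. Finiteness of $V_E$ follows immediately, and $V_P\le V_E$ finiteness holds because any pursuer strategy faced with this evader control yields the same bound.
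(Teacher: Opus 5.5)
Parts (a) and (b) are correct and match the paper in substance. Your functions $\ell_\pm$ express the same convexity fact as the paper's observation that the velocity ball of radius $\Ve$ around $(0,\Vp)^\top$ lies in $K$. Part (b) is, as in the paper, a pointer to the later results together with immediate exit from the usable part. In (c) your escape construction follows the paper's outline: close the range with $-\Ve\mathbf e_r$, then exit near the apex. So $V_E<+\infty$ is fine, up to the remarks in the last paragraph.

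\emph{The genuine gap is $V_P<+\infty$.} You need, for every $B\in\mathcal S_P$, some evader control $\mathbf u$ with $J(\mathbf u,B[\mathbf u])\le\rho/(\Ve-\Vp)+\varepsilon$. Neither of your justifications provides this.
\begin{itemize}
\item The inequality $V_P\le V_E$ is not a general fact. In the slower case Theorem~\ref{thm:value} gives $V_E(c)=\tau_I<V_P(c)=+\infty$.
\item The phrase ``any pursuer strategy faced with this evader control'' presupposes a consistent pair: a $\mathbf u$ with $\mathbf u=A[B[\mathbf u]]$, where $A$ is your feedback viewed as an evader strategy. Since $B$ may react to the evader's current control, such a fixed point need not exist.
\end{itemize}
Here is a counterexample. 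Start from a state with $\rho>\varepsilon'$ and let $\mathbf u^{(i)}=A[\mathbf a_i]$ for the constant controls $\mathbf a_{1,2}=(-\Vp,\mp\Om)$. Define $B[\mathbf u](t)=\mathbf a_2$ if $\mathbf u=\mathbf u^{(1)}$ a.e.\ on $[0,t]$, and $B[\mathbf u](t)=\mathbf a_1$ otherwise. This $B$ is nonanticipative. A consistent pair $\mathbf u=A[B[\mathbf u]]$ would force one of two things:
\begin{itemize}
\item $\mathbf u=\mathbf u^{(1)}$ on all of $[0,\infty)$, which contradicts the definition of $B$; or
\item $\mathbf u^{(1)}=\mathbf u^{(2)}$ on an initial interval, which fails because the two turn rates give bearing rates differing by $2\Om$. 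The bearings, and with them $\mathbf e_r$ and the evader's control, therefore separate at once.
\end{itemize}

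The paper closes this step by sample-and-hold. The evader holds $-\Ve\mathbf e_r(\x(kh))$ on $[kh,(k+1)h)$, which changes the bearing only by $O(h/\delta)$ while $\rho\ge\delta$, so $\dot\rho\le-(\Ve-\Vp)+O(h/\delta)$. The second phase is a constant control. A sampled control pairs uniquely with any nonanticipative strategy, interval by interval, as at the beginning of Appendix~\ref{app:value}. You need this step, or an equivalent delay argument, with the phase switch made at a sampling instant.

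There are also two smaller points in your second phase. With $\mathbf u=\Ve(\cos\alpha,-\sin\alpha)$, $\rho$ is not monotone: a reversing pursuer pushes the evader outward at rate up to $\Vp\cos\varphi$. So the condition $\rho<r(\nu-\sin\alpha)$ must be propagated by a bootstrap: the phase lasts $O(\varepsilon')$ and $\|\dot\x\|$ is bounded there, so $\rho$ grows only by $O(\varepsilon')$. For the same reason the switch must be latched rather than a state threshold, or the feedback can fall back into the first phase. The paper's second-phase control $\mathbf u=(0,-\Ve)$ avoids both issues: it gives $\dot\rho\le-(\Ve-\Vp)\cos\alpha<0$ in $K$, hence $|x|\le\delta$ and $\dot y\le-(\Ve-\Vp)+\Om\delta<0$ directly.
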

\begin{proof}
(a) With $\omega_p=0$ and $v_p=-\Vp$, \eqref{eq:reduced} gives $\dot\x=\mathbf u+(0,\Vp)^\top$. The ball of radius $\Ve$ around $(0,\Vp)^\top$ lies in $K$, because its distance to either ray is $\Vp\sin\alpha\ge\Ve$. Hence $\dot\x(t)\in K$ for all $t$, and $\x(t)=\x(0)+\int_0^t\dot\x\in K$ because $K$ is a closed convex cone.

(b) By Corollary~\ref{cor:sandwich}, the relay gives $V_P=+\infty$ on the nonempty region~\eqref{eq:esc}, so $\Hset\ne\emptyset$. Since $\nu>\sin\alpha$, the usable part is nonempty, and from a point of it the evader exits immediately against every pursuer control (Section~\ref{sec:target}), so $V_P=0$ there and $\Hset\ne K$.

(c) At the apex the evader exits immediately (Section~\ref{sec:target}). Otherwise fix $\delta\in(0,\rho)$ with $\Om\delta<\Ve-\Vp$. While $\rho>\delta$, let the evader play $\mathbf u=-\Ve\mathbf e_r$; then $\dot\rho=-\Ve-v_p\cos\varphi\le-(\Ve-\Vp)$, so $\rho$ reaches $\delta$ within $\rho/(\Ve-\Vp)$ unless the state leaves $K$ earlier. From then on, let it play $\mathbf u=(0,-\Ve)$. While the state stays in $K$, $\dot\rho=-(\Ve+v_p)\cos\varphi\le-(\Ve-\Vp)\cos\alpha<0$, so $\rho\le\delta$ and $|x|\le\delta$, and therefore $\dot y=-\Ve-v_p-\omega_px\le-(\Ve-\Vp)+\Om\delta<0$. Since $y\le\delta$ at the start of this phase, the state leaves $K$, across a ray or the line $y=0$, within $\delta/(\Ve-\Vp-\Om\delta)$. The total time is at most $\rho/(\Ve-\Vp)+\delta/(\Ve-\Vp-\Om\delta)$, and choosing $\delta$ small enough gives~\eqref{eq:fastbound}; hence $V_E<+\infty$. The second phase uses a constant control, and in the first phase holding $-\Ve\mathbf e_r(\x(kh))$ on $[kh,(k+1)h)$ changes the bearing by $O(h/\delta)$ while $\rho\ge\delta$, so that $\dot\rho\le-(\Ve-\Vp)+O(h/\delta)$. This sampled evader, paired with an arbitrary pursuer strategy as at the beginning of Appendix~\ref{app:value}, gives the same bound up to $\varepsilon$ for small $h$; hence also $V_P<+\infty$.
\end{proof}

Case~(a) uses reversal, unavailable to a forward-only pursuer. In case~(b), holding does not require the separation to increase: for $\nu=1$, an evader running straight at a reversing pursuer can remain at a fixed reduced state on the held axis. In case~(c), the evader can close the range at speed at least $\Ve-\Vp$. Figure~\ref{fig:kind} shows the three cases in the $(\nu,\alpha)$ plane.

\begin{figure}[!htbp]
\centering
\includegraphics[width=0.8\textwidth]{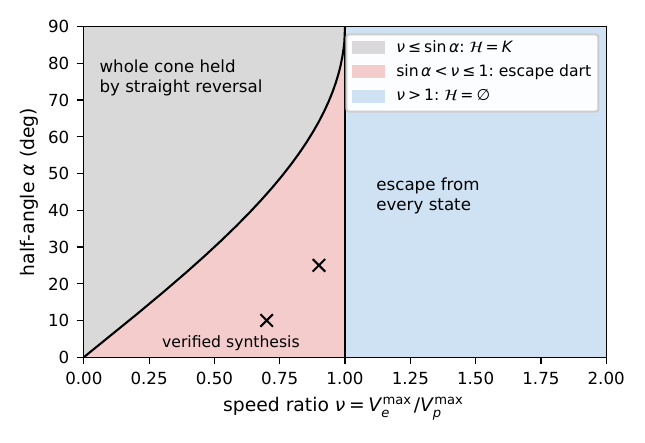}
\caption{Game of kind. For $\nu\le\sin\alpha$ straight reversal holds the whole cone; for $\sin\alpha<\nu\le1$ escape is confined to a dart bounded by a barrier (Theorems~\ref{thm:slow} and~\ref{thm:value}); for $\nu>1$ escape is possible from every state (Proposition~\ref{prop:kind}(c)). Crosses: the examples of Figs.~\ref{fig:slow} and~\ref{fig:levelsets}.}
\label{fig:kind}
\end{figure}

\section{Synthesis for the slower or equally fast evader}\label{sec:slow}
Throughout this section $\sin\alpha<\nu\le1$, the equal-speed case included; we construct the escape region and the escape time and verify the synthesis in feedback form.

\subsection{Primary family and the axis dispersal line}
By Proposition~\ref{prop:prim}(ii), the primaries issued from $\rho_f\in(0,\rho_B)$ on $M_+$ reach the front axis, where they meet their mirror images at the same retro-time. They arrive with $\beta\in(\pi/2,\pi)$, so $\lambda_x<0$ on the right. With $W_R$ and $W_L$ the values carried by the primaries ending on $M_+$ and $M_-$, $W=\min(W_L,W_R)$ has a concave kink on the axis. At such a kink, the viscosity \emph{subsolution test} requires $H^*(\x,\mathbf p)\ge0$ for every $\mathbf p$ in the superdifferential of $W$, the segment joining the two one-sided gradients \citep{bardi1997}; with the stationary operator written as $F=-H^*$, this is the usual condition $F\le0$ at upper test functions. The kink corresponds to the evader's choice of the side through which to escape, and we call
\begin{equation}\label{eq:De}
\De=\{(0,y)^\top:\ 0<y<y_I\}
\end{equation}
the axis dispersal line of the evader, with $y_I$ given below.

\begin{proposition}[Viscosity test on the axis]\label{prop:De}
At a point of the front axis reached by a primary and its mirror image with gradients $(\pm\lambda_x,\lambda_y)$, $\lambda_y>0$, the subsolution test, $H^*\ge0$ on the segment joining the two gradients, holds for every $\lambda_y>0$ if $\nu\le1$; if $\nu>1$, it holds if and only if $\lambda_y\le 1/(\Ve-\Vp)$. In particular it holds on all of $\De$ when $\nu\le1$.
\end{proposition}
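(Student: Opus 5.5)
The approach is to evaluate $H^*$ explicitly along the segment of the superdifferential and exploit concavity, so that everything reduces to checking the midpoint of the segment. At a point $(0,y)^\top$ with $y>0$ reached by a primary and its mirror image, the two one-sided gradients are $(\pm\lambda_x,\lambda_y)$ with $\lambda_y>0$. Both are gradients of normal characteristics, so $H^*=0$ at each endpoint. The segment joining them is $\mathbf p_s=(s\lambda_x,\lambda_y)$, $s\in[-1,1]$.

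Setting $a=|\lambda_x|$ and using $x=0$, so that $\Phi_\omega=s\lambda_x y$, formula~\eqref{eq:Hstar} gives
\[
H^*(\x,\mathbf p_s)=g(|s|),\qquad g(t)=1+\Vp\lambda_y-\Ve\sqrt{t^2a^2+\lambda_y^2}+\Om a y\,t .
\]
The dependence on $s$ is only through $t=|s|\in[0,1]$, by the lateral symmetry.

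The key step is that $g$ is concave on $[0,1]$. Indeed, $t\mapsto\sqrt{t^2a^2+\lambda_y^2}$ is convex, being the Euclidean norm of an affine map of $t$, and it enters with the negative coefficient $-\Ve$. The remaining terms are affine in $t$. A concave function on an interval attains its minimum at an endpoint. Hence
\[
\min_{s\in[-1,1]}H^*(\x,\mathbf p_s)=\min\{g(0),g(1)\}.
\]
Since $g(1)=H^*$ at the one-sided gradients, $g(1)=0$. The test $H^*\ge0$ on the segment is therefore equivalent to $g(0)\ge0$, that is, $1-(\Ve-\Vp)\lambda_y\ge0$.

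For $\nu\le1$, we have $\Ve-\Vp\le0$, so $g(0)\ge1>0$ for every $\lambda_y>0$. For $\nu>1$, the inequality $g(0)\ge0$ reads $\lambda_y\le1/(\Ve-\Vp)$, and this condition is also necessary, because $g(0)$ is itself the value of $H^*$ at the midpoint $s=0$ of the segment.

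It remains to confirm that the hypotheses hold on $\De$ when $\nu\le1$. By Proposition~\ref{prop:prim}(ii), every retained primary reaches the axis with $\lambda_y=\Lambda\sin\theta>0$. By the $x\mapsto-x$ symmetry, its mirror image arrives at the same point with gradient $(-\lambda_x,\lambda_y)$. The general statement then applies at every point of $\De$.

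The only delicate point is recognizing that $H^*$ is not concave in $\mathbf p$ as a whole: the term $\Om|\Phi_\omega|$ is convex. Restricted to this particular segment, however, that term is linear in $|s|$, and this is what makes the endpoint reduction work.
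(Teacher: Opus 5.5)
Your proof is correct and follows the paper's own argument. You restrict $H^*$ to the segment $(\xi\lambda_x,\lambda_y)$ and use evenness in $\xi$ and concavity on $[0,1]$, which leaves only the endpoints $\xi=1$, where $H^*=0$, and $\xi=0$, where $H^*=1+(\Vp-\Ve)\lambda_y$; the fact that every point of $\De$ is reached by a primary and its mirror image comes from the sweep of axis contacts in Proposition~\ref{prop:Wform}(c), which you could cite alongside Proposition~\ref{prop:prim}(ii).
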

\begin{proof}
At $x=0$, $\Phi_\omega=\lambda_xy$. On the segment $\mathbf p=(\xi\lambda_x,\lambda_y)$, $|\xi|\le1$, $H^*(\xi)=1-\Ve\sqrt{\xi^2\lambda_x^2+\lambda_y^2}+\Vp\lambda_y+\Om|\xi\lambda_x|y$ is even and concave in $\xi$ on $[0,1]$, so, by symmetry in $\xi$, its minimum over the segment is attained at $\xi=1$, where it vanishes, or at $\xi=0$, where it equals $1+(\Vp-\Ve)\lambda_y$.
\end{proof}

\subsection{The barrier and the escape region}
The abnormal characteristic issued from $B_+$ with $\boldsymbol\lambda\propto(-\cos\alpha,\sin\alpha)^\top$ satisfies $H_A^*=-\Ve\Lambda+\Vp|\lambda_y|+\Om|\Phi_\omega|=0$; we call it the \emph{$\BUP$ characteristic}. By~\eqref{eq:retro}, it follows the same state path as the limit of~\eqref{eq:primary} as $\rho_f\uparrow\rho_B$, i.e., \eqref{eq:primary} with $\psi=r(\nu-\sin\theta)$. Let $d=y\sin\alpha-x\cos\alpha$ be its distance from $M_+$. Then $d(0)=d'(0)=0$ and
\begin{equation}\label{eq:dpp}
d''(0)=\Vp\Om\cos\alpha>0,
\end{equation}
so the barrier always departs into the cone.

\begin{proposition}[Closure of the barrier]\label{prop:closure}
Let $\sin\alpha<\nu\le1$. The barrier $\Bbar_+$, the $\BUP$ characteristic issued from $B_+$, reaches the front axis at the first root $\theta_I\in(\alpha,\arcsin\nu)$ of
\begin{equation}\label{eq:FI}
\Gamma(\theta)=\nu\sin\theta-1+\cos\alpha\cos\theta-\nu(\theta-\alpha)\cos\theta,
\end{equation}
at the point $(0,y_I)$ with
\begin{equation}\label{eq:yI}
y_I=r\,\frac{\nu-\sin\theta_I}{\cos\theta_I},\qquad \tau_I=\frac{\theta_I-\alpha}{\Om},
\end{equation}
before any switch of the pursuer's controls. Moreover $\tau_I<\tau_c=r\cos\alpha/\Ve$, so $q=r\cos\alpha-\Ve\tau$ remains positive on the retained barrier; $\tau_c$ is the retro-time of the only cusp of the constant-control formula~\eqref{eq:primary} with $\rho_f=\rho_B$. We write $\Bbar_-$ for its mirror image, issued from $B_-$, and $\Bbar=\Bbar_+\cup\Bbar_-$.
\end{proposition}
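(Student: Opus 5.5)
Throughout we work in normalized units ($\Vp=\Om=r=1$, $\Ve=\nu$), since~\eqref{eq:scale} converts back to $y_I$ and $\tau_I$ in physical units. We parametrize $\Bbar_+$ by $\theta=\alpha+\tau$. As stated before the proposition, it follows~\eqref{eq:primary} with $\rho_f=\rho_B$, that is $\psi=\nu-\sin\theta$ and $\eta=\cos\alpha-\cos\theta-\nu(\theta-\alpha)$, valid as long as the controls $(-1,-1)$ are not switched.

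The first step is to substitute these into $x=\psi\sin\theta+\eta\cos\theta$. Using $\sin^2\theta+\cos^2\theta=1$ this gives exactly $x(\theta)=\Gamma(\theta)$, so the axis contacts of $\Bbar_+$ are the zeros of $\Gamma$. At a zero, $\psi\sin\theta=-\eta\cos\theta$, and hence $y=\psi\cos\theta-\eta\sin\theta=\psi/\cos\theta=(\nu-\sin\theta)/\cos\theta$. This is~\eqref{eq:yI}, and $\tau_I=\theta_I-\alpha$ follows from the parametrization.

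The second step locates the first root. We have $\Gamma(\alpha)=\nu\sin\alpha-\sin^2\alpha=\sin\alpha(\nu-\sin\alpha)>0$ and
\[
\Gamma'(\theta)=\sin\theta\,\bigl(\nu(\theta-\alpha)-\cos\alpha\bigr).
\]
So $\Gamma$ decreases while $\nu(\theta-\alpha)<\cos\alpha$ and increases afterwards. Let $\theta_\nu=\arcsin\nu$ and $c=\cos\theta_\nu=\sqrt{1-\nu^2}$. Then
\[
\Gamma(\theta_\nu)=c\bigl(\cos\alpha-c-\nu(\theta_\nu-\alpha)\bigr),
\]
and $\cos\alpha-c=\int_\alpha^{\theta_\nu}\sin s\,ds<\nu(\theta_\nu-\alpha)$ because $\sin s<\nu$ on $(\alpha,\theta_\nu)$. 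Hence $\Gamma(\theta_\nu)<0$ for $\nu<1$, and the intermediate value theorem gives a first root in $(\alpha,\theta_\nu)$.

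The equal-speed case $\nu=1$ is the delicate point, because there $\Gamma(\pi/2)=0$ and the root could sit at the endpoint. We handle it through the unimodal shape of $\Gamma$. The derivative $\Gamma'$ changes sign at $\theta=\alpha+\cos\alpha$, which lies below $\pi/2$ because $\pi/2-\alpha>\sin(\pi/2-\alpha)=\cos\alpha$. So $\Gamma$ is strictly increasing to $0$ just before $\pi/2$, hence negative there, and it again has an interior first root.

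The third step checks that no control switch occurs before $\theta_I$. We have $\psi=\nu-\sin\theta>0$ for $\theta<\theta_\nu$, so there is no steering switch, and the speed switch occurs only at $\theta=\pi$. Also $x=\Gamma>0$ on $[\alpha,\theta_I)$, so the arc stays on the right before contact. The departure into the cone is~\eqref{eq:dpp}. Positivity of $y_I$ follows from $\sin\theta_I<\nu$ and $\cos\theta_I>0$.

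The fourth step proves $\tau_I<\tau_c$, which is equivalent to $q(\theta_I)=\cos\alpha-\nu(\theta_I-\alpha)>0$. Rewriting $\Gamma(\theta_I)=0$ gives
\[
\cos\theta_I\,q(\theta_I)=1-\nu\sin\theta_I>0,
\]
since $\nu\le1$ and $\sin\theta_I<1$. Together with $\cos\theta_I>0$ this gives $q(\theta_I)>0$, and since $q$ is decreasing, $q>0$ on the whole retained arc.

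For the cusp statement, we compute the velocity of~\eqref{eq:primary} with $\psi=\nu-\sin\theta$, using $x',y'$ from the proof of Proposition~\ref{prop:prim}. Then $x'=-\eta\sin\theta-\sin\theta\cos\theta$ and $y'=-\cos^2\theta-\eta\cos\theta$ (modulo signs to be rechecked), so the velocity is $-(\eta+\cos\theta)(\sin\theta,\cos\theta)$. Since $\eta+\cos\theta=q$, it vanishes only where $q=0$, i.e.\ at $\tau=\tau_c$. I expect this sign bookkeeping, and the $\nu=1$ endpoint case above, to be the only real obstacles; the rest is direct substitution.
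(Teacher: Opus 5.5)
Your proposal is correct and follows the paper's proof in its main steps: the identity $x=r\Gamma(\theta)$ along the barrier, the sign change of $\Gamma$ on $(\alpha,\arcsin\nu)$ for $\nu<1$, the unimodality argument with $\Gamma(\pi/2)=0$ for $\nu=1$, the absence of switches before $\theta_I$, and the cusp computation, where your signs are right: $(x',y')=-q(\sin\theta,\cos\theta)$. The one difference is the bound $\tau_I<\tau_c$. The paper splits into the cases $\theta_c\ge\arcsin\nu$ and $\theta_c<\pi/2$ and uses the monotonicity of $\Gamma$ on $(\alpha,\theta_c)$ together with $\Gamma(\theta_c)=\nu\sin\theta_c-1<0$; you instead read $q(\theta_I)=(1-\nu\sin\theta_I)/\cos\theta_I>0$ directly off the root equation $\Gamma=\nu\sin\theta-1+q\cos\theta$, which uses the same identity more economically and needs no case distinction.
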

\begin{proof}
With $\psi=r(\nu-\sin\theta)$ and $\eta=r(\cos\alpha-\cos\theta-\nu(\theta-\alpha))$, using $\sin^2\theta+\cos^2\theta=1$, the abscissa along the barrier is $x=\psi\sin\theta+\eta\cos\theta=r\Gamma(\theta)$ identically, so the axis contacts are the roots of $\Gamma$. At a root, $\eta=-\psi\tan\theta$ and $y=\psi\cos\theta-\eta\sin\theta=\psi/\cos\theta$. Here $\Gamma(\alpha)=\sin\alpha(\nu-\sin\alpha)>0$. For $\nu<1$, at $\theta_\omega=\arcsin\nu$ we have $\psi=0$ and $\eta=r\int_\alpha^{\theta_\omega}(\sin s-\nu)\,ds<0$, so $r\Gamma(\theta_\omega)=\eta\cos\theta_\omega<0$, and $\Gamma$ has a root in $(\alpha,\arcsin\nu)$. For $\nu=1$ that argument gives only $\Gamma(\pi/2)=0$, because $\theta_\omega=\pi/2$ and $\cos\theta_\omega=0$. Instead, $\Gamma'(\theta)=\sin\theta\,[\nu(\theta-\alpha)-\cos\alpha]$, so $\Gamma$ decreases on $(\alpha,\theta_c)$ and increases on $(\theta_c,\pi/2]$, where $\theta_c=\alpha+\cos\alpha/\nu$. For $\nu=1$, $\theta_c<\pi/2$ because $\pi/2-\alpha-\cos\alpha>0$ on $(0,\pi/2)$, and $\Gamma(\pi/2)=\nu-1=0$; hence $\Gamma(\theta_c)<0$, and $\Gamma$ has exactly one root in $(\alpha,\pi/2)$, which lies in $(\alpha,\theta_c)$. Along the barrier, $(x',y')=-\Om q(\sin\theta,\cos\theta)$ with $q=r\cos\alpha-\Ve\tau$, so the constant-control formula is regular for $\tau<\tau_c=r\cos\alpha/\Ve$, i.e., $\theta<\theta_c$, and its only cusp is at $\tau_c$. (Beyond the steering switch at $\theta=\arcsin\nu$ the continued characteristic follows a different control arc; the retained barrier ends at $\tau_I$, before that switch.) Finally $\tau_I<\tau_c$: if $\theta_c\ge\arcsin\nu$ (possible only for $\nu<1$), this follows from $\theta_I<\arcsin\nu$; otherwise $\theta_c<\pi/2$, $\Gamma$ decreases on $(\alpha,\theta_c)$ and $\Gamma(\theta_c)=\nu\sin\theta_c-1<0$, so $\theta_I<\theta_c$.
\end{proof}

The barrier, its mirror image and the two $\NUP$ rays bound a closed region of $K$, which contains $\Bbar$, $B_\pm$ and the $\NUP$. For $\sin\alpha<\nu\le1$ we take this region as the definition of the holding region $\Hset$ and set $W=+\infty$ on it; its game-theoretic meaning is established afterwards, by Theorems~\ref{thm:slow} and~\ref{thm:value}. The escape region is the dart
\begin{equation}\label{eq:esc}
\Eset=K\setminus\Hset=\text{region enclosed by }\UP_\pm,\ \Bbar_\pm,
\end{equation}
which contains the $\UP$, the apex and the axis segment $\De$, but not the barrier. $W$ jumps from finite values to $+\infty$ across $\Bbar$. Figure~\ref{fig:slow} shows two examples. In normalized units, $(\rho_B,y_I,\tau_I)=(0.5264,0.2516,0.3252)$ for $(\nu,\alpha)=(0.7,10^\circ)$ and $(0.4774,0.1695,0.4874)$ for $(0.9,25^\circ)$.

\begin{figure}[!htbp]
\centering
\includegraphics[width=\textwidth]{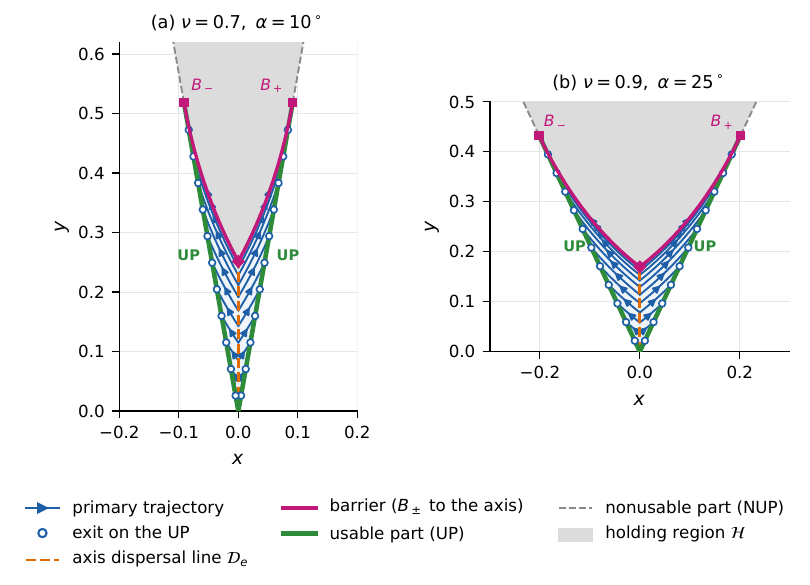}
\caption{Characteristic synthesis for the slower evader, verified in feedback form by Theorem~\ref{thm:slow}. (a) $\nu=0.7$, $\alpha=10^\circ$; (b) $\nu=0.9$, $\alpha=25^\circ$. Blue: primary trajectories, with arrowheads in forward time and open circles at their exits on the UP (green); magenta: barrier from the BUP points $B_\pm$ (squares) closing on the axis at $(0,y_I)$ (diamond); orange dashed: the evader's axis dispersal line $\De$; gray dashed: NUP; gray: holding region $\Hset$ ($W=\infty$).}
\label{fig:slow}
\end{figure}

\begin{proposition}[Escape time from a scalar equation]\label{prop:Wform}
Let $\sin\alpha<\nu\le1$ and let $\x=(x,y)$, $x\ge0$, lie in the right half of the escape region, but not on the $\UP$ or at the apex, where $W=0$. Write $\theta(\tau)=\alpha+\Om\tau$ and
\begin{equation}\label{eq:Gscalar}
G(\tau;\x)=(x+r)\cos\theta-y\sin\theta-r\cos\alpha+\Ve\tau .
\end{equation}
Then $W(\x)$ is the first positive root of $G(\cdot;\x)$, which is also its only root with $G_\tau>0$ on $(0,\tau_I)$. The launch parameter of the primary through $\x$ is
\begin{equation}\label{eq:rhof}
\rho_f=(x+r)\sin\theta+y\cos\theta-r\sin\alpha,
\end{equation}
with $\theta=\alpha+\Om W(\x)$. Moreover $G_\tau=\Om(\rho_B-\rho_f)>0$, so the root defines a smooth right-branch escape time $W_R$ on a neighbourhood of the right half of the escape region, with $\nabla W_R=\Lambda_f(-\cos\theta,\sin\theta)$, the costate~\eqref{eq:lamf}--\eqref{eq:Lamf} carried along the primary; $W=W_R$ for $x\ge0$, so $\nabla W=\nabla W_R$ for $x>0$. On the axis $W$ is not differentiable: the reflected left branch has gradient $\Lambda_f(\cos\theta,\sin\theta)$. Finally, $0<W(\x)<\tau_I$.
\end{proposition}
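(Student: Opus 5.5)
The plan is to invert the closed-form primary family~\eqref{eq:primary} by eliminating $\rho_f$. I would work in normalized units, $r=\Om=\Vp=1$, $\Ve=\nu$, and set $\theta=\alpha+\tau$.

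\emph{Step 1: the scalar equation and the launch parameter.} From~\eqref{eq:primary}, $\eta=x\cos\theta-y\sin\theta$ and $\psi=x\sin\theta+y\cos\theta$. The identity $\eta=\cos\alpha-\cos\theta-\nu\tau$ does not involve $\rho_f$, and it is exactly $G(\tau;\x)=0$. The identity $\psi=\rho_f-\sin\theta+\sin\alpha$ then gives~\eqref{eq:rhof}. Hence a point $\x$ lies on the primary launched at $\rho_f$ at retro-time $\tau$ iff $G(\tau;\x)=0$ and $\rho_f$ is given by~\eqref{eq:rhof}.

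\emph{Step 2: derivative of $G$.} Differentiating gives
\[
G_\tau=-(x+1)\sin\theta-y\cos\theta+\nu=\nu-\sin\alpha-\rho_f(\tau;\x),
\]
where $\rho_f(\tau;\x)$ denotes~\eqref{eq:rhof} evaluated at $\tau$. At a root this equals $\rho_B-\rho_f$, matching the Jacobian in Proposition~\ref{prop:prim}(i).

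\emph{Step 3: $\Eset\cap\{x\ge0\}$ is covered by retained primaries.} By Proposition~\ref{prop:prim}, the map $(\rho_f,\tau)\mapsto\x$ restricted to retained primaries is a local diffeomorphism. Its domain is bounded by the $\UP$ ($\tau=0$), the barrier ($\rho_f=\rho_B$, $\tau\le\tau_I$), the apex ($\rho_f\to0$), and the axis contact curve. Its boundary image is therefore exactly $\partial(\Eset\cap\{x\ge0\})$, so a degree/properness argument shows the map is a diffeomorphism onto that set. I would check injectivity directly: for fixed $\x$, two roots would need $G_\tau>0$ at both, which is impossible without an intermediate root where $G_\tau\le0$.

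\emph{Step 4: first root and the bound $W<\tau_I$.} This is the main obstacle. One must show that the retained root $W$ is the \emph{first} positive root, i.e.\ $G(\cdot;\x)\ne0$ on $(0,W)$. I would argue that $G(0;\x)=x\cos\alpha-y\sin\alpha=-d\le0$ inside $K$, with equality only on the ray. Then I would show $G<0$ on $(0,W)$ as follows. Let $\tau_0<W$ be a root and set $\x_0=\x$. The value $\rho_f(\tau_0;\x)$ cannot lie in $(0,\rho_B)$ with the primary retained, by uniqueness in Step 3. Otherwise the point would lie on a primary continued past its axis contact, or on one with $\rho_f\notin(0,\rho_B)$. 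The second case would put $\x$ outside the dart, beyond the barrier or the ray, contradicting $\x\in\Eset$; this uses the ordering of the family in $\rho_f$. The first case contradicts that the axis contact comes after the point for $x\ge0$.

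The same monotonicity gives $W<\tau_I$. Along the curve $\rho_f=$ const, the retro-time at the axis contact is increasing in $\rho_f$ and tends to $\tau_I$ as $\rho_f\uparrow\rho_B$. This follows from implicit differentiation and the positive Jacobian. Consequently $G_\tau>0$ on the root within $(0,\tau_I)$. Uniqueness among roots with $G_\tau>0$ in $(0,\tau_I)$ follows from Step 3.

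\emph{Step 5: smoothness and gradient.} The implicit function theorem applied to $G=0$ with $G_\tau\ne0$ gives a smooth $W_R$ on a neighbourhood, with
\[
\nabla W_R=-\nabla_\x G/G_\tau=\frac{(-\cos\theta,\sin\theta)}{\rho_B-\rho_f}=\Lambda_f(-\cos\theta,\sin\theta),
\]
by~\eqref{eq:Lamf}. This is the rotated terminal costate, $\boldsymbol\lambda=-\Lambda(\sin\beta,\cos\beta)$ with $\beta=\theta+\pi/2$. Reflecting $x\mapsto-x$ gives the left branch with gradient $\Lambda_f(\cos\theta,\sin\theta)$. On the axis the two gradients differ because $\cos\theta\ne0$ for $\theta<\pi/2$, so $W$ is not differentiable there.
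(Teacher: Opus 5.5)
Your Steps~1, 2 and~5 match the paper's first step (eliminating $\rho_f$, $G_\tau=\rho_B-\rho_f$, implicit function theorem). The central claim is not proved, however: that the regular root is unique and is the \emph{first} positive root, so that retained primaries neither overlap nor pass through $\Bbar_+$.

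In Step~3 you show only that two roots with $G_\tau>0$ must enclose a root with $G_\tau\le0$. You never exclude such an intermediate root, so nothing is contradicted.

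Step~4 then disposes of an earlier root $\tau_0<W$ with label $\rho_f(\tau_0;\x)\notin(0,\rho_B)$ by asserting that such a label puts $\x$ beyond the barrier or the ray. That assertion is false. Since $\partial\x/\partial\rho_f=(\sin\theta,\cos\theta)^\top$ is parallel to the tangent of $\Bbar_+$ at retro-time $\tau$, the lines $\tau=\mathrm{const}$ of the constant-control family are tangent lines of the barrier. The barrier is therefore the fold (envelope) of the family, and the curves with $\rho_f>\rho_B$ fold back onto the escape side. Concretely, in normalized units take $P\in\Bbar_+\setminus\{B_+,c\}$ and $\theta_P=\alpha+\tau_P$:
\begin{itemize}
\item $G(\cdot;P)$ has a double root at $\tau_P$ with $G_{\tau\tau}=-q(\tau_P)<0$.
\item For $\x=P+\epsilon(\cos\theta_P,-\sin\theta_P)^\top\in\Eset$ one gets $G(\tau_P;\x)=\epsilon>0$.
\item Hence $G(\cdot;\x)$ has two roots in $(0,\tau_I)$ near $\tau_P$: the primary, with $G_\tau>0$, and a second root with $G_\tau<0$, i.e.\ label $\rho_f>\rho_B$.
\end{itemize}
So points of the dart do lie on constant-control curves with labels outside $(0,\rho_B)$. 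What must be shown is that such roots come after $W$, and no ordering of the family in $\rho_f$ gives that.

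The missing idea is concavity of $G$ on the window where roots can occur. Write $(x+1)\cos\theta-y\sin\theta=R_P\cos(\theta+\gamma)$ with $R_P=\sqrt{(x+1)^2+y^2}$ and $\gamma=\operatorname{atan2}(y,x+1)\in(0,\pi/2)$. A root in $(0,\tau_I)$ satisfies $R_P\cos(\theta+\gamma)=q(\tau)=\cos\alpha-\nu\tau$. This is positive because $\tau_I<\tau_c$ (Proposition~\ref{prop:closure}). So all such roots lie in the initial interval where $\theta+\gamma<\pi/2$, and there $G_{\tau\tau}=-R_P\cos(\theta+\gamma)<0$. Since $G(0;\x)<0$, strict concavity allows at most one root with positive slope, and it is the first positive root. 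At a barrier point the root with $G_\tau=0$ is the maximum of $G$, so no retained primary meets $\Bbar_+$; this is exactly what your covering argument in Step~3 needs.

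An alternative repair is a genuine degree count on the boundary of the parameter strip, which gives bijectivity. Then the first root $\tau_1$ has $G_\tau(\tau_1)\ge0$, and equality would place $\x$ on $\Bbar_+$. Your case analysis does not do this.

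Finally, neither monotonicity of the axis-contact time nor $\tau_D\to\tau_I$ follows from the positive Jacobian alone. Implicit differentiation needs the sign of $x_\tau$. That sign comes from $x=x_B-g\sin\theta$ with $g=\rho_B-\rho_f$, giving $x_\tau=-g\cos\theta-q\sin\theta<0$ on $[0,\tau_I]$. The same computation shows each primary crosses the axis exactly once before $\tau_I$, hence $W<\tau_I$.
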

\begin{proof}
We use normalized units, in which $G_\tau=\rho_B-\rho_f$ below reads $\Om(\rho_B-\rho_f)$ in dimensional form. The proof has four steps.

\emph{(a) Scalar representation and local regularity.} By~\eqref{eq:primary}, a point of the primary with label $\rho_f$ at retro-time $\tau$ satisfies $x\cos\theta-y\sin\theta=\eta$ and $x\sin\theta+y\cos\theta=\psi$. With $\eta=\cos\alpha-\cos\theta-\nu\tau$, the first relation is $G(\tau;\x)=0$; with $\psi=\rho_f-\sin\theta+\sin\alpha$, the second is~\eqref{eq:rhof}. Differentiating~\eqref{eq:Gscalar}, $G_\tau=\nu-(x+1)\sin\theta-y\cos\theta=\rho_B-\rho_f$ at a root. Where $G_\tau>0$, the implicit function theorem makes the root a smooth function $W_R$ of $\x$, also across the axis, with $\nabla W_R=-\nabla_{\x}G/G_\tau=(-\cos\theta,\sin\theta)/(\rho_B-\rho_f)$, which is the costate~\eqref{eq:lamf}--\eqref{eq:Lamf} carried along the primary.

\emph{(b) Uniqueness of the regular root.} Write $(x+1)\cos\theta-y\sin\theta=R_P\cos(\theta+\gamma)$ with $R_P=\sqrt{(x+1)^2+y^2}$ and $\gamma=\operatorname{atan2}(y,x+1)\in(0,\pi/2)$. We consider $0<\tau<\tau_I$; since $\tau_I<\tau_c=\cos\alpha/\nu$ (Proposition~\ref{prop:closure}), $q(\tau)=\cos\alpha-\nu\tau>0$ there. A root satisfies $R_P\cos(\theta+\gamma)=q(\tau)>0$, so every root in this interval lies in the initial interval of $\tau$ on which $\theta+\gamma<\pi/2$, and there $G_{\tau\tau}=-R_P\cos(\theta+\gamma)<0$. Since $G(0;\x)=x\cos\alpha-y\sin\alpha<0$ inside the cone, $G$ is concave and starts negative on that interval, so it has at most one root with positive slope, and that root is its first positive root. Two primaries of the same family therefore cannot pass through the same point. A point of $\Bbar_+$ carries no regular root: there the root carried by the barrier has $\rho_f=\rho_B$, i.e., $G_\tau=0$, so by concavity it is the maximum of $G$.

\emph{(c) Axis contact before $\tau_I$.} Let $g=\rho_B-\rho_f>0$ and $q(\tau)=\cos\alpha-\nu\tau$. By~\eqref{eq:primary}, the primary with label $\rho_f$ is
\[
x=(\nu-g)\sin\theta-1+q\cos\theta=x_B-g\sin\theta,\qquad y=(\nu-g)\cos\theta-q\sin\theta,
\]
where $x_B=\Gamma(\theta)$ is the abscissa of the $\BUP$ characteristic. Hence $x_\tau=-g\cos\theta-q\sin\theta<0$ on $[0,\tau_I]$, because $q>0$ there ($\tau_I<\tau_c$, Proposition~\ref{prop:closure}) and $\theta<\pi/2$, while $x(0)=\rho_f\sin\alpha>0$ and $x(\tau_I)=-g\sin\theta_I<0$. Each primary therefore meets the axis exactly once on $(0,\tau_I)$, at a time $\tau_D(\rho_f)$. By the implicit function theorem $\tau_D$ is smooth, with $d\tau_D/d\rho_f=\sin\theta/(g\cos\theta+q\sin\theta)>0$, and $\tau_D\to0$ as $\rho_f\to0$, $\tau_D\to\tau_I$ as $\rho_f\to\rho_B$. A retained primary, stopped at its axis contact, carries values $W<\tau_I$. The contact height $y_D=(\cos\theta-\cos\alpha+\nu\tau)/\sin\theta$ increases along the contact curve, with $y_D'(\tau)=g/\sin\theta>0$, so the contacts sweep $\De$ from the apex to $c$.

\emph{(d) Coverage.} Let $D_+$ be the right half of the escape region, the open region bounded by $\UP_+$, $\De$ and the arc of $\Bbar_+$ from $B_+$ to $(0,y_I)$. Let $\Sigma=\{(\rho_f,\tau):0<\rho_f<\rho_B,\ 0<\tau<\tau_D(\rho_f)\}$, where $\tau_D$ is the axis-contact time, and let $\Pi(\rho_f,\tau)=\x$. By Proposition~\ref{prop:prim}(i), $\Pi$ is a local diffeomorphism; by (b) it is injective; so its image $\mathcal O$ is open and connected. $\mathcal O$ lies in the open right half of $K$: the distance from $M_+$ along a primary is $d=\cos\alpha-\cos(\alpha-\tau)-\rho_f\sin\tau+\nu\tau\cos\tau=d_B+(\rho_B-\rho_f)\sin\tau$, where $d_B$, the distance along $\Bbar_+$, has $d_B'=q\sin\tau>0$ for $0<\tau\le\tau_I$, since $q>0$ there and $\tau_I<\pi/2$, so $d>0$ up to the axis contact. As the parameters approach an edge of $\Sigma$, the image approaches $\UP_+$ ($\tau\to0$), $\De$ ($\tau\to\tau_D$), $\Bbar_+$ ($\rho_f\to\rho_B$, with $\tau<\tau_I$) or the apex ($\rho_f\to0$, where $\tau_D\to0$), all of which lie on $\partial D_+$. Hence, if $\Pi(p_n)\to\x\in D_+$, the bounded parameters $p_n$ have a limit point in $\Sigma$ itself, and $\x\in\mathcal O$ by continuity: $\mathcal O\cap D_+$ is closed in $D_+$. No point of $\Bbar_+$ lies in $\mathcal O$, by (b). Since $\mathcal O$ is connected, meets $D_+$ near $\UP_+$ and does not meet $\partial D_+$, $\mathcal O\subset D_+$; since $D_+$ is connected and $\mathcal O$ is open and closed in it, $\mathcal O=D_+$.
\end{proof}

\subsection{The corner of the holding region}
At $(0,y_I)$ the two barrier branches meet with costates, normalized to unit length,
\begin{equation}\label{eq:corner}
\boldsymbol\lambda_R=(-\cos\theta_I,\sin\theta_I)^\top,\qquad \boldsymbol\lambda_L=(\cos\theta_I,\sin\theta_I)^\top,
\end{equation}
pointing into $\Hset$. The corner is convex for $\Hset$, and two tests separate the information patterns.

\begin{proposition}[Corner tests]\label{prop:corner}
(i) Per-normal (discriminating) test: for every $\mathbf p$ in the cone spanned by $\boldsymbol\lambda_R$ and $\boldsymbol\lambda_L$, $\max_{(v_p,\omega_p)}\min_{\mathbf u}\mathbf p^\top\dot\x=-\Ve\|\mathbf p\|+\Vp|p_y|+\Om|p_x|y_I\ge0$. (ii) Common-control test: $\max_{(v_p,\omega_p)}\min_{\mathbf u}\min\{\boldsymbol\lambda_R^\top\dot\x,\boldsymbol\lambda_L^\top\dot\x\}=\Vp\sin\theta_I-\Ve<0$.
\end{proposition}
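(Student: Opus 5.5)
Both tests are evaluated at the corner $(0,y_I)$, where $\dot\x=(u_x+\omega_p y_I,\ u_y-v_p)$. For any $\mathbf p$, $\mathbf p^\top\dot\x=\mathbf p^\top\mathbf u+\omega_p p_x y_I-v_p p_y$. Two facts from Proposition~\ref{prop:closure} will be used: $y_I=r(\nu-\sin\theta_I)/\cos\theta_I$ and $\theta_I<\arcsin\nu<\pi/2$. I work in normalized units and restore dimensions at the end.

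\emph{(i) Per-normal test.} For a fixed $\mathbf p$ the evader term separates, so $\min_{\mathbf u}\mathbf p^\top\mathbf u=-\Ve\|\mathbf p\|$. The remaining pursuer term is linear in $(v_p,\omega_p)$ over the bow tie. Its maximum is attained at a vertex $|v_p|=\Vp$, $|\omega_p|=\Om$ with independent signs, and equals $\Vp|p_y|+\Om|p_x|y_I$; this gives the displayed formula.

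By homogeneity, write $\mathbf p=a\boldsymbol\lambda_R+b\boldsymbol\lambda_L$ with $a,b\ge0$ and normalize $a+b=1$. This gives $\mathbf p=(\xi\cos\theta_I,\sin\theta_I)$ with $|\xi|\le1$. As in the proof of Proposition~\ref{prop:De}, set
\[
f(\xi)=-\nu\sqrt{\xi^2\cos^2\theta_I+\sin^2\theta_I}+\sin\theta_I+y_I|\xi|\cos\theta_I .
\]
This function is even and concave on $[0,1]$, so its minimum over $[-1,1]$ is attained at $\xi=0$ or $\xi=1$.
\begin{itemize}
\item At $\xi=0$: $f(0)=(1-\nu)\sin\theta_I\ge0$, since $\nu\le1$.
\item At $\xi=1$: $f(1)=-\nu+\sin\theta_I+y_I\cos\theta_I$, which vanishes by the formula for $y_I$. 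This is just $H_A^*=0$ for the abnormal $\BUP$ characteristic arriving with costate $\boldsymbol\lambda_R$.
\end{itemize}
Hence $f\ge0$ on the whole segment, and therefore on the whole cone.

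\emph{(ii) Common-control test.} Exchanging the two minima gives
\[
\min_{\mathbf u}\min_i\boldsymbol\lambda_i^\top\dot\x=\min_i\min_{\mathbf u}\boldsymbol\lambda_i^\top\dot\x=-\Ve+\min_i\boldsymbol\lambda_i^\top\mathbf w,
\]
where $\mathbf w=(\omega_p y_I,-v_p)$ is the pursuer's drift and the costates are unit vectors. Here $\boldsymbol\lambda_{R,L}^\top\mathbf w=\mp\omega_p y_I\cos\theta_I-v_p\sin\theta_I$. So $\min_i\boldsymbol\lambda_i^\top\mathbf w=-|\omega_p|y_I\cos\theta_I-v_p\sin\theta_I$. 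Any turning only lowers this, so the pursuer's best choice is $\omega_p=0$, $v_p=-\Vp$, which is admissible since $\kappa_p=0$. The value is $\Vp\sin\theta_I-\Ve$, and it is negative because $\sin\theta_I<\nu$.

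The only delicate points are using the exact value of $y_I$ at $\xi=1$ in (i), and justifying the exchange of minima in (ii). The exchange is immediate because both are minima over the evader's choices: the evader picks its velocity after seeing which face is the binding one.
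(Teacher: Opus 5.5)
Your proposal is correct and follows essentially the paper's proof. In (i) you use concavity on each half of the cone of normals with the endpoint values $(1-\nu)\sin\theta_I\ge0$ at the bisector and $0$ at the barrier normal; you parametrize along the chord $\xi\in[-1,1]$ as in Proposition~\ref{prop:De} and read off $f(1)=0$ from the closed form of $y_I$, whereas the paper uses the angle $\vartheta$, a sinusoid and semipermeability. In (ii) you let the evader answer each face with $-\Ve\boldsymbol\lambda_i$ and identify straight reversal as the pursuer's maximizer, exactly as the paper does.
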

\begin{proof}
(i) With $\mathbf p=(\sin\vartheta,\cos\vartheta)$, the expression is $\chi(\vartheta)=\Vp\cos\vartheta+\Om y_I|\sin\vartheta|-\Ve$. On each half of the cone of normals, between the bisector $\vartheta=0$ and one of the normals~\eqref{eq:corner}, $\chi$ is a sinusoid $R_0\cos(\vartheta\mp\vartheta_0)-\Ve$, with $R_0=\sqrt{(\Vp)^2+(\Om)^2y_I^2}$, $\vartheta_0=\operatorname{atan2}(\Om y_I,\Vp)\in(0,\pi/2)$ and $|\vartheta\mp\vartheta_0|<\pi/2$, hence concave. It equals $\Vp-\Ve\ge0$ at the bisector and vanishes at the normal, because each branch is semipermeable, so $\chi\ge0$ on each half. (ii) For each $i$, the evader can choose $\mathbf u=-\Ve\boldsymbol\lambda_i$, so the left side equals $-\Ve+\max_{(v_p,\omega_p)}\min_i c_i$ with $c_{R,L}=\mp\cos\theta_I\,\omega_py_I-v_p\sin\theta_I$, and $\min_ic_i=-v_p\sin\theta_I-\cos\theta_I\,y_I|\omega_p|$. The maximum is $\Vp\sin\theta_I$, attained by straight reversal, $\omega_p=0$, $v_p=-\Vp$. Finally $\sin\theta_I<\nu$ because $\theta_I<\arcsin\nu$.
\end{proof}

The two tests correspond to different orders of choice. In (i) the inward normal is specified first, and the pursuer then chooses a control that protects that face. In (ii) the pursuer must hold one control, fixed before the evader's control is known, that protects both faces at once; no such control exists. A pursuer that reacts to the evader's current control does protect both faces. In normalized units, write $S_I=\sin\theta_I$ and $C_I=\cos\theta_I$, so that $S_I+C_Iy_I=\nu$ by~\eqref{eq:yI}. At $c$, with $v_p=-1$, $\dot\x=(u_x+\omega_py_I,\ u_y+1)^\top$, and the inward normal velocities are $S_I(1+u_y)\mp C_I(u_x+\omega_py_I)$. Choose $\omega_p=\operatorname{clip}(-u_x/y_I,-1,1)$; then $|u_x+\omega_py_I|=(|u_x|-y_I)^+$, and the smaller inward normal velocity is
\[
S_I(1+u_y)-C_I(|u_x|-y_I)^+.
\]
For $|u_x|\le y_I$ it is at least $S_I(1-\nu)\ge0$; otherwise it is at least $S_I+C_Iy_I-\nu=0$, since $C_I|u_x|-S_Iu_y\le\nu$. This is the equivalent control of the Filippov relay. The failure of (ii) shows that a held control cannot certify holding at $c$; Theorem~\ref{thm:value} shows that an evader whose strategy observes the pursuer's control escapes from $c$. At $\nu=1$ the per-normal test has equality also at the bisector; for $\nu>1$ it fails there, consistent with $\Hset=\emptyset$ (Proposition~\ref{prop:kind}(c)).

\subsection{Verification}
We verify the synthesis with the relay $v_p=-\Vp$, $\omega_p=-\Om\sgn x$ of Lemma~\ref{lem:relay} and the evader's characteristic feedback, in two stages. $W$ is the remaining escape time, so along an optimal play $t+W(\x(t))$ is constant. The pursuer's relay makes $t+W(\x(t))$ nondecreasing against every evader control, i.e., $W$ decreases at most at unit rate, and the evader's feedback makes it nonincreasing against every pursuer control, as long as the state stays in the escape region, where $W$ is finite and locally Lipschitz. Entry into $\Hset$, where $W=+\infty$, is handled by invariance rather than by differentiating $W$. These feedback guarantees give $V_E\le W\le V_P$ (Corollary~\ref{cor:sandwich}). A feedback construction does not by itself provide the opposite inequalities \citep{vinter2004}; away from the corner $c$ of $\Hset$ they follow from sample-and-hold counter-controls, which are well defined against every nonanticipative strategy of the opponent (Theorem~\ref{thm:value} and Appendix~\ref{app:value}). The proofs use normalized units and forward time $t$, whereas the characteristics are parametrized by retro-time $\tau$. Theorem~\ref{thm:slow} uses two appendix results, the local Lipschitz continuity of Lemma~\ref{lem:cont}(a) and the invariance of Lemma~\ref{lem:inv}. Both rest only on Propositions~\ref{prop:closure} and~\ref{prop:Wform}, Lemma~\ref{lem:relay}, the barrier geometry of Section~\ref{sec:slow} and the boundary inequalities stated in the proof below, not on Theorem~\ref{thm:slow} or~\ref{thm:value}, so these forward references are not circular. Propositions~\ref{prop:De} and~\ref{prop:corner}(i) are consistency checks of the synthesis with the viscosity and discriminating-viability conditions; the verification does not use them.

\begin{theorem}[Slower or equally fast evader]\label{thm:slow}
Let $\sin\alpha<\nu\le1$, let $\Hset$ and $\Eset=K\setminus\Hset$ be as in~\eqref{eq:esc}, and let $W$ be the escape time of Proposition~\ref{prop:Wform} on $\Eset$, with $W=0$ on the $\UP$ and at the apex, and $W=+\infty$ on $\Hset$.
\begin{enumerate}[label=(\roman*)]
\item \emph{Pursuer's guarantee.} If the pursuer plays $v_p=-\Vp$, $\omega_p=-\Om\sgn x$, with Filippov sliding on $x=0$, then $t_f\ge W(\x_0)$ for every initial state $\x_0\in K$ and every measurable evader control; in particular, the state never leaves $K$ from $\Hset$.
\item \emph{Evader's guarantee.} Let $\x_0\in\Eset$, and let the evader play, in the right half of $\Eset$, $\mathbf u=\Ve(\cos\theta,-\sin\theta)$ with $\theta=\alpha+\Om W(\x)$, and its mirror image in the left half; on the axis it selects one side once, e.g., the right one, and applies that branch on the closed half-plane $\{x\ge0\}$, without convexification. Then $t_f\le W(\x_0)$ for every measurable pursuer control.
\end{enumerate}
In particular, every play from $\x_0\in\Eset$ in which both feedbacks are used has escape time $W(\x_0)$. The identification of $W$ with the strategy values is deferred to Theorem~\ref{thm:value}.
\end{theorem}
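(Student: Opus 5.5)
The plan is to prove both guarantees by a comparison argument along trajectories: the function $t\mapsto t+W(\x(t))$ should be nondecreasing under the relay and nonincreasing under the evader feedback for as long as the state is in $\Eset$. Entry into $\Hset$ is handled separately, by the invariance result Lemma~\ref{lem:inv}, and not by differentiating $W$. I work in normalized units and forward time. Along primaries $dt=-d\tau$, so $W$ decreases at unit rate along optimal play. By Lemma~\ref{lem:cont}(a), $W$ is locally Lipschitz on $\Eset$, so $t\mapsto W(\x(t))$ is absolutely continuous while the state stays in $\Eset$. It therefore suffices to bound its derivative at almost every $t$.

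\emph{(i) Pursuer.} By Lemma~\ref{lem:relay}, the relay produces a unique admissible trajectory for each measurable $\mathbf u$; by the remark after that lemma it keeps $y>0$ for $\nu\le1$.

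First take $t$ with $x(t)>0$. By Proposition~\ref{prop:Wform}, $W=W_R$ is smooth there with $\nabla W=\boldsymbol\lambda$ the primary costate. By Proposition~\ref{prop:prim}(ii), $\lambda_y>0$ and $\Phi_\omega<0$, so $v_p=-1$, $\omega_p=-1$ are exactly the saddle maximizers $v_p^*,\omega_p^*$. Hence
\[
1+\nabla W\cdot\dot\x\ \ge\ \min_{\mathbf u}\bigl(1+\boldsymbol\lambda^\top\mathbf u\bigr)+\lambda_y+|\Phi_\omega|=H^*=0,
\]
that is, $\tfrac{d}{dt}W\ge-1$. The case $x<0$ follows by symmetry.

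On the axis, at almost every $t$ with $x(t)=0$ the trajectory slides, so $\dot x=0$ and $\omega_p y=-u_x$. There $W$ is a smooth function of $y$ alone, and either one-sided gradient $(\pm\lambda_x,\lambda_y)$ gives the same derivative $\lambda_y(u_y+1)$. This is at least $\lambda_y(1-\nu)\ge0\ge-1$; it is the $\xi=0$ endpoint of Proposition~\ref{prop:De}, and it is where $\nu\le1$ enters.

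So $t+W(\x(t))$ is nondecreasing while the state is in $\Eset$. The state cannot leave $K$ through the $\NUP$, because the relay is inward there (Section~\ref{sec:target}). It cannot leave $K$ from $\Hset$, by Lemma~\ref{lem:inv}, which rests on the barrier's semipermeability under the relay together with the corner computation following Proposition~\ref{prop:corner}. The only exit left is through $\UP_\pm$, where $W=0$, so $t_f\ge W(\x_0)$. If the trajectory enters $\Hset$, it stays there and $t_f=\infty$.

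\emph{(ii) Evader.} In the open right half, the feedback $\mathbf u=\nu(\cos\theta,-\sin\theta)=-\nu\nabla W_R/\|\nabla W_R\|$ is the saddle minimizer. For any admissible $(v_p,\omega_p)$ it gives
\[
1+\nabla W_R\cdot\dot\x\ \le\ 1-\nu\Lambda+\Phi_v^{\max}+|\Phi_\omega|=H^*=0,
\]
so $\tfrac{d}{dt}W_R\le-1$. On the closed half-plane $\{x\ge0\}$ the evader uses only the right branch. This is legitimate because $W_R$ is smooth on a neighbourhood of the closed right half (Proposition~\ref{prop:Wform}), so along any trajectory staying in that neighbourhood $W_R(\x(t))\le W_R(\x_0)-t$. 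If the state crosses into $x<0$, the evader switches to the left branch. Since $W=\min(W_L,W_R)$ is continuous across the axis, $W(\x(t))$ remains nonincreasing at rate at least $1$ piecewise, and absolute continuity glues the pieces.

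Two points remain: that the state stays in $\Eset$, and that it actually exits once $W=0$. For the first, $W(\x(t))\le W(\x_0)-t<\tau_I$. I would argue that the trajectory cannot reach $\Bbar$. Near $\Bbar_+$ the feedback converges to the barrier's costate control, and the sign of $\boldsymbol\lambda^\top\dot\x$ pushes away from $\Hset$; alternatively, the $W$-decrease bound can be combined with a lower bound of $W$ near $\Bbar$ away from $B_\pm$. I expect this to be the delicate point, and I would try first to invoke Lemma~\ref{lem:inv} in its evader form. For the second, once $W$ reaches $0$ the state lies on $\UP_+$, where $\rho\dot\varphi\ge\nu-\rho-\sin\alpha>0$ under the feedback, or at the apex, where the analysis of Section~\ref{sec:target} applies. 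In either case a strict exit follows immediately, giving $t_f\le W(\x_0)$.

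Combining (i) and (ii) yields the equality for plays in which both feedbacks are used. The main obstacle is making the evader argument rigorous near the axis and near the barrier endpoints $B_\pm$ and $c$, where $W$ loses smoothness or blows up.
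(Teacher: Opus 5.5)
Your part (i) follows the paper: saddle relay off the axis, $\frac{d}{dt}W=W_y(u_y+1)\ge0$ while sliding, and Lemma~\ref{lem:inv} for entry into $\Hset$. Your bound $\frac{d}{dt}W_R\le-1$ in part (ii) is \eqref{eq:Wdot}.

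\textbf{The gap.} It is the step you flag yourself. You never show that under the evader's feedback the state stays in a compact part of the primary chart, away from $\Bbar$ and in particular away from $B_+$. None of your suggested routes closes this.
\begin{itemize}
\item The bound $W(\x(t))\le W(\x_0)-t$ controls only one chart coordinate. Near $P\in\Bbar_+$ one has $W\to\tau_P$ (Lemma~\ref{lem:cont}(b)), and $\tau_P\to0$ at $B_+$. So a decreasing $W$ is compatible with drifting onto the barrier. It is also compatible with $W$ reaching $0$ at $B_+$, a held point where the outward margin $\rho_B-\rho$ of \eqref{eq:saddle_ray} vanishes and no strict exit follows.
\item Semipermeability gives only $\hat{\boldsymbol\lambda}^\top\dot\x\le0$ (normal pointing into $\Hset$) under the barrier control. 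That is not a push away from $\Hset$.
\item A signed-distance Gr\"onwall argument as in Lemma~\ref{lem:inv} does not carry over. The reason is that $\|\nabla W_R\|=1/(\rho_B-\rho_f)$ blows up at the barrier, so the feedback $\theta=\alpha+W(\x)$ is not Lipschitz up to it.
\item Lemma~\ref{lem:inv} concerns relay solutions and has no evader form.
\end{itemize}
The same confinement is also what makes your closed loop well posed. The feedback is Lipschitz only on sets $\{\rho_f\le\bar\rho<\rho_B\}$.

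\textbf{The missing idea.} You need a second monotone quantity: the launch parameter $\rho_f$ of \eqref{eq:rhof}. Work in normalized units with $S=\sin\theta$, $C=\cos\theta$, $g=\rho_B-\rho_f$, $q=\cos\alpha-\nu W$ and $\theta=\alpha+W$. Then $d\rho_f=S\,dx+C\,dy+q\,dW$. Write the pursuer control as $(-1+\Delta v,-1+\Delta\omega)$ with $\Delta v,\Delta\omega\ge0$. The paper obtains \eqref{eq:rhodot}:
\begin{center}
$\dot\rho_f=-\Delta v\,(C+qS/g)+\Delta\omega\,(gC-q(\nu-S))/g\le0.$
\end{center}
The second coefficient is nonpositive because $gC-q(\nu-S)=(C-q)(\nu-1/S)-xC/S$, with $x\ge0$, $\nu S<1$ and $C-q=\int_0^W(\nu-\sin(\alpha+s))\,ds\ge0$. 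This is where $\nu\le1$ enters the evader part.

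Consequences:
\begin{itemize}
\item $\rho_f(\x(t))\le\rho_f(\x_0)<\rho_B$.
\item The state stays in a compact strip of the chart on which the feedback is Lipschitz.
\item When $W$ reaches $0$, the state is on $\UP_+$ at distance $\rho_f\le\rho_f(\x_0)$ from the apex, where the outward margin is positive.
\end{itemize}

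\textbf{Minor point.} You need not switch branches at the axis. There the right branch gives $\dot x\ge\nu C-y=[g+S(1-\nu S)]/C>0$ against every pursuer control. So the state leaves the axis at once and never returns. This settles the well-posedness of the discontinuous feedback without any convexification.
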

\begin{proof}
We use the normalized units of~\eqref{eq:scale}, $r=\Vp=\Om=1$. In the right half of the escape region let $\theta=\alpha+W$, $S=\sin\theta$, $C=\cos\theta$, and
\[
g=\rho_B-\rho_f>0,\qquad q=\cos\alpha-\nu W>0,\qquad m=\rho_f+\sin\alpha-S>0,
\]
where $q>0$ because $W<\tau_I<\tau_c$ (Propositions~\ref{prop:closure} and~\ref{prop:Wform}), and $m=\psi>0$ is the absence of a steering switch (Proposition~\ref{prop:prim}(ii)). By~\eqref{eq:primary} and Proposition~\ref{prop:Wform},
\begin{equation}\label{eq:aux}
xS+yC=m,\quad (x+1)C-yS=q,\quad \nabla W_R=\frac{(-C,S)^\top}{g},\quad m+g=\nu-S ,
\end{equation}
where $W_R$ is the smooth right-branch escape time of Proposition~\ref{prop:Wform}, equal to $W$ for $x\ge0$.
Write $v_p=-1+\Delta v$ and $\omega_p=-1+\Delta\omega$. The bow tie $|\omega_p|\le|v_p|\le1$ gives $\Delta v\ge0$ and $\Delta\omega\ge0$. By~\eqref{eq:reduced}, the velocity equals the velocity under $(-1,-1)$ plus $\Delta\mathbf f=(\Delta\omega\,y,\ -\Delta v-\Delta\omega\,x)^\top$.

\emph{(i) Pursuer.} In the open right half of the escape region the relay plays $\Delta v=\Delta\omega=0$, the saddle control of the primaries. Since $H^*=0$ with $\boldsymbol\lambda=\nabla W$, $\dot W=\nabla W^\top\dot\x\ge-1$ for every evader control. During sliding on the axis, $\dot x=0$ and $\dot W=W_y\dot y$ with $W_y=S/g>0$ and $\dot y=u_y+1\ge1-\nu\ge0$ (here $\nu\le1$ is used), so $\dot W\ge0>-1$. Since $W=\min(W_L,W_R)$ is locally Lipschitz in the escape region (Lemma~\ref{lem:cont}(a)), $W(\x(t))+t$ is nondecreasing while the state stays there. If the state leaves $K$ from the escape region at $t_f$, then $W\to0$ at the exit point, and $t_f\ge W(\x_0)$. At the apex, where Lemma~\ref{lem:relay} does not apply, $W=0$ and there is nothing to prove. If it enters $\Hset$, it remains there, because $\Hset$ is positively invariant under the relay, and $t_f=+\infty$. The boundary inequalities behind this invariance are the following. On the barrier branches the inward normal $(-\cos\theta,\sin\theta)$ gives $\min_{\mathbf u}\boldsymbol\lambda^\top\dot\x=0$, because each branch is semipermeable; on the $\NUP$ rays the inward normal velocity is at least $\rho-\rho_B>0$ by~\eqref{eq:saddle_ray}; and on the axis inside $\Hset$, sliding gives $\dot y\ge1-\nu\ge0$, while leaving the axis puts the state on a side where the saturated control satisfies that side's inequality; at the corner $c$ these one-sided estimates are combined through the distance to the continued $\BUP$ characteristic. Lemma~\ref{lem:inv} converts these inequalities into invariance by means of signed distance functions.

\emph{(ii) Evader.} Let the pursuer use any admissible control. Under the evader's feedback $\mathbf u=\nu(C,-S)$ and the pursuer control $(-1,-1)$, $\dot W=-1$, because this is the saddle pair and $H^*=0$. The deviation contributes $\nabla W^\top\Delta\mathbf f=-(\Delta v\,S+\Delta\omega(xS+yC))/g$, and~\eqref{eq:aux} gives
\begin{equation}\label{eq:Wdot}
\dot W=-1-\frac{\Delta v\,S+\Delta\omega\,m}{g}\le-1.
\end{equation}
For $\rho_f$, differentiate~\eqref{eq:rhof} with $\theta=\alpha+W$: by~\eqref{eq:aux}, $d\rho_f=S\,dx+C\,dy+q\,dW$. Along the trajectory, $S\dot x+C\dot y=q-\Delta v\,C+\Delta\omega(C-q)$, using $yS-xC=C-q$. Substituting~\eqref{eq:Wdot} and $m+g=\nu-S$,
\begin{equation}\label{eq:rhodot}
\dot\rho_f=-\Delta v\Bigl(C+\frac{qS}{g}\Bigr)+\Delta\omega\,\frac{gC-q(\nu-S)}{g}\le0.
\end{equation}
The first coefficient is negative because $C,q,S,g>0$. For the second, $gC-q(\nu-S)=(C-q)(\nu-1/S)-xC/S$, where $x\ge0$, $\nu\le1<1/S$, and $C-q=\int_0^{W}(\nu-\sin(\alpha+s))\,ds\ge0$ because $\alpha+W<\theta_I<\arcsin\nu$ (here $\nu\le1$ is used twice).

These estimates are combined in the following order. (1) By~\eqref{eq:rhodot}, $\rho_f(\x(t))\le\rho_f(\x_0)<\rho_B$, so $g$ stays bounded below and the state cannot approach the barrier through the primary chart. (2) On the axis the selected right branch gives $\dot x=\nu C+\omega_py\ge\nu C-y=[g+S(1-\nu S)]/C>0$ against every pursuer control (here $\nu S<1$), and by continuity $\dot x>0$ in a strip $\{0\le x<\varepsilon_0\}$ along the relevant part of the axis; hence the state enters the right half immediately and does not return to the axis, the evader's control is continuous along the trajectory, and no convexification arises. (3) Only then is~\eqref{eq:Wdot} integrated: $W$ reaches zero within time $W(\x_0)$, and in the closure of the chart $W=0$ only on the $\UP$ and at the apex. (4) There the outward normal velocity is positive, by~\eqref{eq:saddle_ray} on the $\UP$ and by Section~\ref{sec:target} at the apex. The chart and the branch control extend smoothly across these edges, and the evader keeps the extended control for a short time after the contact; by continuity the outward margin persists, uniformly in the pursuer control, so points outside $K$ follow the contact immediately and the contact time is the strict-exit time $t_f$. Reaching a nonusable boundary point would not suffice.
\end{proof}

\begin{corollary}[Strategy values]\label{cor:sandwich}
Let $\sin\alpha<\nu\le1$. The relay of Theorem~\ref{thm:slow} defines a nonanticipative pursuer strategy $B^*\in\mathcal S_P$, and the evader's feedback defines a nonanticipative evader strategy $A^*\in\mathcal S_E$. Hence $V_E\le W\le V_P$ on $K$; in particular $V_P=+\infty$ on $\Hset$.
\end{corollary}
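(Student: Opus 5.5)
The plan is to check the two nonanticipativity claims directly and then read off the sandwich from Theorem~\ref{thm:slow}. First I treat the pursuer. For the relay, Lemma~\ref{lem:relay} already provides the map $B^*:\mathbf u(\cdot)\mapsto(v_p,\omega_p)(\cdot)$ for initial states with $y>0$, and shows that it is measurable, takes values in the bow tie, and is nonanticipative. For $\nu\le1$ every relay solution from $K\setminus\{\mathbf 0\}$ keeps $y>0$ (the remark after Lemma~\ref{lem:relay}), so the map is defined for all time. It remains to handle the apex and the behaviour after exit. At the apex I define $B^*$ as any fixed admissible control, since $W(\mathbf 0)=0$ makes the required guarantee trivial. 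After $t_f$ I extend the control by the constant $(-\Vp,0)$; this does not affect the payoff and preserves nonanticipativity, because $t_f$ is itself a nonanticipative stopping time of $\mathbf u$.

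Next I treat the evader. For each measurable pursuer control $\mathbf a$, I define $A^*[\mathbf a]$ as the evader control generated along the closed-loop trajectory of Theorem~\ref{thm:slow}(ii), with the one-sided branch selected on the axis. Step (2) of that proof is the key fact: after at most an instant on the axis the state lies in the open right half, where $W$ is smooth and the feedback $\Ve(\cos\theta,-\sin\theta)$ with $\theta=\alpha+\Om W(\x)$ is locally Lipschitz in $\x$. By Carathéodory existence and uniqueness, the closed-loop ODE $\dot\x=\mathbf u(\x)+f(\x,\mathbf a(t))$ therefore has a unique solution up to $t_f$, and the induced control $t\mapsto\mathbf u(\x(t))$ is measurable. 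Nonanticipativity follows from uniqueness. If $\mathbf a=\mathbf b$ a.e.\ on $[0,s]$, both trajectories solve the same ODE on $[0,s]$, so they coincide there, and hence so do the controls. After $t_f$, and on $\Hset$ where no guarantee is claimed, I take an arbitrary fixed control, for instance $\mathbf u=0$.

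The sandwich then follows from the definitions. Theorem~\ref{thm:slow}(ii) gives $J(A^*[\mathbf a],\mathbf a)\le W(\x_0)$ for every $\mathbf a$, so $V_E\le\sup_{\mathbf a}J(A^*[\mathbf a],\mathbf a)\le W$ on $\Eset$; on $\Hset$, $W=+\infty$ and the inequality is trivial. Theorem~\ref{thm:slow}(i) gives $\inf_{\mathbf u}J(\mathbf u,B^*[\mathbf u])\ge W(\x_0)$, so $V_P\ge W$ on $K$, and in particular $V_P=+\infty$ on $\Hset$.

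I expect the main obstacle to be the well-posedness of the evader map: its uniqueness near the axis and near the $\UP$ edges, where the feedback is only one-sided. I would handle it through the strip estimate $\dot x>0$ on $\{0\le x<\varepsilon_0\}$ from step (2). That estimate shows the solution leaves the axis immediately and uniformly in the pursuer control. On the strip itself I would use the smooth extension of $W_R$ across the axis from Proposition~\ref{prop:Wform}, which makes the branch feedback Lipschitz on a neighbourhood, so that uniqueness holds there too. The same smooth extension of the chart across the $\UP$ covers the short continuation after contact used in step (4).
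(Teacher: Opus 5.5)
Your proposal is correct and follows essentially the same route as the paper. $B^*$ comes from Lemma~\ref{lem:relay}, with an arbitrary fixed control at the apex; $A^*$ is the control along the unique Carath\'eodory solution of the closed loop under the selected one-sided branch feedback, and its nonanticipativity follows from uniqueness; the sandwich is then read off from the two guarantees of Theorem~\ref{thm:slow}.

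The one ingredient the paper states explicitly and you leave implicit is step~(1) of Theorem~\ref{thm:slow}(ii), the nonincrease of $\rho_f$. It keeps the closed loop in a compact part of the chart with $\rho_f\le\bar\rho<\rho_B$, where the feedback is uniformly Lipschitz, so the solution exists up to exit instead of running into the barrier. Your post-$t_f$ extension of the relay is harmless but unnecessary, since $\dot y=u_y+1+|x|\ge1-\nu$ holds everywhere in the plane.
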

\begin{proof}
By Lemma~\ref{lem:relay}, $B^*[\mathbf u]$, the pursuer control along the unique relay solution, is admissible and nonanticipative, and the pursuer part of the proof of Theorem~\ref{thm:slow} applies to that solution, so $J(\mathbf u,B^*[\mathbf u])\ge W(\x_0)$ for every measurable $\mathbf u$ and $V_P\ge W$; from the apex, where $W=0$, any fixed admissible control may serve as $B^*$. For the evader, each one-sided branch $W_R,W_L$ is smooth on its characteristic chart, including its continuation across the axis, because $G_\tau=\Om(\rho_B-\rho_f)>0$ (Proposition~\ref{prop:Wform}). The escape time $W=\min(W_R,W_L)$ is locally Lipschitz and has a concave kink on $\De$. The selected branch feedback is Lipschitz on compact subsets of its chart with $\rho_f\le\bar\rho<\rho_B$. Along the closed loop, $\rho_f$ does not increase and the state does not return to the axis (proof of Theorem~\ref{thm:slow}), so for every measurable pursuer control $\mathbf a$ the closed loop has a unique Carath\'eodory solution up to the exit. Let $A^*[\mathbf a](t)$ be the feedback along it up to the contact with the $\UP$ or the apex, then the extended branch control while the state remains in the extended chart, as in step~(4) of the proof of Theorem~\ref{thm:slow}, so that points outside $K$ follow the contact immediately, and an arbitrary fixed control afterwards. Inputs that agree a.e.\ on $[0,s]$ give the same solution and the same output on $[0,s]$, so $A^*$ is nonanticipative, and $J(A^*[\mathbf a],\mathbf a)\le W(\x_0)$ by Theorem~\ref{thm:slow}(ii). On $\Hset$ the bound $V_E\le W$ is automatic, so $V_E\le W$ on $K$.
\end{proof}

\begin{theorem}[Value of the game]\label{thm:value}
Let $\sin\alpha<\nu\le1$, and let $c=(0,y_I)^\top$ be the corner of $\Hset$. Then $V_P=W$ on $K$ and $V_E=W$ on $K\setminus\{c\}$, so the game has a value on $K\setminus\{c\}$, equal to the escape time $W$ of Theorem~\ref{thm:slow}. At the corner the two strategy values differ:
\[
V_E(c)=\tau_I<V_P(c)=+\infty .
\]
For $\nu\le\sin\alpha$, $V_E=V_P=+\infty$ on $K$.
\end{theorem}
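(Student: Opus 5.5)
Corollary~\ref{cor:sandwich} already gives $V_E\le W\le V_P$ on $K$, and Proposition~\ref{prop:kind}(a) settles $\nu\le\sin\alpha$. So the plan is to prove the two reverse inequalities, $V_P\le W$ on $K$ and $V_E\ge W$ on $K\setminus\{c\}$, and then to compute $V_E(c)$ separately. Both reverse inequalities use sample-and-hold counter-controls. These are well defined against an arbitrary nonanticipative strategy of the opponent, because on each interval $[kh,(k+1)h)$ the counter-player's constant action depends only on the state at $kh$, so the pair (counter-control, strategy) generates a unique play step by step.

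\emph{$V_P\le W$.} On $\Hset$ this is trivial since $W=+\infty$. For $\x_0\in\Eset$ I fix an arbitrary pursuer strategy $B$ and let the evader hold $\mathbf u_k=\nu(C,-S)(\x(kh))$ on each step, using the branch chosen at the first axis contact. Because the bound~\eqref{eq:Wdot} is uniform in the pursuer control, and $\nabla W_R$ is Lipschitz on compact subsets of the chart with $\rho_f\le\bar\rho<\rho_B$, the per-step error is $O(h^2)$. Hence $W(\x(kh))\le W(\x_0)-kh+O(h)$. The estimate~\eqref{eq:rhodot} keeps $\rho_f$ away from $\rho_B$ up to $O(h)$, and step (2) of the proof of Theorem~\ref{thm:slow} (the strict inequality $\dot x>0$ near the axis) keeps the state on the chosen side. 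The strict outward margin on the UP and at the apex converts $W\approx0$ into a strict exit within time $O(h)$. Letting $h\to0$ gives $\inf_{\mathbf u}J(\mathbf u,B[\mathbf u])\le W(\x_0)+o(1)$, so $V_P\le W$.

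\emph{$V_E\ge W$ off $c$.} Fix an evader strategy $A$ and let the pursuer hold controls sampled from the state. In the open right half of $\Eset$ it holds the saddle control $(-1,-1)$, and since $\dot W\ge-1$ for every $\mathbf u$, $W(\x)+t$ decreases by at most $o(h)$ per step. Near the axis, and in particular on $\De$ and on the held part of the axis, a held control cannot react to $\mathbf u$, and the relay's sliding is unavailable. I would hold straight reversal $(-1,0)$ there. At a kink point of $W=\min(W_L,W_R)$ with gradients $(\pm\lambda_x,\lambda_y)$, Proposition~\ref{prop:De} shows that for every $\mathbf u$ the derivative along one of the two branches is $\ge-1$ whenever $\nu\le1$; using the concavity of the kink, I expect this to yield $\dot W\ge-1-O(h)$ for the held control. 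For states in $\Hset\setminus\{c\}$ I need a held control that keeps the state out of $\Eset$ up to $O(h)$. On the barrier and the NUP away from $c$ only one face is active and the saturated control of that side works. On the held axis above $c$, straight reversal satisfies both faces strictly, because the common-control quantity $\sin\theta-\nu$ is positive there, the barrier normals being steeper. Thus $V_E=+\infty$ on $\Hset\setminus\{c\}$.

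\emph{The corner, and the expected obstacle.} At $c$, $V_E(c)\le\tau_I$ follows from an explicit evader strategy. The evader reads the initial germ of $\mathbf a$, namely the sign of $\int_0^s\omega_p$ for small $s$, or equivalently which side the state is pushed toward, and by Proposition~\ref{prop:corner}(ii) it crosses the barrier branch that the pursuer is not protecting, entering $\Eset$ at points where $W\to\tau_I$. Against straight reversal it moves along the axis into the open dart below $c$. It then switches to $A^*$, giving $\tau_I+\varepsilon$. For $V_E(c)\ge\tau_I$, I would use the lower semicontinuity of $V_E$ along a sampled pursuer argument together with $W\to\tau_I$ at $c$ from $\Eset$. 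The main obstacle is the region near $c$, and more generally the uniformity of the sampled pursuer estimates near the axis, where a held control must replace a relay that reacts to the evader. I would first try to prove that straight reversal satisfies a one-sided Dini inequality $\dot W\ge-1$ in a strip around $\De$ and $c$, and fall back on a signed-distance argument as in Lemma~\ref{lem:inv} for the holding part.
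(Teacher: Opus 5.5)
Your bound $V_P\le W$ is the paper's argument: Lemma~\ref{lem:evs}, paired step by step with an arbitrary $B\in\mathcal S_P$. The gap is in $V_E\ge W$.

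The inequality $\dot W\ge-1-O(h)$ that you expect for held straight reversal near the axis is false, and the error does not vanish with $h$. Under $(v_p,\omega_p)=(-1,0)$ one has $\dot\x=\mathbf u+(0,1)^\top$. At a point of $\De$, the branch control $\mathbf u=\nu(C,-S)$ gives, by~\eqref{eq:aux},
\[
\dot W_R=\frac{-C\,u_x+S(1+u_y)}{g}=\frac{S-\nu}{g}=-1-\frac{m}{g},\qquad m=yC>0 .
\]
Meanwhile $\dot x=\nu C>0$, so the state enters $\{x>0\}$, where $W=W_R$. For $(\nu,\alpha)=(0.9,25^\circ)$ at the axis point with $W=0.3$, this rate is about $-2.02$.

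Proposition~\ref{prop:De} cannot repair this. It is a per-normal test: the maximizing pursuer control is $(-1,-1)$ for $\nabla W_R$ and $(-1,1)$ for $\nabla W_L$. At the concave kink $W=\min(W_L,W_R)$ it is the evader, through its direction of motion, who selects the branch. A held control is exactly the common-control situation of Proposition~\ref{prop:corner}(ii).

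The same error affects your holding claim above $c$. Along the barrier $\theta\le\theta_I<\arcsin\nu$, so straight reversal gives inward normal velocity $\sin\theta-\nu<0$ on every face; it protects no face, at $c$ or elsewhere. Above $c$ there is no face at all. What does hold under straight reversal is only $\dot W_R\ge -C\dot x/g$, since $S(1+u_y)\ge0$. A repair along your route would therefore have to integrate a loss paid by lateral motion, with a coefficient $C/g$ that is unbounded near $c$.

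The missing idea is to avoid pointwise estimates of $W$ near the axis altogether. The paper samples the relay itself, $\omega_p=-\sgn x(kh)$, and argues by contradiction in Lemma~\ref{lem:samp}:
\begin{itemize}
\item Sampled trajectories that leave $K$ before $T$ are equi-Lipschitz.
\item With $v_p=-1$ the dynamics contain no product of $\mathbf u$ and $\omega_p$, so their uniform limits are Filippov relay solutions for a weak-$*$ limit of the evader controls.
\item Along relay solutions the lower semicontinuous envelope $W_c$ makes $W_c+t$ nondecreasing. This is Lemma~\ref{lem:mono}, built on the invariance of Lemma~\ref{lem:inv} and the boundary trace of Lemma~\ref{lem:cont}(b).
\item Contact with the $\NUP$ is excluded, so the limit exits no earlier than $W_c(\x_0)$.
\item Barrier points off $c$, where $W_c$ is finite but $W_\sharp=+\infty$, are handled separately: there the sampled control is exactly $(-1,-1)$.
\end{itemize}

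The same argument yields $V_E(c)\ge\tau_I$, because $W_c(c)=\tau_I$. Your appeal to lower semicontinuity of $V_E$ points the wrong way: it would give $V_E(c)\le\liminf_{\x\to c}V_E(\x)$.

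For the upper bound at $c$, your germ-based choice of side is the paper's idea, but two steps fail as written. First, against straight reversal the state cannot descend the axis into the dart, because there $\dot y=u_y+1\ge1-\nu\ge0$; it must cross a branch laterally. Second, Proposition~\ref{prop:corner}(ii) concerns one held control and says nothing about what happens after the germ. A pursuer control that differs from $\mathbf a_R$ on every initial interval can still be arbitrarily close to it, and you give no reason why the state then actually enters $\Eset$ near $c$.

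The paper's certificate supplies this. Take $\mathbf u_R(t)=\nu(\cos(\theta_I-t),-\sin(\theta_I-t))^\top$ and $L_R(t)=G(\tau_I-t;\x(t))$. Then $L_R(0)=0$, $\dot L_R=(v+1)S+(\omega+1)\psi\ge0$, and $L_R(\tau_I)=x\cos\alpha-y\sin\alpha\le0$ in $K$. So the state leaves $K$ strictly before $\tau_I$ unless $\mathbf a=\mathbf a_R$ a.e.\ on $(0,\tau_I)$.

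The split this supports is whether $\mathbf a=\mathbf a_R$ a.e.\ on some initial interval, since $\mathbf a_R$ is the only control that defeats $\mathbf u_R$. A rule based on the sign of $\int_0^s\omega_p$ must at least be made well defined, because that sign need not stabilize as $s\downarrow0$. With the certificate, no switch to $A^*$ and no $\varepsilon$ are needed: $J<\tau_I$ for every $\mathbf a$.
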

\begin{proof}
For $\nu\le\sin\alpha$, the constant control of Proposition~\ref{prop:kind}(a) is a pursuer strategy and also defeats every evader strategy. For $\sin\alpha<\nu\le1$ the proof is given in Appendix~\ref{app:value}; we outline it. Against a pursuer strategy, the evader holds the feedback of Theorem~\ref{thm:slow} constant over short sampling periods, which costs $O(h)$ in the decrease of $W$ and gives $V_P\le W$. Against an evader strategy, the pursuer samples its relay. As the period tends to zero, the sampled trajectories converge to relay solutions of Lemma~\ref{lem:relay}, because the dynamics contain no product of the two players' controls, and along relay solutions the lower semicontinuous envelope of $W$ decreases at most at unit rate; this gives $V_E\ge W$ off $c$ and $V_E(c)\ge\tau_I$. At $c$, the evader chooses the barrier branch to cross from the germ at $t=0$ of the pursuer's control and follows that branch's characteristic control; the scalar equation~\eqref{eq:Gscalar} serves as an exact certificate that the state leaves $K$ before $\tau_I$ (Lemma~\ref{lem:corner}).
\end{proof}

The corner strategy of Lemma~\ref{lem:corner} crosses the left branch if the pursuer begins with the control $(-\Vp,-\Om)$ that protects the right branch, and the right branch otherwise. Its choice depends on the pursuer's control on every interval $(0,s)$, $s>0$, however short, which nonanticipative strategies without delay allow; a strategy with a positive delay cannot make it. The relay and this strategy admit no consistent pair of controls from $c$: against an attack on the right branch the relay plays $(-\Vp,-\Om)$ at once, which selects the attack on the left branch, and conversely. The per-normal test of Proposition~\ref{prop:corner}(i) gives discriminating viability \citep{cardaliaguet1996}, which against an opponent's strategy holds only approximately; at $c$ the approximation cannot be made exact. Off $c$, Lemma~\ref{lem:samp} supplies any finite guarantee below $W$, with a sampling period depending on the initial state. For $\nu\le1$, Proposition~\ref{prop:singular} and the unique primary representation exclude further singular surfaces in the verified synthesis.

\begin{remark}[The equal-speed case]\label{rem:nu1}
For $\nu=1$, the sliding bound $\dot y\ge\Vp-\Ve$ and the per-normal corner test at the bisector become equalities. With $\mathbf u=(0,-\Ve)$ the relay keeps a held axis state fixed; the verification requires only nonstrict inequalities. For $\alpha=30^\circ$, in normalized units, $y_I=0.18533$ and $\tau_I=0.68069$.

\end{remark}

\begin{remark}[Information pattern]\label{rem:pattern}
Lemmas~\ref{lem:samp} and~\ref{lem:evs} use only sampled state feedback: on $[kh,(k+1)h)$ each player's control is fixed by $\x(kh)$, without knowledge of the opponent's current control. This is the information pattern of \citet{lewin1979conic}, made precise in the spirit of the positional strategies of \citet{krasovskii1988}. Each guarantee holds against every measurable control of the opponent, and therefore whatever the opponent observes. Hence, at every $\x_0\in K\setminus\{c\}$, $W(\x_0)$ is the escape time that each player can guarantee to arbitrary precision without observing the opponent's control: the sampled relay keeps the state in $K$ on $[0,T]$ for every $T<W(\x_0)$, and from $\Eset$ the sampled evader feedback forces exit by $W(\x_0)+\varepsilon$, with sampling periods depending on $\x_0$, $T$ and $\varepsilon$. Only the corner depends on the information pattern. There the sampled relay still guarantees every horizon $T<\tau_I$, whereas the evader strategy of Lemma~\ref{lem:corner} reads the germ of the pursuer's control and is not a sampled state feedback; whether a sampled evader can leave $K$ from $c$ by $\tau_I+\varepsilon$ is open.
\end{remark}

\section{Numerical comparison}\label{sec:num}
We compare the closed-form escape time $W$ of Theorem~\ref{thm:slow} with an independent numerical approximation of the reach-time equation $\partial_t\zeta=\min\{0,\tilde H(\x,\nabla\zeta)\}$, $\tilde H(\x,\mathbf p)=-\Ve\|\mathbf p\|+\Vp|p_y|+\Om|p_xy-p_yx|$, initialized by the signed distance to the cone boundary, positive inside $K$ \citep{mitchell2005}. It is computed with fifth-order WENO differences \citep{jiang2000}, a local Lax--Friedrichs flux and third-order TVD Runge--Kutta time stepping \citep{shu1988} at CFL number $0.45$. The computed escape time $T$ is the first $t$ with $\zeta<0$, which mirrors the strict-exit convention; it is obtained by linear interpolation of $\zeta$ within the time step in which $\zeta$ first becomes negative. Outside the computational box, $\zeta$ is extrapolated linearly to the ghost nodes. A node not reached within the horizon is evidence for, not proof of, $W=+\infty$. The solver uses no characteristic information. At the comparison points, $W$ is computed by Newton's method on the primary family, initialized from a dense sampling of the family, and accepted only on the retained part of an arc; by Proposition~\ref{prop:Wform} it is the first root of~\eqref{eq:Gscalar}.

\begin{figure}[!htbp]
\centering
\includegraphics[width=\textwidth]{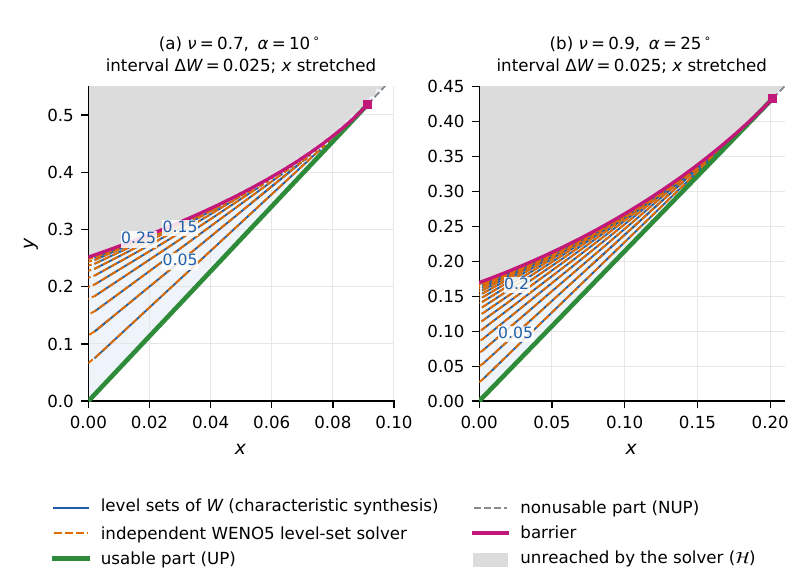}
\caption{Level sets of the escape time $W$ of Theorem~\ref{thm:slow} (solid blue) and of the independent WENO5 solution at $h=0.002$ (dashed orange) in the right half of the cone, at the contour interval $\Delta W=0.025$; labels give the time on selected curves. Gray: nodes not reached by the solver within the horizon, numerical evidence of the holding region. The horizontal axes are stretched. (a) $\nu=0.7$, $\alpha=10^\circ$; (b) $\nu=0.9$, $\alpha=25^\circ$.}
\label{fig:levelsets}
\end{figure}

For $(\nu,\alpha)=(0.9,25^\circ)$, $(0.7,10^\circ)$ and the equal-speed case $(1,30^\circ)$, the solver is run at $h=0.004$, $0.002$ and $0.001$ on boxes aligned with the coarsest grid, so that the three solutions are compared at the same physical points without interpolation. These are the $317$, $120$ and $510$ coarse nodes in the escape region with $x\ge0.012$ whose distance from the rays and the barrier exceeds $0.012$; the escape/holding classification is compared at all coarse nodes outside the same bands, where a node is misclassified when the solver reaches it and $W=+\infty$, or leaves it unreached and $W<+\infty$. The boxes are $[-0.4,0.4]$, $[-0.2,0.2]$ and $[-0.44,0.44]$ in $x$, all with $y\in[-0.096,0.704]$; the horizon is $1$, and by symmetry the comparison uses the right half.

\begin{table}[!h]
\centering\small
\caption{Level-set solver against the closed-form escape time $W$ at fixed comparison points: median relative error $|T-W|/\max(W,0.05)$, largest absolute error, signed mean error, and misclassified escape/holding nodes among those compared.}
\label{tab:validation}
\begin{tabular}{lccccc}
\toprule
$(\nu,\alpha)$ & $h$ & median rel. & max abs. & mean & misclassified\\
\midrule
$(0.9,25^\circ)$ & 0.004 & $7.9\times10^{-7}$ & $1.3\times10^{-6}$ & $1.1\times10^{-7}$ & 0/5996\\
 & 0.002 & $1.5\times10^{-7}$ & $4.2\times10^{-8}$ & $1.5\times10^{-8}$ & 0/5996\\
 & 0.001 & $4.4\times10^{-8}$ & $9.3\times10^{-9}$ & $4.2\times10^{-9}$ & 0/5996\\
$(0.7,10^\circ)$ & 0.004 & $2.1\times10^{-6}$ & $4.1\times10^{-6}$ & $1.5\times10^{-8}$ & 0/1675\\
 & 0.002 & $3.8\times10^{-7}$ & $5.5\times10^{-8}$ & $2.7\times10^{-8}$ & 0/1675\\
 & 0.001 & $1.0\times10^{-7}$ & $1.4\times10^{-8}$ & $7.0\times10^{-9}$ & 0/1675\\
$(1,30^\circ)$ & 0.004 & $5.7\times10^{-7}$ & $6.7\times10^{-6}$ & $4.2\times10^{-8}$ & 0/7650\\
 & 0.002 & $1.0\times10^{-7}$ & $3.3\times10^{-8}$ & $1.2\times10^{-8}$ & 0/7650\\
 & 0.001 & $2.8\times10^{-8}$ & $7.9\times10^{-9}$ & $3.3\times10^{-9}$ & 0/7650\\
\bottomrule
\end{tabular}
\end{table}

At each refinement both the median relative error and the largest error decrease (Table~\ref{tab:validation}): the largest error falls from $6.7\times10^{-6}$ at $h=0.004$ to $1.4\times10^{-8}$ at $h=0.001$, and no compared node is misclassified at any spacing. The decrease is not uniform, by factors of about $30$ to $200$ in the first halving and about $4$ in the second, so we do not assign an order of convergence. Errors this small are consistent with the smooth escape-time field on the primary chart; they do not imply comparable accuracy inside the excluded bands, where $W$ has a kink on the axis and jumps to $+\infty$ across the barrier. These comparisons concern $W$ away from those bands; they do not test the corner value $V_E(c)$, which is established analytically by Lemma~\ref{lem:corner} and Theorem~\ref{thm:value}. At the scale of Fig.~\ref{fig:levelsets} the two sets of level curves coincide, and the unreached set of the solver reproduces the holding region bounded by the barrier.

The closed-form arcs agree with a numerical integration of~\eqref{eq:retro} to $10^{-12}$, the Hamiltonian residuals and the subsolution test on $\De$ hold to rounding, and the identities used in the proofs of Theorem~\ref{thm:slow} and Lemma~\ref{lem:corner} were checked symbolically and on random states and controls; the scripts are available with the rest of the numerical material (see Data availability).

\section{Comparison with the conic game for a Dubins car}\label{sec:lewin}
The forward-only pursuer of \citet{lewin1979conic} has usable-ray length $r(\nu+\sin\alpha)$, compared with $r(\nu-\sin\alpha)$ here: reversal shortens it by $2r\sin\alpha$ and creates holding regions in convex cones. Its boundary radial and angular objectives conflict, producing a state-constraint arc; the reverse-turning control serves both objectives in the present synthesis. On the front axis, our pursuer backs away rather than choosing a turning side. This gives the holding region for $\nu\le1$.

Lewin and Olsder exclude knowledge of the opponent's current control, whereas our strategy values allow either player's strategy to react to it (Theorems~\ref{thm:slow} and~\ref{thm:value}). Off the corner $c$ this difference does not matter: by Remark~\ref{rem:pattern}, the escape time $W$ is also guaranteed by sampled state feedback, which uses only their information. Table~\ref{tab:lewin} lists the features of the two syntheses.

\begin{table}[!htbp]
\centering
\caption{Conic escape game for $0<\alpha<\pi/2$: Dubins pursuer (DC) of \citet{lewin1979conic} and Reeds--Shepp pursuer (RS, this paper); $\nu=1/w$ in their notation. Entries list features that occur in the respective syntheses for some parameter values, not at every pair $(\nu,\alpha)$.}
\label{tab:lewin}
\small
\renewcommand{\arraystretch}{1.15}\setlength{\tabcolsep}{5pt}
\begin{tabularx}{\linewidth}{@{}l>{\raggedright\arraybackslash\hsize=0.8\hsize}X>{\raggedright\arraybackslash\hsize=1.2\hsize}X@{}}
\toprule
 & DC & RS (this paper)\\
\midrule
Pursuer speed & $v_p=\Vp$ & $|v_p|\le\Vp$\\
Usable part & $\rho<r(\nu+\sin\alpha)$ & $\rho<r(\nu-\sin\alpha)$\\
Apex usable & all $\nu$ & $\nu>\sin\alpha$\\
Sliding arc on NUP & yes & no\\
Holding region & none reported & $K$ if $\nu\le\sin\alpha$; all but a dart if $\sin\alpha<\nu\le1$; none if $\nu>1$\\
Closed barrier & absent for $0<\alpha<\pi/2$ & $\sin\alpha<\nu\le1$\\
Evader's axis dispersal line & yes & yes ($\De$)\\
Pursuer's axis dispersal line & yes & no\\
Equivocal surface & yes & no\\
Transition surfaces & yes & no\\
Information pattern & no current-control knowledge & either player may react; values differ only at $c$\\
\bottomrule
\end{tabularx}
\end{table}

\section{Conclusion and future work}\label{sec:concl}
Reverse motion yields a global escape/holding classification and, for a slower or equally fast evader, a closed-form synthesis verified in feedback form. Sample-and-hold counter-controls identify its escape time with the value of the game in nonanticipative strategies at every state except the corner of the holding region; off the corner the same escape time is guaranteed, to arbitrary precision, by sampled state feedback that does not observe the opponent's control. At the corner the strategy values differ: the pursuer holds forever when its strategy reacts to the evader, whereas the evader's strategy value is $\tau_I$ when its strategy reacts to the pursuer. The constructed evader strategy forces a strict exit before $\tau_I$ against every pursuer control, with worst-case supremum $\tau_I$. An independent level-set solver agrees with the escape time on fixed comparison points.

Three questions remain. The evader's corner strategy uses the germ of the pursuer's control at $t=0$; for strategies with a positive delay it is not admissible, and the corner value for that information pattern is open. For $\nu>1$ the holding region is empty, but the escape time has a richer structure, with a universal line on the front axis and a barrier ending at a cusp, whose global verification requires further work. Extensions to half-planes, nonconvex cones and finite sensor range are separate problems.

\section*{Statements and Declarations}
\paragraph{Competing interests} The author declares no competing interests.
\paragraph{Author contributions} U.~Ruiz is the sole author of this work.
\paragraph{Funding} This research did not receive any specific grant from funding agencies in the public, commercial, or not-for-profit sectors.
\paragraph{Data availability} The solvers, raw numerical grids, run settings and scripts that reproduce every computation, table and figure are available from the author upon reasonable request.
\paragraph{Use of generative AI} During the preparation of this work the author used Claude (Anthropic) in order to edit the text, check derivations and numerical consistency, draft parts of the proofs, and write and run code for the numerical computations and figures. After using this tool, the author reviewed and edited the content as needed and takes full responsibility for the content of the published article.

\appendix
\section{Proof of Lemma~\ref{lem:relay}}\label{app:relay}
\begin{proof}[Proof of Lemma~\ref{lem:relay}]
\emph{Existence.} The right-hand side is measurable in $t$ and, for fixed $t$, upper semicontinuous in $\x$ with nonempty compact convex values bounded on compact sets, so solutions exist \citep{filippov1988}; since the velocity is bounded by $\nu+1+\|\x\|$, they do not blow up in finite time.

\emph{Uniqueness.} Let $\x_1$, $\x_2$ be two solutions with the same $\mathbf u$ and measurable selections $\varsigma_i(t)\in\operatorname{Sign}(x_i(t))$, while $y_1,y_2>0$. The common evader control and the term $(0,1)^\top$ cancel in the difference, $\dot\x_1-\dot\x_2=(-\varsigma_1y_1+\varsigma_2y_2,\ \varsigma_1x_1-\varsigma_2x_2)^\top$, and
\[
\tfrac12\tfrac{d}{dt}\|\x_1-\x_2\|^2=(\varsigma_2-\varsigma_1)(x_1y_2-x_2y_1).
\]
This vanishes when $x_1$ and $x_2$ have the same strict sign, since then $\varsigma_1=\varsigma_2$, or both vanish, since then $x_1y_2-x_2y_1=0$. If $x_1>0>x_2$, then $\varsigma_1=1$, $\varsigma_2=-1$, and it equals $2(x_2y_1-x_1y_2)<0$, because $y_1,y_2>0$. If $x_1>0=x_2$, then $\varsigma_1=1$, and it equals $(\varsigma_2-1)x_1y_2\le0$, because $y_2>0$. The remaining cases follow by exchanging the indices or by the reflection $x\mapsto-x$. Hence $\|\x_1-\x_2\|$ does not increase, and two solutions with the same initial state coincide.

\emph{Turn rate.} Where $x(t)\ne0$, $\varsigma=\sgn x$ and $\omega_p=-\sgn x$. An absolutely continuous function has zero derivative at almost every point of its zero set. Hence, for almost every $t$ with $x(t)=0$, $0=\dot x=u_x-\varsigma y$, so $\varsigma=u_x/y$, $\omega_p=-\varsigma=-u_x/y$, and $|\omega_p|\le1=|v_p|$. Residence on the axis (sliding) therefore requires $|u_x|\le y$; where the evader's lateral speed cannot be cancelled, the state crosses the axis, and on the null set of such crossing times no turn rate needs to be assigned. The turn rate so defined is measurable, $(v_p,\omega_p)$ lies in the bow tie, and $\x$ is the ordinary solution of~\eqref{eq:reduced} under $(\mathbf u,v_p,\omega_p)$.

\emph{Nonanticipativity.} If two evader controls coincide on $[0,t]$, uniqueness gives the same trajectory on $[0,t]$, and hence the same pursuer control there.
\end{proof}

\section{Proof of Theorem~\ref{thm:value}}\label{app:value}
The proof has four parts. Lemmas~\ref{lem:cont}--\ref{lem:mono} show that the relay keeps $\Hset$ invariant and that the finite trace $W_c$ defined below decreases at most at unit rate along relay solutions. Lemmas~\ref{lem:samp} and~\ref{lem:evs} pair sampled counter-controls with arbitrary nonanticipative strategies. The final argument matches the resulting bounds off the corner, and Lemma~\ref{lem:corner} treats the corner separately. Lemmas~\ref{lem:cont} and~\ref{lem:inv} use only Propositions~\ref{prop:closure} and~\ref{prop:Wform}, Lemma~\ref{lem:relay} and the geometry of the barrier; Lemma~\ref{lem:mono} combines Lemma~\ref{lem:inv} with the local estimate $\frac{d}{dt}W(\x(t))\ge-1$ from the pursuer part of the proof of Theorem~\ref{thm:slow}. Throughout, $\sin\alpha<\nu\le1$ and normalized units are used. As in~\eqref{eq:esc}, $\Eset=K\setminus\Hset$ is the escape region, which contains the $\UP$, the apex and $\De$; we write $\Hset_*=\Hset\setminus\Bbar$ for the holding states off the barrier, including the $\NUP$ rays; the ambient interior is $\operatorname{int}\Hset=\Hset_*\setminus\NUP$; and, for $P\in\Bbar_+$, we write $\tau_P\in[0,\tau_I]$ for its retro-time along the $\BUP$ characteristic, so that $\tau_{B_+}=0$ and $\tau_P=\tau_I$ when $P=c$, with mirror images on $\Bbar_-$. Let $W_c=W$ on $\Eset$, $W_c(P)=\tau_P$ on $\Bbar$ and $W_c=+\infty$ on $\Hset_*$, the lower semicontinuous envelope of $W$ relative to $K$, and let $W_\sharp=W$ on $K\setminus\{c\}$ and $W_\sharp(c)=\tau_I$. A \emph{relay solution} is the solution of Lemma~\ref{lem:relay} for some measurable evader control. The three functions differ only on the barrier:
\begin{center}
\small
\begin{tabular}{lccc}
\toprule
State & $W$ & $W_c$ & $W_\sharp$\\
\midrule
$\x\in\Eset$ & $W(\x)$ & $W(\x)$ & $W(\x)$\\
$P\in\Bbar\setminus\{c\}$ & $+\infty$ & $\tau_P$ & $+\infty$\\
$c$ & $+\infty$ & $\tau_I$ & $\tau_I$\\
$\x\in\Hset_*$ & $+\infty$ & $+\infty$ & $+\infty$\\
\bottomrule
\end{tabular}
\end{center}
$W_c$ assigns to each barrier point the limit of $W$ from the escape side (Lemma~\ref{lem:cont}(b)); it is lower semicontinuous and therefore suited to limits of trajectories. In particular $W_c(B_\pm)=0$, although $B_\pm$ are held states with $W(B_\pm)=+\infty$. $W_\sharp$ records the guarantee recovered by the sampled pursuer (Lemma~\ref{lem:samp}). Neither is a statement about the value of the game on $\Bbar$. The argument cannot be replaced by continuity of $W$ along relay solutions: $W$ jumps to $+\infty$ at the barrier, and the finite trace $W_c$ together with invariance is what controls trajectories that reach it. Figure~\ref{fig:corner} shows these sets for $\nu=0.9$, $\alpha=25^\circ$.

\begin{figure}[!htbp]
\centering
\includegraphics[width=\textwidth]{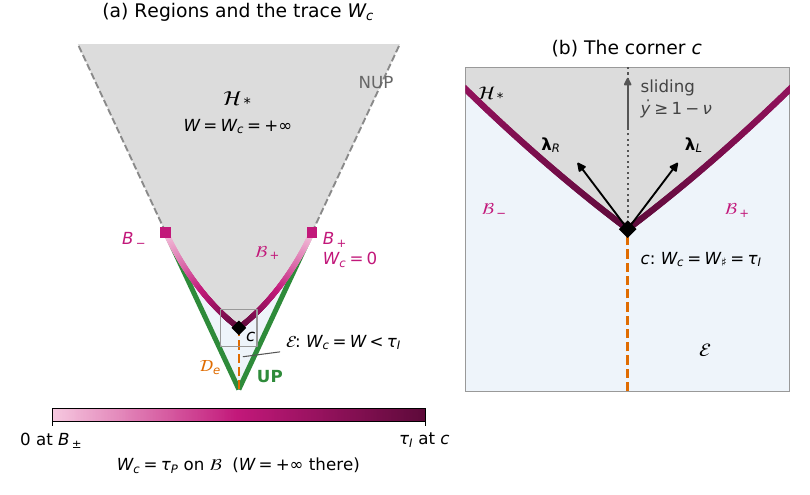}
\caption{Holding region for $\nu=0.9$, $\alpha=25^\circ$ (normalized units). (a) Escape region $\Eset$ (light), holding states off the barrier $\Hset_*$ (gray), $\UP$ (green), $\NUP$ (gray dashed), BUP points $B_\pm$ (squares), axis dispersal line $\De$ (orange dashed) and barrier branches $\Bbar_\pm$, colored by the escape-side trace $W_c=\tau_P$, which decreases from $\tau_I$ at the corner $c$ to $0$ at $B_\pm$; $W=+\infty$ on $\Bbar$. The box marks the enlargement in (b). (b) The corner $c$, with the inward normals $\boldsymbol\lambda_R$, $\boldsymbol\lambda_L$ of~\eqref{eq:corner} and the sliding direction of the relay on the axis inside $\Hset$.}
\label{fig:corner}
\end{figure}

\smallskip\noindent\emph{Sample-and-hold controls against strategies.} Let one player hold, on $[kh,(k+1)h)$, a constant control chosen from the state at $kh$. For every nonanticipative strategy of the other player there is exactly one pair of controls consistent with both rules: the held control on $[0,h)$ is fixed by $\x_0$; extending it arbitrarily beyond $h$ and applying the opponent's strategy gives the opponent's control on $[0,h)$, which by nonanticipativity does not depend on the extension, and hence the state at $h$; the construction proceeds by induction. No fixed point of two instantaneous strategies is needed. This is the usual pairing of strategies with delay with nonanticipative strategies \citep{bardi1997}.

\begin{lemma}[Boundary behaviour of $W$]\label{lem:cont}
(a) $W$ is continuous on $\Eset$ and locally Lipschitz relative to $K$: each point of $\Eset$ has a neighbourhood $U$ such that $U\cap K\subset\Eset$ and $W$ is Lipschitz on $U\cap K$. (b) If $P\in\Bbar_+$ and $\x\to P$ with $\x\in\Eset$, then $W(\x)\to\tau_P$; in particular $W\to0$ at $B_\pm$ and $W\to\tau_I$ at $c$.
\end{lemma}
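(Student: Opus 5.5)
The plan is to work in normalized units and to realize $W$ near each point of $\Eset$ through the right-branch root $W_R$ of~\eqref{eq:Gscalar}, its mirror $W_L$, and their minimum. Proposition~\ref{prop:Wform} gives the key fact: $W_R$ is the root of $G(\cdot;\x)$ with $G_\tau=\rho_B-\rho_f>0$, and it extends smoothly to a neighbourhood of the closed right half of $\Eset$, minus the barrier and $B_+$. On the open right half, $W=W_R$. Near the axis segment $\De$, $W=\min(W_L,W_R)$, and both branches are smooth across the axis, so $W$ is Lipschitz there. Near an interior point of $\UP_+$, the chart $\Pi(\rho_f,\tau)$ extends to small negative $\tau$ with nonzero Jacobian by Proposition~\ref{prop:prim}(i), so $W_R$ is smooth on a full neighbourhood and vanishes on the ray.

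For part (a), I will first take $U$ to be a small ball around the point and check that $U\cap K\subset\Eset$. This holds because the point's distance to $\Bbar\cup\NUP$ is positive, since $\Eset$ is relatively open in $K$ away from $B_\pm$. At the apex, which lies in $\Eset$, the chart degenerates and needs a separate argument. There I will use the bound $\rho_f\le\rho$ and a direct estimate. From~\eqref{eq:Gscalar}, $G(\tau;\x)=x\cos\theta-y\sin\theta+(\cos\theta-\cos\alpha+\nu\tau)$. The bracket is at least $(\nu-\sin\alpha-O(\tau))\tau$, so $G(\tau;\x)\ge -C\rho+c_0\tau$ for small $\tau$. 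This gives $W\le C'\rho$, which is Lipschitz at the apex. Combined with the smooth branches elsewhere on compact pieces, it yields local Lipschitz continuity, and continuity on $\Eset$ follows.

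For part (b), take $P\in\Bbar_+$ with retro-time $\tau_P$ and $\x_n\to P$ in $\Eset$. I will show that $W(\x_n)=\tau_n$ has $\tau_P$ as its only limit point. Pass to a subsequence with $\tau_n\to\bar\tau\in[0,\tau_I]$ and labels $\rho_{f,n}\to\bar\rho\in[0,\rho_B]$. By continuity of~\eqref{eq:primary} in $(\rho_f,\tau)$, $P=\Pi(\bar\rho,\bar\tau)$. Using the identity $d=d_B+(\rho_B-\rho_f)\sin\tau$ from the proof of Proposition~\ref{prop:Wform}(d), I will rule out $\bar\rho<\rho_B$. The primaries with $\rho_f<\rho_B$ stay strictly inside the barrier in the chart, and a limit with $\bar\rho<\rho_B$ would put $P$ in $\mathcal O\cup\De\cup\UP_+$, which $P$ is not, except the degenerate cases $P=B_+$, $P=c$. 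Hence $\bar\rho=\rho_B$, and $P$ lies on the $\BUP$ characteristic at parameter $\bar\tau$. Injectivity of $\tau\mapsto\Gamma$-parametrized barrier points on $[0,\tau_I]$ then gives $\bar\tau=\tau_P$; this injectivity holds because $(x',y')=-q(\sin\theta,\cos\theta)$ with $q>0$, so the barrier path is regular and not self-intersecting.

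The main obstacle is the two endpoints. At $B_+$ I must show $\tau_n\to0$ even when $\bar\rho$ might be $\rho_B$ with $\bar\tau>0$; the barrier geometry excludes this, since the barrier departs into the cone by~\eqref{eq:dpp} and has $d_B>0$ for $\tau>0$. At $c$, points near $c$ on the left half are handled by symmetry. The axis contact times satisfy $\tau_D(\rho_f)\to\tau_I$ as $\rho_f\to\rho_B$, which gives $\tau_n\to\tau_I$ along sequences approaching through either half or along $\De$.
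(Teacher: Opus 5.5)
\textbf{Part (a) has a gap at the apex.} Your estimate $G(\tau;\x)\ge -C\rho+c_0\tau$ gives $W(\x)\le C'\rho=C'\|\x-\mathbf 0\|$. That controls only $|W(\x)-W(\mathbf 0)|$. The lemma requires $|W(\x_1)-W(\x_2)|\le L\|\x_1-\x_2\|$ for all pairs in $U\cap K$. The smooth branches on compact pieces avoiding $\mathbf 0$ have Lipschitz constants that may, a priori, blow up as the pieces approach the apex; compare $x\sin(1/x)$, which satisfies a bound of your type without being Lipschitz near $0$. What is needed is a uniform bound on $\nabla W_R=(-\cos\theta,\sin\theta)/(\rho_B-\rho_f)$ near $\mathbf 0$.

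Your premise that the chart degenerates there is also wrong. We have $G(0;\mathbf 0)=0$ and $G_\tau(0;\mathbf 0)=\nu-\sin\alpha=\rho_B>0$, so the implicit function theorem makes $W_R$ smooth on a full neighbourhood of the apex, and $W(x,y)=W_R(|x|,y)$ there. This is the paper's argument, and it closes the gap at once.

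The inequality $\rho_f\le\rho$ you plan to use is false. Along a primary, $\rho^2=\rho_f^2-2\rho_f\tau\cos\alpha+O(\tau^2)$, so $\rho<\rho_f$ just inside the usable ray. If you want to keep your route, use instead $\rho_f=\psi+\sin\theta-\sin\alpha\le\rho+W\le(1+C')\rho$. This keeps $G_\tau=\rho_B-\rho_f$ bounded below near the apex, hence bounds $\nabla W_R$. Convexity of $U\cap K$ for a ball $U$, together with continuity across the axis, then gives the Lipschitz estimate.

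\textbf{Part (b) is correct but takes a different route from the paper.} Your argument runs as follows:
\begin{itemize}
\item You pass to limits of the chart labels $(\rho_{f,n},\tau_n)$ of Proposition~\ref{prop:Wform}(d).
\item You exclude $\bar\rho<\rho_B$, because the closure of that part of the parameter domain maps into $\mathcal O\cup\UP_+\cup\De\cup\{\mathbf 0\}$. That set misses $\Bbar_+$ entirely, so $B_+$ and $c$ are not exceptional cases.
\item You conclude by injectivity of the $\BUP$ path on $[0,\tau_I]$.
\end{itemize}
Justify that injectivity by monotonicity: $y'=-q\cos\theta<0$ with $q>0$ and $\theta<\pi/2$. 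Regularity alone does not exclude self-intersection. With this, $B_+$ and $c$ need no separate treatment. Your remark that $\tau_D(\rho_f)\to\tau_I$ is unnecessary, and it would not suffice on its own, since $\tau_n\le\tau_D(\rho_{f,n})$ is only an upper bound.

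The paper instead studies the single function $G_P=G(\cdot;P)$. It satisfies $G_P''+G_P=-q$ with a double root at $\tau_P$. Variation of parameters gives $G_P<0$ elsewhere on $[0,\tau_I+\delta]$. Uniform convergence $G(\cdot;\x)\to G_P$ then forces every root in that window, in particular $W(\x)$, to approach $\tau_P$.

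The trade-off: the paper's argument needs only that $W(\x)$ is some root of $G$ in the window, not the chart labels or the coverage argument. Yours reuses that already-established structure and avoids the new computation.
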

\begin{proof}
(a) By Proposition~\ref{prop:Wform} and the implicit function theorem, the right-branch value $W_R$ is smooth on an open set containing $\Eset\cap\{x\ge0\}$; this neighbourhood includes a strip across the axis and neighbourhoods of the apex and of the $\UP$. On $\De$, $W=\min(W_L,W_R)$ with $W_L(x,y)=W_R(-x,y)$. Since $\Hset$ is closed, $\Eset$ is open relative to $K$; near the $\UP$ the smooth extension of $W_R$ is used, and only its restriction to $K$ is claimed. Near the apex, $W(x,y)=W_R(|x|,y)$, where $W_R$ is smooth because $G_\tau(0;\mathbf 0)=\rho_B>0$, and $x\mapsto|x|$ is Lipschitz; hence $W$ is locally Lipschitz relative to $K$ there as well.
(b) Let $G_P(\tau)=G(\tau;P)$. Since $G_{\tau\tau}=-(x+1)\cos\theta+y\sin\theta$, $G_P''+G_P=-q$ with $q(\tau)=\cos\alpha-\nu\tau$, and $G_P(\tau_P)=G_P'(\tau_P)=0$ because $P$ lies on the limiting primary $\rho_f=\rho_B$. Hence
\begin{align*}
G_P(\tau)&=-\int_\tau^{\tau_P}\sin(\tau'-\tau)\,q(\tau')\,d\tau'\quad(\tau<\tau_P),\\
G_P(\tau)&=-\int_{\tau_P}^{\tau}\sin(\tau-\tau')\,q(\tau')\,d\tau'\quad(\tau>\tau_P).
\end{align*}
On $[0,\tau_I+\delta]$, $\delta>0$ small, $q>0$ because $\tau_I<\tau_c$ (Proposition~\ref{prop:closure}), and the arguments of the sines lie in $(0,\pi)$ because $\tau_I<\pi/2$; hence $G_P<0$ there except at $\tau_P$. Since $G(\tau;\x)-G_P(\tau)=(x-x_P)\cos\theta-(y-y_P)\sin\theta\to0$ uniformly, for $\x\in\Eset\cap\{x\ge0\}$ near $P$ all roots of $G(\cdot;\x)$ in $[0,\tau_I+\delta]$ are close to $\tau_P$: if $\x_n\to P$ and $\tau_n$ are such roots, every limit point $\bar\tau$ of $(\tau_n)$ satisfies $G_P(\bar\tau)=0$ by uniform convergence, so $\bar\tau=\tau_P$; for such $\x$, $W(\x)<\tau_I$ is the right-branch root (Proposition~\ref{prop:Wform}), so $W(\x)\to\tau_P$. If $P\in\Bbar_+\setminus\{c\}$, all points of $\Eset$ close to $P$ have $x>0$. At $P=c$, the reflected point $\x^\dagger=(|x|,y)$ satisfies $\x^\dagger\in\Eset$, $\x^\dagger\to c$ and $W(\x)=W(\x^\dagger)$ by symmetry, so $W(\x)\to\tau_I$ also for approaches from $x<0$. The statement for $\Bbar_-$ follows by reflection.
\end{proof}

\begin{lemma}[Invariance under the relay]\label{lem:inv}
Let $\x$ be a relay solution starting in $K$, and consider times at which it is still in $K$. (a) $\x(t)\notin\NUP$ for $t>0$. (b) $\Hset$ and $\Hset_*$ are positively invariant. (c) The set $\mathcal T$ of times with $\x(t)\in\Bbar$ is an interval. For almost every $t\in\mathcal T$, either $\x(t)=c$ and $\dot\x(t)=0$, or $\x(t)\ne c$, the evader plays $\mathbf u(t)=-\nu\hat{\boldsymbol\lambda}$, with $\hat{\boldsymbol\lambda}$ the unit normal of the barrier pointing into $\Hset$, and $\frac{d}{dt}\tau_{\x(t)}=-1$.
\end{lemma}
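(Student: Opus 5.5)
The plan is to build signed distance functions to the three kinds of boundary pieces of $\Hset$ and to show, along relay solutions, that each one cannot become negative from zero. On the $\NUP$ rays the function is the distance $d_\pm$ to $M_\pm$. On $\Bbar_+$ it is the barrier function $\sigma_+(\x)=-G_{\max}(\x)$, where $G_{\max}(\x)=\max_{\tau\in[0,\tau_I+\delta]}G(\tau;\x)$; by the concavity in Proposition~\ref{prop:Wform}(b) and the representation of $G_P$ in Lemma~\ref{lem:cont}(b), this vanishes exactly on the barrier. Near a regular barrier point, with the maximizer $\tau_P$ nondegenerate ($G_{\tau\tau}<0$), it is smooth with gradient proportional to the inward normal $(-\cos\theta,\sin\theta)$. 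Every estimate will be a one-sided Dini-derivative bound combined with a Gronwall-type comparison, so the argument works for absolutely continuous relay solutions and needs no smoothness of the trajectory.

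\emph{(a)} On $\NUP_+$, where $x>0$, the relay plays the saddle control $(-1,-1)$. By~\eqref{eq:saddle_ray}, the inward normal velocity is at least $\rho-\rho_B>0$ against every $\mathbf u$, and by continuity this margin persists in a neighbourhood. So a solution at a $\NUP$ point at time $t_0$ has $d_+>0$ on $(t_0,t_0+\epsilon)$. If the solution were on $\NUP$ at some $t>0$, then just before $t$ it would have had $d_+<0$, i.e.\ it would have been outside $K$, which is excluded. The ray $M_-$ is handled by reflection.

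\emph{(b)} Consider a regular barrier point $P\in\Bbar_+\setminus\{c\}$, which has $x>0$. The relay again plays $(-1,-1)$, and semipermeability gives $\nabla\sigma_+^\top\dot\x\ge0$, with equality only for $\mathbf u=-\nu\hat{\boldsymbol\lambda}$. A first-order bound of this kind does not exclude exit by itself. I would therefore use the smooth family $\sigma_+$ together with the fact that its Hamiltonian inequality $\min_{\mathbf u}\nabla\sigma^\top f\ge0$ holds on a whole neighbourhood on the $\Hset$ side. This follows because the level sets of $\sigma_+$ there are barriers for nearby BUP-type data, which are nonusable-shifted. That gives $\frac{d}{dt}\sigma_+\ge -L\,\sigma_+^-$, and Gronwall then yields $\sigma_+\ge0$.

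On the axis inside $\Hset$ away from $c$, sliding gives $\dot y\ge1-\nu\ge0$, which moves the state away from $c$ toward larger $y$, deeper into $\Hset$. If the state leaves the axis, it enters a side where (b) applies.

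The main obstacle is the corner $c$. I would use the paper's hint and take $\sigma_c=\min(\sigma_+,\sigma_-)$, with each branch continued slightly past the axis as the continued $\BUP$ characteristic. The clipped-control computation after Proposition~\ref{prop:corner} shows that the relay's equivalent control makes the smaller inward normal velocity nonnegative. Since the relay solution is unique (Lemma~\ref{lem:relay}), its a.e.\ velocity is exactly that equivalent control whenever $x=0$. Invariance of $\Hset_*$ then follows by the same argument with strict margins: an interior point stays at positive distance for a short time, and a barrier contact from $\Hset_*$ would contradict (b) applied in reverse.

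\emph{(c)} Invariance of $\Hset$ together with monotone $\tau_P$ gives the interval structure of $\mathcal T$. On $\mathcal T$, $\sigma_+\equiv0$, so $\frac{d}{dt}\sigma_+=0$ a.e. By the strict semipermeability inequality this forces $\mathbf u=-\nu\hat{\boldsymbol\lambda}$, and the motion then follows the characteristic, which gives $\frac{d}{dt}\tau=-1$. At $c$ the zero-velocity alternative comes from $\dot\x=0$ on the time set where $\x(t)=c$, using the fact that absolutely continuous functions have zero derivative a.e.\ on level sets.
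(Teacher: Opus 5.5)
Your outline matches the paper's proof. Part (a) uses the strict inward margin $\rho-\rho_B$ on the $\NUP$. Part (b) uses local test functions with one-sided derivative bounds, patched at a first crossing; the corner is handled by a symmetric function built from the $\BUP$ characteristic continued past $c$, together with the sliding velocity $\dot x=0$, $\dot y=u_y+1\ge1-\nu$. Part (c) uses the zero derivative on a level set and the uniqueness of the minimizing $\mathbf u$.

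What differs is the test function. The paper takes the signed distance $D$ to the $C^{1,1}$ curve $\partial\Hset$. For it only $m=\min_{\mathbf u}\nabla D^\top\dot\x=0$ on $\partial\Hset$ is known, so the paper needs the Lipschitz bound $m\ge-L|D|$ and Gr\"onwall. Your $\sigma_+=-\max_{[0,\tau_I+\delta]}G(\cdot;\x)$ is better than your justification suggests. By the envelope theorem, $\nabla\sigma_+=(-\cos\theta^*,\sin\theta^*)^\top$, and under $(v_p,\omega_p)=(-1,-1)$
\[
\min_{\|\mathbf u\|\le\nu}\nabla\sigma_+^\top\dot\x=-\nu+(x+1)\sin\theta^*+y\cos\theta^*=-G_\tau(\tau^*;\x),
\]
which vanishes at every interior maximizer, on both sides of the barrier. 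This line should replace the appeal to ``nonusable-shifted'' barriers. It makes $\sigma_+(\x(t))$ nondecreasing near the barrier, so no Gr\"onwall step is needed. Invariance of $\{\sigma_+>0\}$, and hence of $\Hset_*$, then follows from this forward monotonicity, not from ``(b) applied in reverse''.

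Near $c$, $\sigma_-(x,y)=\sigma_+(-x,y)$ and $\partial_x\sigma_+=-\cos\theta^*<0$ give $\min(\sigma_+,\sigma_-)=\sigma_+(|x|,y)$. So your $\sigma_c$ is the paper's reflected $D$ with $\sigma_+$ in place of $D_R$, and on each side the relay plays the control that makes that side's function nondecreasing. On the axis you need only $\frac{d}{dt}\sigma_c(\x(t))=\sin\theta^*\,\dot y\ge0$ a.e. The clipped computation after Proposition~\ref{prop:corner} is written at $c$ alone.

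The one real hole is at $B_\pm$, which your covering of $\partial\Hset$ misses. There the $\NUP$ margin $\rho-\rho_B$ vanishes and the maximizer of $G(\cdot;\x)$ sits at the endpoint $\tau=0$. So neither of your local arguments covers a neighbourhood of $B_+$, and a first exit from $\Hset$ at $B_+$ is not excluded. The paper closes this with the tangential $C^{1,1}$ junction~\eqref{eq:dpp}.

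In your framework the hole closes by itself. Near $B_+$ the maximizer is unique and continuous, because $G_{B_+}<0$ on $(0,\tau_I+\delta]$ and $G_{\tau\tau}<0$ near $0$; hence $\sigma_+$ is $C^1$. Where the maximizer is $0$, $\sigma_+=-\ell$ and the minimum above equals $-G_\tau(0;\x)\ge0$.

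The same computation shows that your claim that $\sigma_+$ vanishes exactly on the barrier is false. For $\x=\rho(\sin\alpha,\cos\alpha)^\top$ with $\rho\ge\rho_B$, $G(\tau;\x)=G_{B_+}(\tau)-(\rho-\rho_B)\sin\tau<0$ for $\tau>0$, so $\sigma_+=0$ on all of $\NUP_+$. This is harmless, since $\{\sigma_+\ge0\}$ still coincides locally with $\Hset$ and $\sigma_+$ can even replace $d_+$, but it has to be stated.

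Finally, in (c) the interval structure of $\mathcal T$ follows from the invariance of both $\Hset$ and $\Hset_*$: a state of $\Hset$ that is off the barrier never returns to it. It does not follow from the monotonicity of $\tau_P$, which is known only on $\mathcal T$.
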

\begin{proof}
(a) Near $\NUP_+$, $x>0$ and the relay plays $(-1,-1)$. With $\ell=x\cos\alpha-y\sin\alpha$,~\eqref{eq:saddle_ray} gives $\dot\ell\le\rho_B-\rho<0$ on $\NUP_+$, hence $\dot\ell<0$ on a neighbourhood, and $\ell$ cannot rise to $0$ there from inside $K$.

(b) Near each $P\in\partial\Hset$ we use a function $D$ with $\{D\ge0\}=\Hset$ and $\{D>0\}=\operatorname{int}\Hset$ locally and
\begin{equation}\label{eq:gron}
\tfrac{d}{dt}\,D(\x(t))\ge-L\,|D(\x(t))|\quad\text{for a.e. $t$ with $\x(t)$ near $P$.}
\end{equation}
If $P\ne c$, say $x_P>0$, the relay plays the constant control $\mathbf a_+=(-1,-1)$ near $P$, and $\partial\Hset$ is a $C^{1,1}$ curve there: barrier arcs are analytic, the $\NUP$ is straight, and the two join tangentially at $B_+$ by~\eqref{eq:dpp}. Let $D$ be the signed distance to $\partial\Hset$, positive in $\operatorname{int}\Hset$. The function $m(\x)=\min_{\|\mathbf u\|\le\nu}\nabla D(\x)^\top\dot\x(\x,\mathbf u,\mathbf a_+)$ is Lipschitz, vanishes on the barrier by semipermeability, and is at least $\rho-\rho_B\ge0$ on the $\NUP$; hence, if $\hat\x$ is a nearest point of $\partial\Hset$ and $L$ bounds the Lipschitz constant of $m$, $m(\x)\ge m(\hat\x)-L\|\x-\hat\x\|\ge-L|D(\x)|$. Since $\tfrac{d}{dt}D(\x(t))=\nabla D^\top\dot\x\ge m(\x(t))$, this gives~\eqref{eq:gron}. If $P=c$, let $D_R$ be the signed distance to the $\BUP$ characteristic of $B_+$ continued slightly past $c$, positive on the side of $\Hset$, and let $D=D_R$ on $\{x\ge0\}$ and $D(x,y)=D_R(-x,y)$ on $\{x\le0\}$. Where $x(t)\ne0$ the previous argument applies. On $\{t:x(t)=0\}$, $\dot x=0$ a.e., and the derivative of $D(\x(t))$ is $\partial_yD_R(0,y)\,\dot y\ge0$, because $\dot y=u_y+1\ge1-\nu$ and $\nabla D_R(c)=(-\cos\theta_I,\sin\theta_I)^\top$. Write $D(t)=D(\x(t))$ while the state remains in the neighbourhood of $P$. From~\eqref{eq:gron}, $D\ge0$ is preserved: if $D(t_0)\ge0$ and $D(t_1)<0$, let $t_2$ be the last zero of $D$ before $t_1$; on $(t_2,t_1]$, $D<0$ and $(D(t)e^{-Lt})'\ge0$, so $D(t_1)e^{-Lt_1}\ge D(t_2)e^{-Lt_2}=0$, a contradiction. Similarly, $D>0$ is preserved, since $D(t)\ge D(t_0)e^{-L(t-t_0)}>0$ for $t\ge t_0$ while $D>0$. These local estimates are patched together by a first-crossing argument: a first time at which the state leaves $\Hset$, or its ambient interior, would lie in one of the neighbourhoods above, where the estimates exclude it. A trajectory starting on $\NUP$ enters the ambient interior immediately, by the strict inward inequality in (a), and cannot return to $\NUP$. Since $\Hset_*=(\operatorname{int}\Hset)\cup\NUP$, this also proves invariance of $\Hset_*$, as required in (b).

(c) By (b), $\Hset$ is entered from $\Eset$ at most once, through $\Bbar$, and $\Hset_*$ is never left, so $\mathcal T$ is an interval. At almost every $t\in\mathcal T$, $\frac{d}{dt}D(\x(t))=0$. If $\x(t)\in\Bbar_+\setminus\{c\}$, this derivative is $\hat{\boldsymbol\lambda}^\top\dot\x\ge m=0$, with equality only for the unique minimizer $\mathbf u=-\nu\hat{\boldsymbol\lambda}$; then $\dot\x$ is the saddle velocity of the $\BUP$ characteristic, which is regular because $\tau_I<\tau_c$, and $\tau_{\x}$ decreases at unit rate. If $\x(t)=c$ on a set of positive measure, $\dot\x=0$ a.e. there; for $\nu=1$ this occurs with $\mathbf u=(0,-1)^\top$ (Remark~\ref{rem:nu1}).
\end{proof}

\begin{lemma}[$W_c$ along relay solutions]\label{lem:mono}
Along every relay solution starting in $K$, $t\mapsto W_c(\x(t))+t$ is nondecreasing while $\x(t)\in K$.
\end{lemma}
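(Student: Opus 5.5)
The plan is to split the relay trajectory at its entry time into $\Bbar$, using Lemma~\ref{lem:inv}. Let $t\mapsto\x(t)$ be a relay solution, still in $K$. By Lemma~\ref{lem:inv}(b), the state can enter $\Hset$ only once, through $\Bbar$, and never leaves it. By Lemma~\ref{lem:inv}(c), the set $\mathcal T$ of barrier times is an interval. Once the state enters $\Hset_*$ it stays there, and $W_c=+\infty$ from then on. So the time axis splits into three consecutive pieces: an initial interval $[0,t_1)$ in $\Eset$ (possibly empty), the interval $\mathcal T$ on $\Bbar$, and a final interval in $\Hset_*$. The only possible exception is a trajectory starting in $\Hset_*$, which stays there and needs no argument. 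I will show that $W_c(\x(t))+t$ is nondecreasing on each piece and does not drop at the junctions.

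\emph{First piece.} On $\Eset$, $W_c=W$, and $W$ is locally Lipschitz relative to $K$ by Lemma~\ref{lem:cont}(a). The pursuer part of the proof of Theorem~\ref{thm:slow} gives $\tfrac{d}{dt}W(\x(t))\ge-1$ a.e. Off the axis this holds because the relay plays the saddle control and $H^*=0$; during sliding on $\De$ it holds because $\dot W=W_y\dot y\ge0$. Hence $W(\x(t))+t$ is nondecreasing there. If the state leaves $K$ from $\Eset$, there is nothing further to show.

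\emph{Junction into the barrier.} Suppose the state reaches $P\in\Bbar$ at time $t_1$. Lemma~\ref{lem:cont}(b) gives $W(\x(t))\to\tau_P=W_c(P)$ as $t\uparrow t_1$. So the left limit of $W_c+t$ equals its value at $t_1$, and monotonicity persists.

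\emph{Barrier piece.} On $\mathcal T$, Lemma~\ref{lem:inv}(c) gives, for a.e.\ $t$, one of two cases. Either $\x(t)\ne c$ and $\tfrac{d}{dt}\tau_{\x(t)}=-1$, so $W_c+t$ is constant; or $\x(t)=c$ with $\dot\x=0$, so $W_c=\tau_I$ is constant and $W_c+t$ increases. I still need absolute continuity of $t\mapsto\tau_{\x(t)}$ on $\mathcal T$ to integrate this a.e.\ information. This follows because $\tau_P$ is a smooth (regular, since $\tau_I<\tau_c$) parametrization of each branch $\Bbar_\pm$, which are joined only at $c$ where both have $\tau=\tau_I$. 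Hence $P\mapsto\tau_P$ is Lipschitz along $\Bbar$.

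\emph{Junction out of the barrier, and the main obstacle.} At the end of $\mathcal T$ the state can only pass into $\Hset_*$, where $W_c=+\infty$; this is trivially an upward jump. It can also end at an exit from $K$, and this is the delicate case. The barrier meets $\partial K$ only at $B_\pm$, where $W_c=0$. But $B_\pm$ lie on the closure of the $\NUP$, and Lemma~\ref{lem:inv}(a) excludes NUP points after $t>0$. The strict inward inequality of~\eqref{eq:saddle_ray} degenerates to equality at $B_\pm$, so I expect this endpoint to be the main obstacle. I will handle it by noting that $W_c+t\ge t$ anyway; since $W_c\ge0$, no decrease below earlier values occurs once $W_c$ has reached $0$ with $\tau$ having decreased at unit rate. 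I will also check that the inequality is nonstrict, so that reaching $B_\pm$ is compatible with monotonicity. Combining the three pieces gives the lemma.
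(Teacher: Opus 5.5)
Your proposal is correct and follows the paper's proof essentially step by step: the same partition into $\Eset$, $\Bbar$, $\Hset_*$, the unit-rate bound from the pursuer part of Theorem~\ref{thm:slow} on $\Eset$, the no-drop junction via Lemma~\ref{lem:cont}(b), absolute continuity of $\tau_{\x(t)}$ from the regular Lipschitz barrier parametrization together with Lemma~\ref{lem:inv}(c), and the upward jump into $\Hset_*$. The exit at $B_\pm$ that you flag as the main obstacle cannot occur, because Lemma~\ref{lem:inv}(b) keeps the state in $\Hset\subset K$ after entry. Your fallback, that $W_c\ge0$ makes monotonicity automatic once $W_c=0$, is nevertheless valid.
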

\begin{proof}
The sets $\Eset$, $\Bbar$ and $\Hset_*$ partition $K$. We follow the solution through them. Within $\Eset$, $W_c=W$ decreases at most at unit rate; on entry from $\Eset$ into $\Bbar$ there is no downward jump, because $W\to W_c$ by Lemma~\ref{lem:cont}(b); within $\Bbar$, $\tau$ decreases at unit rate or is constant at $c$; entry into $\Hset_*$ is an upward jump to $+\infty$; and by Lemma~\ref{lem:inv} the solution never returns from $\Hset_*$ or from $\Hset$. In detail: if $\x(0)\in\Hset_*$, Lemma~\ref{lem:inv}(b) keeps $\x(t)$ in $\Hset_*$, where $W_c=+\infty$, and the claim holds. If $\x(0)\in\Bbar$, let $\mathcal T$ be the barrier interval of Lemma~\ref{lem:inv}(c), which starts at $0$. The map $P\mapsto\tau_P$ is Lipschitz on each barrier arc, which is regular because $\tau_I<\tau_c$, and the two arcs meet at $c$ with the common value $\tau_I$; hence $t\mapsto\tau_{\x(t)}$ is absolutely continuous on compact subintervals of $\mathcal T$. By Lemma~\ref{lem:inv}(c), its derivative is $-1$ a.e. where $\x(t)\neq c$ and $0$ a.e. where $\x(t)=c$, so $W_c(\x(t))+t$ is nondecreasing on $\mathcal T$; after $\mathcal T$ the solution is in $\Hset_*$, where $W_c=+\infty$. If $\x(0)\in\Eset$, the solution stays in $\Eset$ until its first entry $t_1$ into $\Hset$, if any, which occurs through $\Bbar$ (Lemma~\ref{lem:inv}(b)). On $[0,t_1)$, $W\circ\x$ is locally absolutely continuous (Lemma~\ref{lem:cont}(a)) and, by the pursuer part of the proof of Theorem~\ref{thm:slow}, $\frac{d}{dt}W(\x)\ge-1$ a.e. If $t_1$ is finite, $\x(t_1)\in\Bbar$ and $W(\x(t))\to W_c(\x(t_1))$ as $t\uparrow t_1$ by Lemma~\ref{lem:cont}(b), and from $t_1$ on the barrier case applies. If the solution leaves $K$ without entering $\Hset$, the first part covers it.
\end{proof}

\begin{lemma}[Sampled relay]\label{lem:samp}
Let $\x_0\in K$ and $T<W_\sharp(\x_0)$, and let the pursuer play $v_p=-1$, $\omega_p=-\sgn x(kh)$ on $[kh,(k+1)h)$, with $\sgn0:=1$. There is $h_0>0$ such that, for every $h\in(0,h_0)$ and every measurable evader control, $\x(t)\in K$ on $[0,T]$.
\end{lemma}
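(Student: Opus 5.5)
We argue by contradiction combined with compactness, comparing sampled relay trajectories with the exact relay solutions of Lemma~\ref{lem:relay} and then applying Lemma~\ref{lem:mono}. Assume no $h_0$ exists. Then there are $h_n\downarrow0$, measurable evader controls $\mathbf u_n$, and times $t_n\le T$ at which the sampled trajectory $\x_n$ has left $K$. Let $s_n\le T$ be the first strict exit time, so $\x_n(t)\in K$ on $[0,s_n]$ and $\x_n(s_n)\in\partial K$. The velocities are uniformly bounded on $[0,T]$, so after passing to a subsequence $\x_n\to\bar\x$ uniformly, $s_n\to\bar s\le T$, and $\mathbf u_n\rightharpoonup\bar{\mathbf u}$ weakly in $L^2$.

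\emph{Passage to the limit.} The reduced dynamics~\eqref{eq:reduced} are affine in $\mathbf u$ and contain no product of the two players' controls, so the weak limit can be taken separately in each term. The pursuer's turn term has the form $\omega_n(t)(y_n,-x_n)$ with $\omega_n(t)=-\sgn x_n(kh_n)$. Since $\|\x_n(t)-\x_n(kh_n)\|=O(h_n)$, $\omega_n(t)\in-\operatorname{Sign}$ applied to points within $O(h_n)$ of $x_n(t)$. Weak limits of $\omega_n$ therefore take values a.e.\ in $-\operatorname{Sign}(\bar x(t))$, by upper semicontinuity and convexity of the Filippov set. Hence $\bar\x$ solves the Filippov inclusion of Lemma~\ref{lem:relay} with control $\bar{\mathbf u}$. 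Because $y$ stays positive, $\dot y\ge 1-\nu-O(h_n)$, and uniqueness in Lemma~\ref{lem:relay} shows that $\bar\x$ is the relay solution. The apex needs separate handling. If $\x_0=\mathbf 0$, then $W_\sharp=0$ and no $T$ is admissible. Otherwise $y_0>0$, and $y$ stays bounded away from zero on $[0,T]$ uniformly in $n$.

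\emph{Contradiction via $W_c$.} We have $\bar\x(\bar s)\in\partial K$ and $\bar\x(t)\in K$ on $[0,\bar s]$, so by Lemma~\ref{lem:mono}, $W_c(\bar\x(\bar s))\ge W_c(\x_0)-\bar s$. We split into cases on where $\bar\x(\bar s)$ lies.

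If $\x_0\neq c$, then $W_c(\x_0)\ge W_\sharp(\x_0)$ except on $\Bbar\setminus\{c\}$. In that exceptional case $\x_0\in\Hset$, and Lemma~\ref{lem:inv} keeps the limit in $\Hset$ away from the $\UP$. The same holds if $\x_0\in\Hset_*$.

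If $\bar\x(\bar s)$ lies on the $\UP$ or at the apex, $W_c=0$, which forces $\bar s\ge W_c(\x_0)>T$, a contradiction.

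If $\bar\x(\bar s)$ lies on the $\NUP$ or at $B_\pm$, we need a finite-$n$ argument. Near the $\NUP$ we have $x>0$, and the sampled relay plays exactly $(-1,-1)$ once $x_n(kh_n)>0$. The strict inequality $\dot\ell\le\rho_B-\rho<0$ of Lemma~\ref{lem:inv}(a) then persists under an $O(h_n)$ perturbation, so $\x_n$ cannot cross there. At $B_\pm$ the margin degenerates. There we use instead that $W_c(B_\pm)=0$ together with Lemma~\ref{lem:mono}, which again forces $\bar s\ge W_c(\x_0)>T$.

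For $\x_0=c$, $W_c(c)=\tau_I=W_\sharp(c)$, and the same chain applies.

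\emph{Main obstacle.} The delicate step is making the contradiction uniform. The limit trajectory only touches $\partial K$, while each $\x_n$ exits strictly. Where the exit point of the limit has $W_c>0$ (the $\NUP$ or barrier side), a quantitative margin for the sampled trajectories is needed, and this margin vanishes at $B_\pm$. I would first try to establish the margin through the $O(h_n)$ perturbed version of the signed-distance estimate~\eqref{eq:gron}. Failing that, I would use the positive inward normal velocity on the closed $\NUP$ bounded away from $B_\pm$, and handle neighbourhoods of $B_\pm$ through the bound $W_c\to0$ of Lemma~\ref{lem:cont}(b).
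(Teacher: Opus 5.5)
Your compactness set-up, the identification of the limit as the relay solution of Lemma~\ref{lem:relay}, the finite-$n$ exclusion of contact with the $\NUP$, and the use of Lemma~\ref{lem:mono} all match the paper's proof. These settle every initial state off $\Bbar\setminus\{c\}$, including $c$.

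\textbf{The gap.} It lies in the case the paper treats separately, $\x_0\in\Bbar_+\setminus\{c\}$. There $W_\sharp(\x_0)=+\infty$, so $T$ may be arbitrarily large, but $W_c(\x_0)=\tau_{\x_0}$ is finite. For contact at $B_\pm$ you argue that $W_c(B_\pm)=0$ and Lemma~\ref{lem:mono} force $\bar s\ge W_c(\x_0)>T$. For barrier states this gives only $\bar s\ge\tau_{\x_0}$, which is no contradiction once $T\ge\tau_{\x_0}$.

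This is not slack in the estimate; the limit can really do this. By Lemma~\ref{lem:inv}(c), the exact relay solution with $\mathbf u=-\nu\hat{\boldsymbol\lambda}$ slides along $\Bbar_+$ and touches $\partial K$ at $B_+$ at exactly $t=\tau_{\x_0}$. So no argument based on $W_c$ can exclude this contact. Your remark that Lemma~\ref{lem:inv} keeps the limit in $\Hset$ removes only contact with the $\UP$. Your fallback through $W_c\to0$ (Lemma~\ref{lem:cont}(b)) fails for the same reason. An $O(h_n)$-perturbed version of~\eqref{eq:gron} would also fall short: a defect of size $O(h_n)$ gives only $D\ge-O(h_n)$, which still permits the small strict exits you need to rule out.

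\textbf{The missing idea.} In this case there is no perturbation at all.
\begin{enumerate}
\item Since $\Hset_*$ is invariant and the contact point is not on the $\NUP$, the limit stays on $\Bbar$ on $[0,\bar s]$.
\item By Lemma~\ref{lem:inv}(c) it runs along the arc of $\Bbar_+$ from $\x_0$ to $B_+$. Because $\x_0\ne c$, this arc lies in $\{x\ge x_*\}$ for some $x_*>0$.
\item By uniform convergence, for large $n$ the sampled trajectories stay in $\{x>x_*/2\}$ on $[0,\bar t\,]$ for some $\bar t>\bar s$. For this, take the convergence on a slightly longer interval such as $[0,T+1]$, extending the dynamics past the exit, so that it still makes sense when $\bar s=T$.
\item On that interval every sample has $x>0$, so the sampled relay is exactly the constant control $(-1,-1)$. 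Hence $\x_n$ is itself a relay solution there.
\item The exact invariance of $\Hset$ in Lemma~\ref{lem:inv}(b) then keeps $\x_n$ in $\Hset\subset K$ on $[0,\bar t\,]$, contradicting $s_n\to\bar s$.
\end{enumerate}
This is precisely where $\x_0\ne c$ is used. At $c$ no lower bound on $|x|$ is available, so the argument yields only horizons $T<\tau_I$.
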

\begin{proof}
\emph{Contradiction sequence.} Otherwise there are $h_n\downarrow0$, evader controls $\mathbf u_n$ and solutions $\x_n$ of the sampled closed loop that leave $K$ at times $t_n\le T$, with $\x_n(t_n)\in\partial K$. Extend~\eqref{eq:reduced} to the whole plane. Since $\|\dot\x\|\le\nu+1+\|\x\|$, Gr\"onwall's inequality makes the $\x_n$ uniformly bounded and equi-Lipschitz on $[0,T+1]$. Along a subsequence, $\x_n\to\bar\x$ uniformly on $[0,T+1]$, $t_n\to t^*\le T$, and $\mathbf u_n$, $\omega_n$ converge weakly-$*$ in $L^\infty$ to $\bar{\mathbf u}$, $\bar\omega$, with $\|\bar{\mathbf u}\|\le\nu$ and $|\bar\omega|\le1$ because these sets are closed and convex.

\emph{Identification of the limit.} With $v_p=-1$, the dynamics $\dot x=u_x+\omega y$, $\dot y=u_y+1-\omega x$ contain no product of $\mathbf u$ and $\omega$, so their integral form passes to the limit; for instance,
\[
\int_0^t(\omega_ny_n-\bar\omega\bar y)\,ds=\int_0^t\omega_n(y_n-\bar y)\,ds+\int_0^t(\omega_n-\bar\omega)\bar y\,ds\to0,
\]
the first term by uniform convergence and $|\omega_n|\le1$, the second by weak-$*$ convergence. If $\bar x>0$ on an interval $[a,b]$, then $x_n>0$ on $[a,b]$ for large $n$, in particular at the sampling instants in $[a,b]$, so $\omega_n=-1$ on $[a+h_n,b]$; hence $\bar\omega=-1$ almost everywhere where $\bar x>0$, and $\bar\omega=1$ where $\bar x<0$. Together with $|\bar\omega|\le1$ where $\bar x=0$, this shows that $\bar\x$ solves the Filippov inclusion of Lemma~\ref{lem:relay} with control $\bar{\mathbf u}$, so it is the relay solution for $\bar{\mathbf u}$. Moreover $\bar\x\in K$ on $[0,t^*]$ and $\bar\x(t^*)\in\partial K$.

\emph{Boundary contact.} Uniform convergence gives only contact of $\bar\x$ with $\partial K$, not a strict exit. Contact with the $\NUP$ is excluded as the limit of the exits: if $\bar\x(t^*)\in\NUP_+$, then near it $x>0$, the sampled control is $(-1,-1)$ for large $n$ on a time neighbourhood of $t^*$, and $\dot\ell\le\rho_B-\rho+o(1)<0$ there by~\eqref{eq:saddle_ray}, so $\ell(\x_n)\le0$ is preserved and $\x_n$ cannot leave $K$ near $\bar\x(t^*)$, contradicting $t_n\to t^*$. Hence $\bar\x(t^*)\in\UP\cup\{\mathbf 0,B_+,B_-\}$, where $W_c=0$: on the $\UP$ and at the apex because $W=0$ there, and at $B_\pm$ because $\tau_{B_\pm}=0$.

\emph{General initial states.} By Lemma~\ref{lem:mono}, $W_c(\bar\x(t^*))+t^*\ge W_c(\x_0)$, so $t^*\ge W_c(\x_0)$. If $\x_0\notin\Bbar\setminus\{c\}$, then $W_c(\x_0)=W_\sharp(\x_0)>T\ge t^*$, a contradiction.

\emph{Barrier states off the corner.} Let $\x_0\in\Bbar_+\setminus\{c\}$. Here $W_c(\x_0)=\tau_{\x_0}$ is finite while $W_\sharp(\x_0)=+\infty$, so Lemma~\ref{lem:mono} is not enough. Since $\Hset$ and $\Hset_*$ are positively invariant under the relay (Lemma~\ref{lem:inv}(b)) and $\bar\x(t^*)\notin\Hset_*$, $\bar\x$ stays on $\Bbar$ on $[0,t^*]$, and $\bar\x(t^*)\in\Hset\cap\partial K$ is $B_+$. By Lemma~\ref{lem:inv}(c) it moves along $\Bbar_+$ with $\tau$ decreasing at unit rate, so on $[0,t^*]$ it stays on the arc of $\Bbar_+$ from $\x_0$ to $B_+$, which lies in $\{x\ge x_*\}$ for some $x_*>0$ because $\x_0\ne c$. For large $n$, $\x_n$ stays in $\{x>x_*/2\}$ on $[0,\bar t\,]$ for some $\bar t>t^*$. There every sample has $x>0$, so the sampled control is exactly the constant $(-1,-1)$, and the signed-distance estimate of Lemma~\ref{lem:inv}(b) for $P\ne c$ keeps $\x_n$ in $\Hset\subset K$ on $[0,\bar t\,]$, contradicting $t_n\to t^*<\bar t$.

\emph{The corner.} At $c$ the lower bound on $|x|$ is unavailable, since the sampled relay may switch sign along trajectories near the axis, and the argument gives only horizons $T<W_\sharp(c)=\tau_I$. Lemma~\ref{lem:corner} shows that this bound is sharp.
\end{proof}

\begin{lemma}[Sampled evader feedback]\label{lem:evs}
Let $\x_0\in\Eset$ and $\varepsilon>0$, and let the evader hold on $[kh,(k+1)h)$ the feedback of Theorem~\ref{thm:slow} evaluated at $\x(kh)$, using the right branch if $x(kh)\ge0$. There is $h_0>0$ such that, for $h<h_0$ and every measurable pursuer control, $t_f\le W(\x_0)+\varepsilon$.
\end{lemma}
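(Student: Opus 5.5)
The plan is a discrete Lyapunov estimate for $W$ along the sampled closed loop, modelled on the continuous-time proof of Theorem~\ref{thm:slow}(ii). The errors are localized on a compact part of the primary chart, and on each sampling interval they are controlled by $O(h^2)$.

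\emph{Step 1: a compact region.} Let $\bar\rho=\tfrac12(\rho_f(\x_0)+\rho_B)$. Consider the compact set
\[
\mathcal C=\{\x\in\overline{\Eset}:\rho_f(\x)\le\bar\rho,\ W(\x)\le W(\x_0)+\varepsilon\},
\]
where $\rho_f$ is evaluated on the branch in use. Extend $\mathcal C$ slightly across the axis, the $\UP$ and the apex through the smooth branch charts of Proposition~\ref{prop:Wform} and Lemma~\ref{lem:cont}(a). On a neighbourhood $N$ of $\mathcal C$:
\begin{itemize}
\item $W_R$, $\rho_f$ and the branch feedback $\mathbf u_R(\x)=\nu(\cos\theta,-\sin\theta)$ are $C^1$ with bounded derivatives;
\item $g\ge\tfrac12(\rho_B-\bar\rho)>0$ holds.
\end{itemize}
Let $L$ bound these Lipschitz constants on $N$, and let $M$ bound $\|\dot\x\|$ there.

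\emph{Step 2: one sampling interval.} Fix an interval $[kh,(k+1)h)$ with $\x(kh)\in\mathcal C$ and $x(kh)\ge0$. On it the state moves by at most $Mh$. Hence the held control differs from $\mathbf u_R(\x(t))$ by at most $LMh$.

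Write the dynamics as the closed-loop dynamics plus the error term $\mathbf u_R(\x(kh))-\mathbf u_R(\x(t))$. Then \eqref{eq:Wdot} and \eqref{eq:rhodot} give, for every admissible pursuer control,
\[
\dot W\le-1+Ch,\qquad \dot\rho_f\le Ch \quad\text{a.e.}
\]
with $C$ depending only on $N$.

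There is one subtlety at the axis. Here $x(t)$ may be slightly negative while the right branch is still held. This is harmless because $W_R$ is smooth across the axis and $W=\min(W_L,W_R)\le W_R$. So I track $W_R$ of the branch currently held, and switch the tracked branch at a sampling instant only when the sign of $x(kh)$ changes. At such an instant the two branches agree on the axis up to $O(h)$.

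I also use the estimate $\dot x\ge\nu C-y>0$ from step (2) of Theorem~\ref{thm:slow}, which persists in a strip. Together with the $O(h)$ control error, it shows that after the first sample near the axis the state moves into $x>0$ and stays there. Hence such a switch happens at most once.

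\emph{Step 3: the induction.} Induct over the samples to get
\[
\rho_f(\x(t))\le\rho_f(\x_0)+Ct h,\qquad W(\x(t))\le W(\x_0)-t(1-Ch)+O(h).
\]
Choose $h_0$ with $C(W(\x_0)+1)h_0<\tfrac12(\bar\rho-\rho_f(\x_0))$ and with the $O(h)$ and $Ch$ terms below $\varepsilon/2$. Then the state stays in $\mathcal C$, and $W$ reaches $0$ before $W(\x_0)+\varepsilon/2$. Staying in $\mathcal C$ keeps $g$ bounded below, so the barrier is never approached.

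\emph{Step 4: strict exit (the main obstacle).} When $W\to0$ the state is near the $\UP$ or the apex. There I use the uniform outward margin:
\begin{itemize}
\item on the $\UP$ it is $\rho\dot\varphi\ge\rho_B-\rho>0$, from \eqref{eq:saddle_ray} with the characteristic control;
\item at the apex it is $\dot\ell\ge\nu-\sin\alpha$, from Section~\ref{sec:target}.
\end{itemize}
Near the contact point this margin is bounded below on a neighbourhood, uniformly in the pursuer control. Near the apex, however, $\rho_B-\rho$ is not small relative to the margin, and the margin degenerates only as $\rho\to\rho_B$. Since $\rho_f\le\bar\rho<\rho_B$, the contact occurs at $\rho\le\bar\rho$, so the margin is bounded below there.

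The held control differs from the margin-attaining control only by $O(h)$. So for $h$ small the outward normal velocity stays positive after the contact. A strict exit then follows within an additional time $O(h)\le\varepsilon/2$, which gives $t_f\le W(\x_0)+\varepsilon$.

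I expect the hardest part to be the uniformity near the apex. There the chart is the fold $W_R(|x|,y)$, and both branches can be held alternately on samples near the axis. My first approach is the one-switch argument of Step 2. If that fails near $\rho\to0$, I would handle a small ball around the apex separately using the constant control $\nu(\cos\alpha,-\sin\alpha)$, which exits immediately.
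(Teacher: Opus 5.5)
Your proposal is correct and follows essentially the paper's route. You use a compact piece of the primary chart with $\rho_f\le\bar\rho<\rho_B$, the $O(h)$ held-feedback perturbation of \eqref{eq:Wdot}--\eqref{eq:rhodot}, localization through the $\rho_f$ bound and the axis margin $\dot x\ge\nu C-y$ before integrating, and strict exit from the uniform outward margin on the \UP{} and at the apex. The only difference is bookkeeping: the paper reduces to $x_0\ge0$ by symmetry, and the axis margin then shows that no branch switch ever occurs, so your $O(h)$ switching cost and your separate treatment of a ball around the apex are not needed.
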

\begin{proof}
By symmetry let $x_0\ge0$.

\emph{Chart.} Choose $\rho_f(\x_0)<\bar\rho<\rho_B$, and let $\mathcal C$ be the image of the compact characteristic strip
\[
0\le\rho_f\le\bar\rho,\qquad 0\le\tau\le\tau_D(\rho_f),
\]
where $\tau_D$ is the first axis-contact time and $\tau_D(0)=0$. Thus $\mathcal C$ includes its usable-ray and axis edges but is separated from the barrier. By Proposition~\ref{prop:Wform}, $G_\tau=\rho_B-\rho_f\ge\rho_B-\bar\rho>0$ on $\mathcal C$, so the implicit function theorem extends $W_R$, $\rho_f$ and the right-branch control $\mathbf k_E$ smoothly to a fixed neighbourhood of $\mathcal C$, across the axis and the usable ray. Choose a compact neighbourhood $Q$ of $\mathcal C$ within this chart, and let $L$ be a Lipschitz constant of $\mathbf k_E$ and $M$ a bound on $\|\dot\x\|$ on $Q$, uniform in the controls.

\emph{Held-feedback error.} While the state stays in $Q$, $\|\x(t)-\x(kh)\|\le Mh$ on each sampling period, so $\|\mathbf k_E(\x(kh))-\mathbf k_E(\x(t))\|\le LMh$. The held control therefore differs from the feedback at the current state by at most $LMh$, and~\eqref{eq:Wdot}--\eqref{eq:rhodot}, which hold for the exact feedback and every pursuer control on the part of $Q$ with $x\ge0$ in $K$, give $\dot W_R\le-1+C_0h$ and $\dot\rho_f\le C_0h$ for a constant $C_0$, uniformly over pursuer controls.

\emph{Localization.} Let $T_0=W(\x_0)+1$, and choose $h$ so that $C_0hT_0<\bar\rho-\rho_f(\x_0)$ and $W(\x_0)/(1-C_0h)<T_0$. At the axis, the margin $\dot x\ge\nu C-y>0$ of Theorem~\ref{thm:slow} is uniformly positive on $\mathcal C$, and by continuity it remains positive, held error included, in a strip $\{0\le x\le\varepsilon_1\}\cap Q$ for small $h$; hence the state enters $\{x>0\}$ immediately if it starts on the axis and cannot cross back. Up to time $T_0$, $\rho_f$ increases by less than $\bar\rho-\rho_f(\x_0)$, so the state cannot reach the edge $\rho_f=\bar\rho$. A first-exit argument now applies: the only remaining edges of $\mathcal C$ are the usable ray and the apex, so the state stays in $\mathcal C$ until it reaches them. This localization is established before the decrease estimate is integrated.

\emph{Escape time.} Since $W_R\ge0$ in $\mathcal C$ and $\dot W_R\le-1+C_0h$, the state reaches the usable ray or the apex by $W(\x_0)/(1-C_0h)<T_0$. There the outward normal velocity has a positive margin, uniform for $\rho_f\le\bar\rho$ and for small held errors, so strict exit follows immediately, also when $W(\x_0)=0$. Choosing $h$ with $W(\x_0)/(1-C_0h)\le W(\x_0)+\varepsilon$ gives $t_f\le W(\x_0)+\varepsilon$. The threshold $h_0$ depends on $\x_0$ and $\varepsilon$ through $\bar\rho$; the proof gives no estimate uniform as $\x_0$ approaches the barrier.
\end{proof}

\begin{lemma}[Evader at the corner]\label{lem:corner}
There is a nonanticipative evader strategy $A^c\in\mathcal S_E$ such that, from $c$, $J(A^c[\mathbf a],\mathbf a)<\tau_I$ for every measurable pursuer control $\mathbf a$. Hence $V_E(c)\le\tau_I$.
\end{lemma}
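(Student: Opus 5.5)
The plan is to make the evader choose, at time $0$, which barrier branch to attack, using the pursuer's control on arbitrarily short initial intervals. The evader then runs the open-loop control of that branch. Any deviation of the pursuer from the control protecting that branch pushes the state strictly into $\Eset$, with a certified time saving, and from there the evader finishes with the feedback of Theorem~\ref{thm:slow}(ii). The scalar function $G$ of~\eqref{eq:Gscalar} serves as the certificate throughout. Normalized units are used.

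\emph{Step 1 (branch choice and nonanticipativity).} Call the germ of $\mathbf a$ \emph{right-protecting} if $\mathbf a=(-1,-1)$ a.e.\ on some interval $(0,s_0)$. In that case the evader attacks the left branch $\Bbar_-$; otherwise it attacks the right branch $\Bbar_+$. For each $s>0$, two controls agreeing a.e.\ on $[0,s]$ have the same germ. Hence the branch choice, and the open-loop control built on it, agree on $[0,s]$, so $A^c$ is nonanticipative. Note that $A^c$ reads the control on $(0,s)$ for every $s>0$; a positive delay would not allow this.

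\emph{Step 2 (open-loop barrier control and the certificate).} Suppose the right branch is attacked, so that $\mathbf a\neq(-1,-1)$ on a set of positive measure in every $(0,s)$. The evader plays $\mathbf u(t)=\nu(\cos\theta,-\sin\theta)$ with $\theta=\alpha+\tau_I-t$. This is the $\BUP$ characteristic control of $\Bbar_+$ run forward from $c$. Consider $\Psi(t)=G(\tau_I-t;\x(t))$. Against $(-1,-1)$ the state follows $\Bbar_+$, so $\Psi\equiv0$. Writing $v_p=-1+\Delta v$ and $\omega_p=-1+\Delta\omega$, I expect the same computation as in~\eqref{eq:Wdot}, with $G_\tau=0$ on the barrier, to give
\[
\dot\Psi=\Delta v\,S+\Delta\omega\,(xS+yC).
\]
Near $\Bbar_+$, $xS+yC$ is close to $\psi=\nu-S>0$ (Proposition~\ref{prop:closure}), so $\dot\Psi\ge c_0(\Delta v+\Delta\omega)\ge0$ while the state is near the barrier.

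Since the deviation has positive measure on every initial interval, $\Psi(t_1)>0$ at some small $t_1>0$. By Proposition~\ref{prop:Wform}, $G(0;\x(t_1))<0$, so $G(\cdot;\x(t_1))$ has its first root before $\tau_I-t_1$. With $G_\tau>0$ at that root, this places $\x(t_1)$ in $\Eset$ with $W(\x(t_1))<\tau_I-t_1$. The left-branch case is the mirror image. There the pursuer's germ $(-1,-1)$ is, in mirrored coordinates, a deviation with $\Delta\omega=2$ on all of $(0,s_0)$, so $\Psi>0$ immediately.

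\emph{Step 3 (finishing).} At $t_1$ the evader switches to the feedback $A^*$ of Corollary~\ref{cor:sandwich} from $\x(t_1)$. By Theorem~\ref{thm:slow}(ii) it exits strictly by $t_1+W(\x(t_1))<\tau_I$. The switching time $t_1$ must be chosen nonanticipatively, for instance as the first time at which $\Psi$ reaches a small positive threshold. This first-hitting time depends only on the past, and the concatenated map stays nonanticipative as in the proof of Corollary~\ref{cor:sandwich}.

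\emph{Main obstacle.} The delicate step is Step 2 near $c$ itself. The state starts at the corner, and for the right-branch attack the deviation may push it to $x<0$. There the right-branch chart and the sign of $xS+yC$ must still be valid, and $\x(t_1)$ must land in $\Eset$ and not in $\Hset_*$. I would handle this by using the smooth extension of $W_R$ across the axis (Lemma~\ref{lem:cont}(a)) and the symmetric representation $W(x,y)=W_R(|x|,y)$. A root of $G$ with positive slope then certifies escape on either side. If needed, I would also pick $t_1$ small enough that the state stays in a neighbourhood of $c$ where $xS+yC\ge\psi/2$.
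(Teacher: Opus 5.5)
\textbf{Gap in Step~3 and in the landing in $\Eset$.} Your germ-based branch selection and your certificate $\Psi(t)=G(\tau_I-t;\x(t))$ match the paper's proof. The identity $\dot\Psi=\Delta v\,S+\Delta\omega\,(xS+yC)$ holds exactly along every trajectory under the branch control, not only on the barrier. The trouble is Step~3, which the paper does not need and which, as you set it up, leaves cases uncovered.

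With a fixed threshold $\delta>0$ for the switch, consider a pursuer playing $v=-1$, $\omega=-1+\epsilon$ throughout. This control is not right-protecting, so you attack the right branch. But $\Psi\le C\epsilon\tau_I<\delta$ on $[0,\tau_I]$, the switch never happens, and your argument says nothing about the exit. A threshold rule also conflicts with your wish to keep $t_1$ small, since the pursuer controls how fast $\Psi$ grows, and your bound $\dot\Psi\ge c_0(\Delta v+\Delta\omega)$ is only claimed near the barrier.

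Next, the inference ``$G(\cdot;\x(t_1))$ has a root with $G_\tau>0$ before $\tau_I-t_1$, hence $\x(t_1)\in\Eset$'' is the converse of Proposition~\ref{prop:Wform}, which that proposition does not state. You would have to place $(\rho_f,\tau^*)$ in the set $\Sigma$ of part~(d) of its proof. That needs $\rho_f>0$ and $x(t_1)>0$, and the latter is exactly what you leave open.

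\textbf{The obstacle near $c$ is not real.} Since $\theta(t)\le\theta_I$ and $y\le y_I+Mt$, you have $\dot x=\nu C+\omega y\ge\nu\cos\theta_I-y_I-Mt$ for every admissible control. Here $\nu\cos\theta_I-y_I=\sin\theta_I(1-\nu\sin\theta_I)/\cos\theta_I>0$. So the state moves into $x>0$ and does not go to $x<0$. Your proposed fix through $W_R(|x|,y)$ would not have helped anyway: it concerns $G(\cdot;(|x|,y))$, not $G(\cdot;\x)$.

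\textbf{Missing idea: the certificate closes the proof by itself.} No switch to the feedback of Theorem~\ref{thm:slow} is needed. The paper argues globally. While $x\ge0$ one has $\Psi\ge0$, and at an axis point $\Psi\ge0$ forces $y\le y_D(\tau_I-t)\le y_I$. Hence $\dot x\ge d_0>0$ there, so $x>0$ and $\psi=xS+yC>0$ for all $t>0$ until exit. Now suppose the state stayed in $K$ on $[0,\tau_I]$. Then
\[
\Psi(\tau_I)=\int_0^{\tau_I}\bigl[(v+1)S+(\omega+1)\psi\bigr]\,dt>0
\]
unless $\mathbf a=(-1,-1)$ a.e. On the other hand, $\Psi(\tau_I)=G(0;\x(\tau_I))=x\cos\alpha-y\sin\alpha\le0$ on $K$. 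Hence the exit is strictly before $\tau_I$. The germ selection only has to exclude the single exceptional control of the attacked branch.

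Your route can be repaired. Use a fixed small switching time $t_1$, which works because on the attacked branch $\Psi(t)>0$ for every $t>0$. Then add the chart argument above, using $x(t_1)>0$ and $\rho_f$ close to $\rho_B$. This is more work than the paper's direct integration and buys nothing extra.
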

\begin{proof}
For $0\le t\le\tau_I$ let $\theta(t)=\theta_I-t$, $S=\sin\theta(t)$, $C=\cos\theta(t)$, and let $\mathbf u_R(t)=\nu(C,-S)^\top$ and $\mathbf u_L(t)=\nu(-C,-S)^\top$. These are the evader's controls along the $\BUP$ characteristics traversed forward from $c$. Let $\mathbf a_R=(-1,-1)$ and $\mathbf a_L=(-1,1)$. The idea is the following: each escape side has exactly one exceptional pursuer response, the control that saturates that branch, and the evader chooses from the pursuer's initial control a side whose exceptional response is excluded.

\emph{Certificate.} Under $\mathbf u_R$ and a pursuer control $\mathbf a=(v,\omega)$, let $L_R(t)=G(\tau_I-t;\x(t))=(x+1)C-yS-\cos\alpha+\nu(\tau_I-t)$, with $G$ of~\eqref{eq:Gscalar}. Then $L_R(0)=0$, because $c$ lies on the $\BUP$ characteristic at retro-time $\tau_I$, and $\dot C=S$, $\dot S=-C$ and~\eqref{eq:reduced} give
\[
\dot L_R=(v+1)S+(\omega+1)\psi,\qquad \psi=xS+yC .
\]
Both coefficients are nonnegative on the bow tie, and $S>0$ because $\alpha\le\theta(t)\le\theta_I<\pi/2$.

\emph{The state stays in $x>0$.} Let $y_D(\tau)=(\cos(\alpha+\tau)-\cos\alpha+\nu\tau)/\sin(\alpha+\tau)$ be the axis-contact height of Proposition~\ref{prop:Wform}(c), so that $G(\tau;(0,y_D(\tau)))=0$. Then $y_D'(\tau)=\Gamma(\alpha+\tau)/\sin^2(\alpha+\tau)>0$ on $[0,\tau_I)$, with $\Gamma$ of~\eqref{eq:FI}, so $0\le y_D\le y_D(\tau_I)=y_I$. While the state is in $K$ with $x\ge0$, $\psi\ge0$, so $L_R\ge0$. At a point of the axis, $L_R\ge0$ gives $y\le y_D(\tau_I-t)\le y_I$, hence
\[
\dot x=\nu C+\omega y\ge\nu\cos\theta_I-y_I=\frac{\sin\theta_I(1-\nu\sin\theta_I)}{\cos\theta_I}=:d_0>0
\]
for every admissible control. This bound has a continuous right side, so $\dot x\ge d_0/2$ a.e. near any axis contact in $K$: the state enters $x>0$ immediately and cannot return to the axis while it is in $K$. Thus $\psi>0$ for $t>0$ until exit.

\emph{Exit.} Suppose the state stays in $K$ on $[0,\tau_I]$. Then $L_R(\tau_I)=\int_0^{\tau_I}\bigl[(v+1)S+(\omega+1)\psi\bigr]\,dt$, which is positive unless $\mathbf a=\mathbf a_R$ almost everywhere on $(0,\tau_I)$. But $L_R(\tau_I)=x\cos\alpha-y\sin\alpha\le0$ in $K$. Hence, if $\mathbf a\ne\mathbf a_R$ on a set of positive measure in $(0,\tau_I)$, the state leaves $K$ strictly before $\tau_I$. Symmetrically, under $\mathbf u_L$ the certificate $L_L=(1-x)C-yS-\cos\alpha+\nu(\tau_I-t)$ satisfies $\dot L_L=(v+1)S+(1-\omega)(yC-xS)$, and the state leaves $K$ before $\tau_I$ whenever $\mathbf a\ne\mathbf a_L$ on a set of positive measure in $(0,\tau_I)$. If $\mathbf a=\mathbf a_R$ a.e., $L_R\equiv0$ and the state follows $\Bbar_+$ to $B_+$, which is not a strict exit.

\emph{Selection.} Let $\mathcal G_R$ be the set of pursuer controls with $\mathbf a=\mathbf a_R$ a.e. on some interval $(0,\varepsilon)$, $\varepsilon>0$, and let $A^c[\mathbf a]=\mathbf u_L$ on $(0,\tau_I)$ if $\mathbf a\in\mathcal G_R$, $A^c[\mathbf a]=\mathbf u_R$ there otherwise, and $A^c[\mathbf a]=0$ after $\tau_I$. If $\mathbf a$ and $\mathbf b$ coincide a.e. on $[0,s]$ with $s>0$, then $\mathbf a\in\mathcal G_R$ if and only if $\mathbf b\in\mathcal G_R$, so the outputs coincide on $[0,s]$ and $A^c$ is nonanticipative; $\mathcal G_R=\bigcup_n\{\mathbf a:\int_0^{\tau_I/n}(|v+1|+|\omega+1|)\,dt=0\}$ is a Borel set. If $\mathbf a\in\mathcal G_R$, then $\mathbf a=\mathbf a_R\ne\mathbf a_L$ on an initial interval and $\mathbf u_L$ forces exit before $\tau_I$. If $\mathbf a\notin\mathcal G_R$, then $\mathbf a$ is not a.e.\ equal to $\mathbf a_R$ on $(0,\tau_I)$ and $\mathbf u_R$ forces exit before $\tau_I$.
\end{proof}

\begin{proof}[Proof of Theorem~\ref{thm:value}]
By Corollary~\ref{cor:sandwich}, $V_E\le W\le V_P$, and $V_P=W=+\infty$ on $\Hset$.

\emph{$V_P\le W$ on $\Eset$.} Let $\x_0\in\Eset$, $\varepsilon>0$ and $B\in\mathcal S_P$. Pair $B$ with the sampled evader feedback of Lemma~\ref{lem:evs} interval by interval, as described at the beginning of this appendix: on $[kh,(k+1)h)$ the evader's control is the constant fixed by $\x(kh)$, and nonanticipativity of $B$ fixes the pursuer's control on that interval and hence $\x((k+1)h)$. This produces an evader control $\mathbf u$ with $J(\mathbf u,B[\mathbf u])\le W(\x_0)+\varepsilon$ by Lemma~\ref{lem:evs}. Thus
\[
\forall B\in\mathcal S_P\ \exists\mathbf u:\ J(\mathbf u,B[\mathbf u])\le W(\x_0)+\varepsilon\quad\Longrightarrow\quad V_P(\x_0)\le W(\x_0)+\varepsilon,
\]
and $\varepsilon\downarrow0$ gives $V_P\le W$.

\emph{$V_E\ge W_\sharp$ on $K$.} Let $\x_0\in K$, $T<W_\sharp(\x_0)$ and $A\in\mathcal S_E$. Pairing $A$ with the sampled relay of Lemma~\ref{lem:samp} in the same way produces a pursuer control $\mathbf a$ with $J(A[\mathbf a],\mathbf a)\ge T$. Thus
\[
\forall A\in\mathcal S_E\ \exists\mathbf a:\ J(A[\mathbf a],\mathbf a)\ge T\quad\Longrightarrow\quad V_E(\x_0)\ge T,
\]
and letting $T\uparrow W_\sharp(\x_0)$ gives $V_E\ge W_\sharp$; where $W_\sharp(\x_0)=0$ this is immediate, the payoff being nonnegative. When $W_\sharp(\x_0)=+\infty$, this provides a pursuer control for every finite horizon $T$, not necessarily one control that works for all horizons at once; that suffices for $V_E(\x_0)=+\infty$. Since $W_\sharp=W$ off $c$ and $W_\sharp(c)=\tau_I$, this gives $V_E=W$ on $K\setminus\{c\}$ and $V_E(c)\ge\tau_I$. Lemma~\ref{lem:corner} gives $V_E(c)\le\tau_I$, and $V_P(c)=+\infty$ because $c\in\Hset$. Lemma~\ref{lem:corner} gives a strict bound for each pursuer control, but by $V_E(c)\ge\tau_I$ the supremum of these payoffs over pursuer controls is $\tau_I$.
\end{proof}

\end{document}